\documentclass{article}
\usepackage[margin=1in]{geometry}

\usepackage{microtype}
\usepackage{graphicx}
\usepackage{authblk}
\usepackage{subcaption}
\usepackage{natbib}
\usepackage{booktabs} 
\usepackage{etoc}
\usepackage{titletoc}
\usepackage{parskip}
\usepackage{hyperref}

\usepackage{amsmath}
\usepackage{amssymb}
\usepackage{mathtools}
\usepackage{amsthm}

\usepackage[capitalize,noabbrev]{cleveref}

\theoremstyle{plain}
\newtheorem{theorem}{Theorem}[section]

\newtheorem{lemma}[theorem]{Lemma}
\newtheorem{corollary}[theorem]{Corollary}
\theoremstyle{definition}

\newtheorem{assumption}[theorem]{Assumption}
\theoremstyle{remark}

\usepackage[textsize=tiny]{todonotes}
\usepackage{tcolorbox}
\usepackage{hyperref}
\usepackage{url}
\usepackage{amsthm}
\usepackage{subcaption}
\usepackage{cancel}

\def\piref{\pi_{\mathrm{ref}}}
\def\piout{\pi_{\mathrm{out}}}

\definecolor{mgh}{rgb}{0.1,0.5,0.1}

\newcommand{\pidata}{\pi_{\mathrm{data}}}
\definecolor{lu}{rgb}{0.8,0.1,0.1}

\definecolor{gr}{rgb}{0.4,0.7,0.2}

\definecolor{st}{rgb}{0.1,0.5,0.1}

\definecolor{greenp}{rgb}{0.0, 0.51, 0.5}

\newcommand{\argmin}{\mathop{\rm argmin}}
\newcommand{\argmax}{\mathop{\rm argmax}}

\usepackage[capitalize,noabbrev]{cleveref}
\usepackage{wrapfig}
\newcommand{\abs}[1]{\left |{#1}\right |}
\newcommand{\bc}[1]{\left\{{#1}\right\}}
\newcommand{\br}[1]{\left({#1}\right)}
\newcommand{\bs}[1]{\left[{#1}\right]}

\usepackage{dsfont}

\usepackage{algorithm}
\usepackage{algorithmic}
\usepackage{multirow}
\newcounter{protocol}
\makeatletter

\usepackage{amssymb}
\usepackage{amsfonts}
\usepackage{graphics}
\usepackage{graphicx}
\usepackage{setspace}

\newcommand{\sigmoid}{\sigma}

\newcommand{\innerprod}[2]{\left\langle{#1},{#2}\right\rangle}

\newcommand{\X}{\mathcal{X}}
\newcommand{\A}{\mathcal{A}}

\newcommand{\initial}{\nu_0}

\usepackage[utf8]{inputenc} 
\usepackage[T1]{fontenc}    
\usepackage{url}            
\usepackage{booktabs}       
\usepackage{amsfonts}       
\usepackage{nicefrac}       
\usepackage{microtype}      
\usepackage[table]{xcolor}         

\usepackage{mathtools}
\makeatletter
\newcommand{\mytag}[2]{%
  \text{#1}%
  \@bsphack
  \protected@write\@auxout{}%
         {\string\newlabel{#2}{{#1}{\thepage}}}%
  \@esphack
}
\makeatother

\usepackage{amsthm}
\usepackage{thm-restate}
\title{\textbf{Direct Preference Optimization with Rating Information:\\ Practical Algorithms and Provable Gains}}

\author[1]{Luca Viano}
\author[2]{Ruida Zhou}
\author[3]{Yifan Sun}
\author[2]{Mahdi Namazifar}
\author[1]{Volkan Cevher}
\author[4]{Shoham Sabach}
\author[5]{Mohammad Ghavamzadeh}

\affil[1]{EPFL}
\affil[2]{Amazon AGI}
\affil[3]{University of Illinois at Urbana-Champaign }
\affil[4]{Cornell University}
\affil[5]{Qualcomm AI Research}

\begin{document}

\maketitle
\begin{abstract}
The class of direct preference optimization (DPO) algorithms has emerged as a promising approach for solving the alignment problem in foundation models. These algorithms work with very limited feedback in the form of pairwise preferences and fine-tune models to align with these preferences without explicitly learning a reward model. While the form of feedback used by these algorithms makes the data collection process easy and relatively more accurate, its ambiguity in terms of the quality of responses could have negative implications. 
For example, it is not clear if a decrease (increase) in the likelihood of preferred (dispreferred) responses during the execution of these algorithms could be interpreted as a positive or negative phenomenon. In this paper, we study how to design algorithms that can leverage additional information in the form of \emph{rating gap}, which informs the learner how much the chosen response is better than the rejected one. 
We present new algorithms that can achieve faster statistical rates than DPO in presence of accurate rating gap information. Moreover, we theoretically prove and empirically show that the performance of our algorithms is {\em robust} to inaccuracy in rating gaps. Finally, we demonstrate the solid performance of our methods in comparison to a number of DPO-style algorithms across a wide range of LLMs and evaluation benchmarks. 
%
\end{abstract}


\section{Introduction}
\label{sec:intro} 

Learning from preference data (ranking information) has become a popular paradigm for solving the alignment problem in large language models (LLMs)~\citep{christiano2017deep,rafailov2023direct,zhu2023principled}. Although data collection is easier and the annotators' feedback is less noisy in this setting, compared to the rating style feedback, the amount of information that can be extracted from such data is very limited. 
Indeed, given a prompt, a preferred and a dispreferred response, several scenarios are likely: both responses are of high (low) quality or one is good and the other one is poor. However, the contrastive learning approach used by Direct Preference Optimization (DPO) style algorithms, such as DPO~\citep{rafailov2023direct} and Identity-mapping Preference Optimization (IPO)~\citep{azar2024general}, is only justified for the latter case. In the cases that the two responses have high (low) quality, it makes sense to imitate (forget) both of them. However, it is not possible for a fine-tuning algorithm to identify which of these cases it is facing from the preference style information. The lack of information about the individual or relative quality of responses can create ambiguity for these algorithms and have negative implications on their performance. For example, it has been shown that it can incentivize in-sample probability reduction in DPO-style algorithms and create bias towards policies that favor out-of-distribution responses (e.g.,~\citealt{nemotron,ppo_vs_dpo,smaug,fisch2024robust,xiao2024caldpo,AIPO,sppo,APO}), a phenomenon referred to as {\em likelihood displacement}~\citep{razin2025unintentional}. 

\begin{figure*}[t] 
\centering
\begin{tabular}{ccc}
\subfloat{%
    \includegraphics[width=0.45\linewidth]{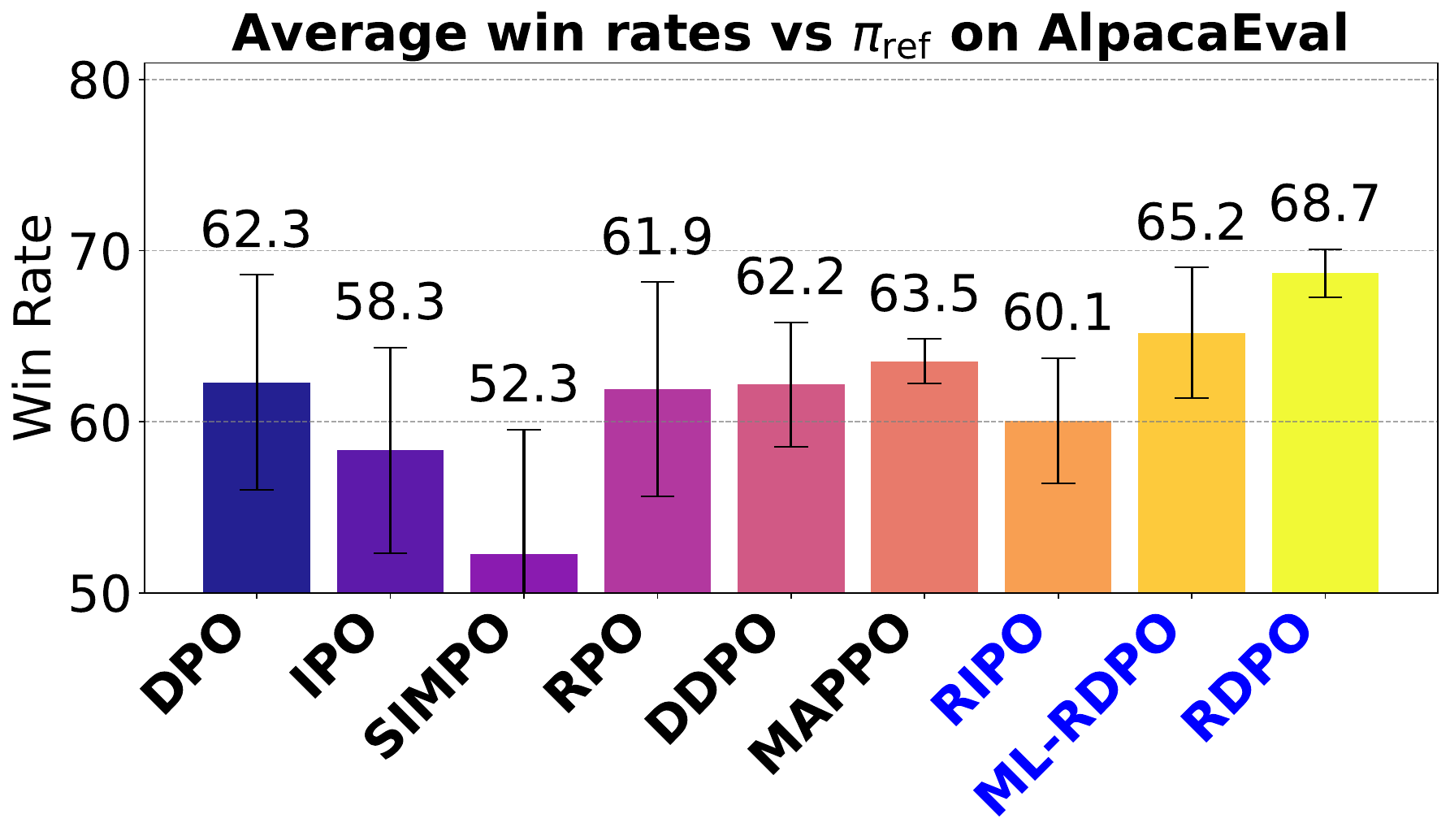}
     } &
\subfloat{%
    \includegraphics[width=0.45\linewidth]{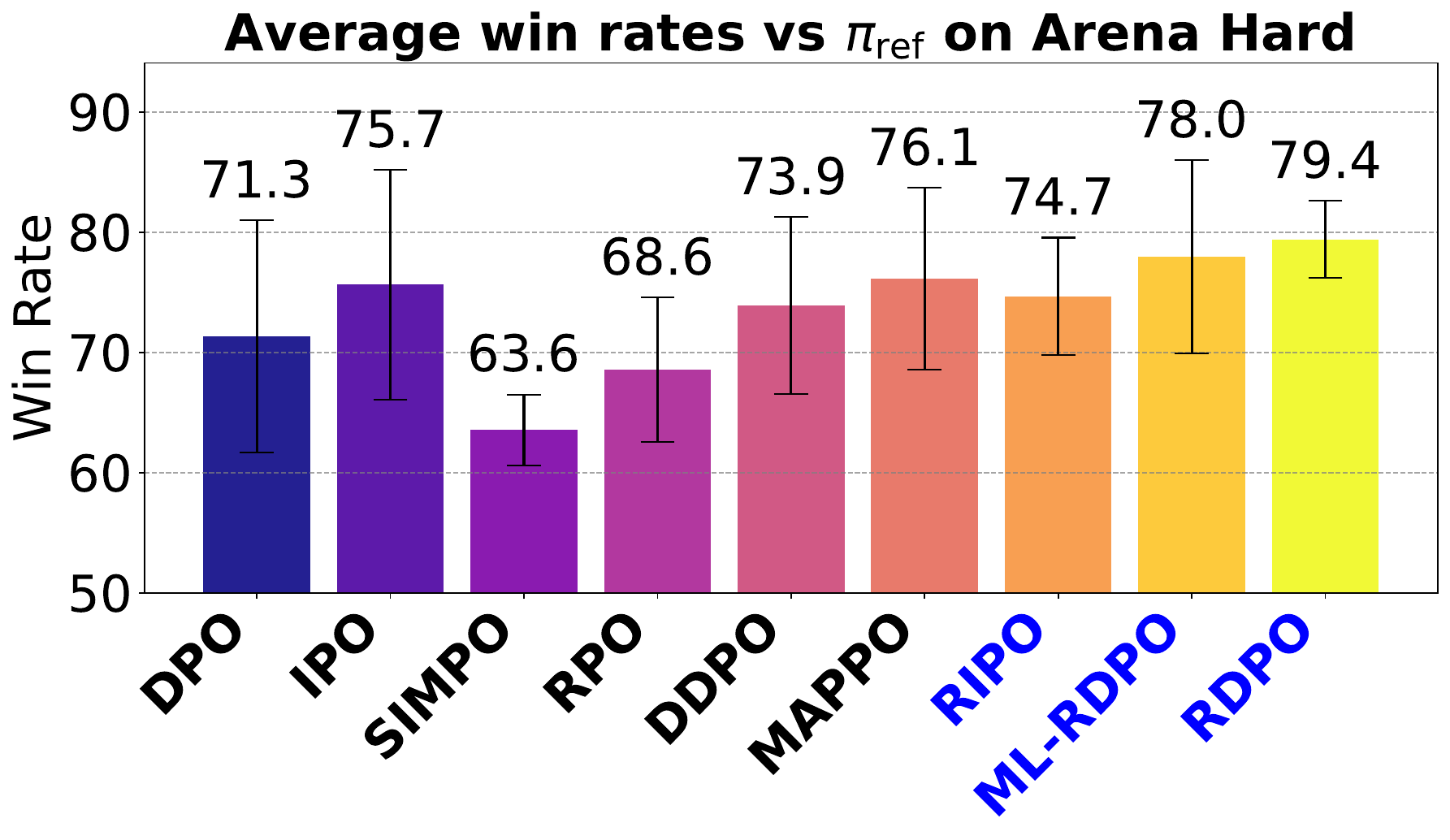}
     } 
\end{tabular}
\caption{Win rates averaged for $\piref \in \bc{\texttt{Llama3.1-8B}, \texttt{Zephyr-7B}, \texttt{Mistral-7B}}$. Our methods are labelled in \textcolor{blue}{blue}. }
\label{fig:intro_exp}
\end{figure*}

In this paper, we study a setting in which, in addition to preference/ranking feedback, the training dataset contains the relative rating of the responses, which we refer to as {\em rating gap}. We propose three algorithms derived from two different approaches that leverage this additional information in an efficient and principled way. We derive our first two algorithms, {\em Rating DPO} (RDPO) and {\em Rating IPO} (RIPO), by changing the RLHF objective to maximize a linear combination of the ranking and rating information. Our third algorithm, {\em Maximum-Likelihood-based Rating DPO} (ML-RDPO), is derived using the maximum-likelihood (ML) principle by making certain assumptions on the joint distribution of the ranking and rating information. In ML-RDPO, we consider a linear combination of the likelihood objectives rather than a linear combination of the two sources of information (ranking and rating), as is done in the derivation of RDPO and RIPO. We provide theoretical analysis for our algorithms showing that they can achieve faster statistical rates than DPO in the presence of accurate rating gap information, and are robust in case this information is noisy. We empirically evaluate our algorithms and compare them to a number of DPO-style methods across a wide range of LLMs and evaluation benchmarks. The results validate our theoretical findings and demonstrate the solid performance of our algorithms.

{\bf Theoretical Contributions.} We prove two desirable properties, \emph{acceleration} and \emph{robustness}, for our algorithms. Indeed, if the ratings are consistent with the latent reward model used by the annotator to rank the responses (under the Bradley-Terry formulation), RDPO and ML-RDPO can accelerate learning compared to DPO. That is, they are able to learn a well-performing policy faster than an algorithm that only leverages ranking information (e.g.,~DPO or IPO). Specifically, leveraging the rating information, our algorithms guarantee an \emph{exponential improvement} over learning from rankings only, by avoiding the exponential dependence in the maximum reward range which affects all known ranking-only algorithms. Moreover, our algorithms are robust in the sense that the above guarantees hold even under a certain degree of corruptions or noise in the ratings.

{\bf Empirical Contributions.} In addition to these nice theoretical properties, RDPO and ML-RDPO perform well in practical alignment problems. \Cref{fig:intro_exp} provides a summary of our experiments (more experiments in Section~\ref{sec:experiments} and in \Cref{app:experiments}), 
showing the average win-rate of the checkpoints fine-tuned by different algorithms against the reference model $\piref$ in AlpacaEval~\citep{alpaca_eval} and ArenaHard~\citep{li2024crowdsourced} benchmarks. The means and standard deviations are computed over $3$ independent experiments where $\piref$ is one of the following models: Zephyr-7B,\footnote{\url{https://huggingface.co/alignment-handbook/zephyr-7b-sft-full}} Llama-3.1-8B,\footnote{\url{https://huggingface.co/allenai/Llama-3.1-Tulu-3-8B-SFT}} and Mistral-7B.\footnote{\url{https://huggingface.co/HuggingFaceH4/mistral-7b-sft-beta}} It can be seen that RDPO and ML-RDPO are the best and second best performing methods, respectively. Moreover, since RDPO has the smallest variance, it can be considered as the most consistently well-performing method across the choices of $\piref$ we experimented with.

In agreement with the theoretical findings, RDPO and ML-RDPO outperform both ranking-only methods, such as DPO, IPO, and SIMPO \citep{meng2024simpo}, and rating-only methods, such as Distilled DPO \citep{fisch2024robust}. Moreover, their performance is superior to RPO \citep{adler2024nemotron} and MAPPO \citep{lan2025mappo}, which are recent alignment methods leveraging both rating and ranking information, as done by RDPO and ML-RDPO. Finally, while RIPO does not perform as well as RDPO in the experiments of \Cref{fig:intro_exp}, it still performs well in some of the experiments we report in Section~\ref{sec:experiments} and in \Cref{app:experiments}. 
Moreover, it serves as an example that our principled derivation of RDPO can be extended beyond the cases where the Bradley-Terry assumption holds. 

{\bf Paper Organization.} In Sections~\ref{sec:prelim} and~\ref{sec:algos}, we first formalize the alignment problem and then derive RDPO, RIPO, and ML-RDPO from first principles. We present the theoretical guarantees of these algorithms in \Cref{sec:analysis}, discuss their closest related work in \Cref{sec:related}, and demonstrate their practical effectiveness in \Cref{sec:experiments}, before concluding the paper in \Cref{sec:conclu}.


\section{Preliminaries}
\label{sec:prelim}

We begin by presenting the main ingredients of the alignment problem in LLMs on which we will build in the subsequent sections. The input to most alignment algorithms is a dataset $\mathcal D$ of tuples $(x,a,a',z)$, where $x$ is a prompt sampled from a distribution $\initial$, $a$ and $a'$ are two responses, and $z$ is a Bernoulli random variable that is one if the response $a$ is preferred to $a'$ conditioned on $x$, i.e., $a \succ a' \mid x$ and is zero, otherwise. Given $z$, we define $a^+ = a$ and $a^-=a'$ if $z=1$ and viceversa $a^+ = a'$ and $a^-=a$ if $z=0$. The binary variable $z$ is assumed to be sampled according to the Bradley-Terry (BT) model\footnote{This can be extended to the more general Plackett-Luce ranking model~\citep{Plackett75ML,Luce12IC} if we have access to several ranked responses.}~\citep{Bradley52RA}, i.e.,~$z\sim\texttt{Bern}(\sigma(r^\star(x,a) - r^\star(x,a')))$, where $\sigma(x)=1/(1+e^{-x})$ is the sigmoid function and $r^\star$ is the latent reward model (RM) of the annotator. We are also given a reference policy $\pi_{\text{ref}}$ (often the SFT checkpoint $\pi_{\text{SFT}}$) which serves as a guardrail. 

In RLHF, we first employ $\mathcal D$ to train a parameterized RM, $r_\phi$, and then use it to solve the following KL-regularized reinforcement learning (RL) objective, denoted by $J_\beta$:
\begin{equation}
\label{eq:RLHF}
\!\max_\theta \; \underbrace{\mathbb E_{\!\!\substack{x\sim\initial\\ a\sim\pi_\theta(\cdot|x)}}\Big[r_\phi(x,a) - \beta \cdot \mathbb{KL}\big(\pi_\theta(\cdot|x)\|\pi_{\text{ref}}(\cdot|x)\big)\Big]}_{:=J_\beta(\pi_\theta;r_\phi)},
\end{equation}
where $\beta>0$ is a hyper-parameter denoting the relative importance of reward maximization against ensuring a low deviation from $\pi_{\text{ref}}$. The RM is learned by minimizing the cross-entropy (CE) loss
\begin{equation}
\label{eq:RM}
\min_\phi \!\! \sum_{(x,a^+,a^-)\in\mathcal D}\!\!\!\!\!\!\! -\log\sigma\big(r_\phi(x,a^+) - r_\phi(x,a^-)\big).
\end{equation}
Fine-tuning $\pi_\theta$ in the RLHF approach is split into two stages: reward learning using the BT model, followed by a policy optimization using~\eqref{eq:RLHF}. More recently, a family of algorithms has emerged that solves the above two problems in a single stage, without explicitly learning a RM. The most notable among this family is DPO~\citep{rafailov2023direct}. The key insight here is that given a RM $r$, the problem in~\eqref{eq:RLHF} admits the following closed-form solution: 
\begin{equation}
\label{eq:Optimal-Policy}
\pi^\star_{\beta}(a|x)=\pi_{\text{ref}}(a|x)\exp(r(x,a)/\beta)/Z(x),    
\end{equation}
where $Z(x)$ is the partition function. 
The next step is to introduce a parameterization of the reward $r$ by first parameterizing the closed-form solution $\pi_{\beta}^\star$ as $\pi_\theta$ in~\eqref{eq:Optimal-Policy}, and then inverting it for $r$ as 
\begin{equation}
\label{eq:Param-RM}
r_\theta(x,a) = \beta \cdot \log\frac{Z(x)\pi_\theta(a|x)}{\pi_{\text{ref}}(a|x)}. 
\end{equation}
The final step is to substitute the parameterized reward from~\eqref{eq:Param-RM} into the CE-loss in~\eqref{eq:RM}, the partition function $Z(x)$ is canceled out, leading to the following loss for DPO:
\begin{equation}
\label{eq:DPO}
\mathcal L_{\text{DPO}}(\theta) = \!\! \sum_{(x,a^+,a^-)\in\mathcal D}\!\!\!\!\!\! -\log\sigma\big(\beta \cdot \Delta_\theta(x,a^+,a^-)\big),
\end{equation}
where $\;\Delta_\theta(x,a^+,a^-) = \log\frac{\pi_\theta(a^+|x)}{\pi_{\text{ref}}(a^+|x)} - \log\frac{\pi_\theta(a^-|x)}{\pi_{\text{ref}}(a^-|x)}$.

Another popular member of the family of DPO-style algorithms is IPO~\citep{azar2024general}. IPO uses the following square loss to fine-tune the LLM $\pi_\theta$:
\begin{equation}
\label{eq:IPO}
\mathcal L_{\text{IPO}}(\theta) = \!\! \sum_{(x,a^+,a^-)\in\mathcal D}\!\!\!\!\!\! \big(\Delta_\theta(x,a^+,a^-) - \frac{1}{2\beta}\big)^2.
\end{equation}
We conclude the section by introducing some notation that will be useful in the rest of the paper.
\paragraph{Notation.} We denote by $\mathcal X$ and $\mathcal A$ the set of prompts given to and the set of responses generated by a LLM.  Moreover, we denote by $\Pi$ the class of policies that can be induced by a choice of the parameters $\theta$, i.e.,~$\Pi = \bc{\pi \mid \exists~ \theta~ \text{s.t.}~ \pi(a|x) = \pi_\theta(a|x)~\forall x,a\in\X\times\A}$, and by $\abs{\Pi}$ the size of this class.\footnote{For infinite classes, the result can be generalized using covering numbers.} Finally, we denote the expected reward or performance of a policy $\pi$ under reward $r$ as  $
\innerprod{\pi}{r} := \sum_{x\in\X} \initial(x) \sum_{a \in \A} \pi(a|x) r(x,a)$, and the optimal policy under the latent reward model $r^\star$ as $\pi^\star = \argmax_{\pi\in \Pi} J_\beta(\pi; r^\star)$.

\section{Algorithms for Alignment with Ranking and Rating Information}
\label{sec:algos}

We now consider the problem of fine-tuning a LLM where the original dataset is augmented with the {\em rating gaps} of the responses, i.e., 
\begin{equation}
\label{eq:dataset}
\mathcal{D} = \bc{(x_i, a_i, a'_i, z_i, \hat{r}(x_i, a^+_i) - \hat{r}(x_i, a^-_i))}^N_{i=1}.    
\end{equation}
In~\eqref{eq:dataset}, $\hat{r}$ is the rating, i.e.,~an approximation of the latent RM of the annotator $r^\star$. Thus, the rating gap $\Delta_{\hat{r}} (x_i,a^+_i, a^-_i) := \hat{r}(x_i, a^+_i) - \hat{r}(x_i, a^-_i)$ serves as an estimate of how much better is the selected action $a^+_i$ compared to the rejected one $a^-_i$. Having introduced the dataset, we are now ready to present our algorithms in the next sections.


\subsection{Ratings Direct Preference Optimization (RDPO)}
\label{subsec:RDPO}


For deriving RDPO, we consider the RLHF objective in~\eqref{eq:RLHF} where the reward of interest is a linear combination of the original reward model (RM) $r$ and the rating estimate $\hat r$, i.e.,
\begin{align}
\label{eq:RDPO_RLHF}
\max_\theta \; \mathbb E_{\substack{x\sim\initial\\a\sim\pi_\theta(\cdot|x)}}&\bigg[r(x,a) + \frac{\beta \cdot \hat{r}(x,a)}{\beta_1} \nonumber \\&- \beta \cdot \mathbb{KL}\big(\pi_\theta(\cdot|x)\|\pi_{\text{ref}}(\cdot|x)\big)\bigg].
\end{align}
The parameter $\beta_1$ weighs the contribution of the rating information $\hat r$ in~\eqref{eq:RDPO_RLHF}, which is inversely related to our confidence in the accuracy of the rating information: the smaller $\beta_1$ is, the more confident we are about the accuracy of $\hat r$. 
The problem in~\eqref{eq:RDPO_RLHF} admits the following closed-form solution:
\begin{equation}
\label{eq:OptimalRDPO}
\!\pi_{\beta}^\star(a|x)=\pi_{\text{ref}}(a|x)\exp\Big(\frac{r(x,a)}{\beta} + \frac{\hat{r}(x,a)}{\beta_1}\Big)/Z(x).
\end{equation}
Inverting the above equation and approximating  $\pi^\star_\beta$ via $\pi_\theta$, we can define the following implicit RM parametrized for any policy parameter $\theta$ as $r_\theta(x,a) = \beta \cdot \log\frac{Z(x)\pi_\theta(a|x)}{\pi_{\text{ref}}(a|x)} - \frac{\beta}{\beta_1} \cdot \hat{r}(x,a)$. Finally, plugging $r_\theta$ into~\eqref{eq:RM}, we obtain the following loss for RDPO: 
%
\begin{equation}
\mathcal{L}_{\mathrm{RDPO}}(\theta) = \sum^N_{i=1} - \log \sigma \Big(
\beta \Delta^i_\theta - \frac{\beta}{\beta_1}  \Delta^i_{\hat{r}}  
\Big), 
\label{eq:RDPO}
\end{equation}
where for compactness we used $\Delta^i_\theta := \Delta_{\theta}(x_i,a^+_i, a^-_i)$ and $\Delta^i_{\hat{r}} := \Delta_{\hat{r}}(x_i,a^+_i, a^-_i)$. The exact definitions of $\Delta_{\theta}(\cdot)$ and $\Delta_{\hat{r}}(\cdot)$ can be found right after Eqs.~\ref{eq:DPO} and~\ref{eq:dataset}, respectively. Following similar steps, we can derive several variants of RDPO as explained in the rest of this section.



{\bf RDPO variants.} Replacing the BT preference model with more general alternatives, as in IPO~\cite{azar2024general}, we can derive the Ratings IPO (RIPO) loss in which the log sigmoid function in~\eqref{eq:RDPO} is substituted with a parabola centered at $1/2$, i.e.,
\begin{equation}
    \mathcal{L}_{\mathrm{RIPO}}(\theta) := \sum^N_{i=1}\Big(\beta\Delta^i_{\theta} - \frac{\beta}{\beta_1} \Delta^i_{\hat{r}}- \nicefrac{1}{2}\Big)^2. \label{eq:RIPO}
\end{equation} 
It is easy to see that the RIPO loss is a shifted version of the loss used by Distilled-DPO~\citep{fisch2024robust}, which, of course, was derived from a different perspective. Other RDPO variant can be derived by replacing the KL divergence in~\eqref{eq:RDPO_RLHF} with other $f$-divergences such as $\chi^2$. 


We conclude this section by noticing that RDPO does not make any statistical assumption about the rating gap information. However, it requires access to the rating gaps $\Delta_{\hat{r}}(x,a^+,a^-)$ for all $(x,a^+,a^-) \in \mathcal{D}$ (not necessarily their actual ratings). 
In the next section, we propose ML-RDPO to bypass this requirement at the cost of assuming that $\Delta_{\hat{r}}(x,a^+,a^-)$ is a Gaussian random variable.  
\subsection{Maximum-Likelihood-based RDPO (ML-RDPO)}
\label{sec:MLRDPOder}

To derive ML-RDPO, we consider a \emph{linear combination of the log likelihood objectives} rather than a linear combination of the two reward models. To this end, for any reward model $r$, let $P^{\mathrm{rat,rank}}_r$ be the joint probability distribution of the ranking and rating information. The idea behind the ML-RDPO algorithm is to output the policy $\piout \propto \piref(a|x) \exp\br{\tilde{r}(x,a)/\beta}$ in which $\tilde{r}$ is an approximation of $r^\star$ computed by maximizing the joint log-likelihood: 
\begin{equation}
\label{eq:ML-RDPO-0}
 \tilde{r} = \argmax_{r\in\mathcal{R}} \; \log P^{\mathrm{rat,rank}}_r( \mathcal{D}),
\end{equation}
where $\mathcal{R}$ is parameterized by the policy weights as $\mathcal{R}=\bc{ r ~| ~\exists ~\theta~ \text{s.t.}~ \beta \Delta_\theta = \Delta_r}$. The equality in the definition of $\mathcal{R}$ holds elementwise. 
In order to obtain $\piout$ without computing $\tilde{r}$, we assume conditional independence and Gaussian distributed rating gaps, to simplify the joint log-likelihood as shown in \Cref{lemma:ml-rdpo_derivation}.

\begin{lemma}\label{lemma:ml-rdpo_derivation}
Let us assume that the rating gaps $\Delta^i_{\hat{r}} | x_i, a_i, a'_i$ are normally distributed with variance $\mathbb{V}$, and $z_i$, i.e. the preference bits, and $\Delta^i_{\hat{r}}$ are conditionally independent given $x_i,a_i,a'_i$. Then, it holds that $$\log P^{\mathrm{rat,rank}}_r( \mathcal{D}) = \sum^{N}_{i=1} \log \sigma (\Delta^i_r) - (2\mathbb{V})^{-1}\br{\Delta^i_{\hat{r}} - \Delta^i_r}^2,$$ where $\Delta^i_r := r(x_i,a^+_i) - r(x_i,a^-_i) $.
\end{lemma}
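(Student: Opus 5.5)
The plan is to factorize the joint likelihood of the dataset and evaluate each factor in closed form. I will work conditionally on the prompts and response pairs $(x_i,a_i,a'_i)$. Their marginal distribution (prompts from $\initial$, responses from the sampling policy) does not depend on $r$, so it contributes only an additive constant to $\log P^{\mathrm{rat,rank}}_r(\mathcal D)$. I will drop that constant, as is implicit in the statement, since only the $\argmax$ in \eqref{eq:ML-RDPO-0} matters.

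\textbf{Step 1: independence and factorization.} Since the tuples are drawn independently, the log-likelihood is a sum over $i$ of $\log P_r(z_i,\Delta^i_{\hat r}\mid x_i,a_i,a'_i)$. By the assumed conditional independence of $z_i$ and $\Delta^i_{\hat r}$ given $(x_i,a_i,a'_i)$, each summand splits as
\[
\log P_r(z_i\mid x_i,a_i,a'_i)+\log P_r(\Delta^i_{\hat r}\mid x_i,a_i,a'_i).
\]

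\textbf{Step 2: the ranking term.} Under the BT model with reward $r$, we have $P_r(z_i=1\mid\cdot)=\sigma(r(x_i,a_i)-r(x_i,a'_i))$. Using $1-\sigma(u)=\sigma(-u)$, the case $z_i=0$ gives $\sigma(r(x_i,a'_i)-r(x_i,a_i))$. In both cases the likelihood equals $\sigma(r(x_i,a^+_i)-r(x_i,a^-_i))=\sigma(\Delta^i_r)$, by the definition of $a^\pm_i$ through $z_i$.

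\textbf{Step 3: the rating term.} The Gaussian density has variance $\mathbb V$, and its mean must be specified in terms of $r$. The natural modeling choice, implicit in the lemma, is that the rating gap is centered at the true gap, i.e.\ mean $\Delta^i_r$. The log-density is then $-\frac{1}{2}\log(2\pi\mathbb V)-(2\mathbb V)^{-1}(\Delta^i_{\hat r}-\Delta^i_r)^2$, and the first term is again an $r$-independent constant.

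\textbf{Main obstacle.} The delicate point is the orientation in Step 3. Both $\Delta^i_{\hat r}$ and $\Delta^i_r$ are defined relative to $(a^+_i,a^-_i)$, which depends on $z_i$. This seems to clash with the conditional independence assumption. I would handle it by stating the Gaussian assumption for the oriented gap $\hat r(x_i,a_i)-\hat r(x_i,a'_i)$ with mean $r(x_i,a_i)-r(x_i,a'_i)$. Flipping the orientation when $z_i=0$ negates both quantities, which leaves the squared difference unchanged. Hence the expression in the lemma holds regardless of orientation. Summing Steps 2 and 3 over $i$ and discarding constants yields the claimed identity.
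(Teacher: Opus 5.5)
Your proposal is correct and follows essentially the same route as the paper: drop the $r$-independent marginal of $(x_i,a_i,a'_i)$, split the conditional likelihood via conditional independence into the Bradley--Terry term $\sigma(\Delta^i_r)$ and a Gaussian term with mean $r(x_i,a_i)-r(x_i,a'_i)$, discard the normalizing constant, and use invariance of the square under a simultaneous sign flip to pass to the $(a^+_i,a^-_i)$ orientation. Your choice to state the Gaussian assumption explicitly for the $z_i$-independent gap $\hat r(x_i,a_i)-\hat r(x_i,a'_i)$ is a cleaner version of the symmetry step the paper states somewhat loosely. Like the paper, you correctly read the claimed identity as holding up to $r$-independent additive constants.
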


Now, given the definition of the induced reward class $\mathcal R$, we can perform the change of variable from $r$ to $\beta \log (\pi_\theta/\piref)$ in the optimization problem~\eqref{eq:ML-RDPO-0} to obtain
%
\begin{equation}
    \mathcal{L}_{\mathrm{ML-RDPO}} (\theta) := \sum^{N}_{i=1}\underbrace{ - \log \sigma (\beta\Delta^i_{\theta})}_{\text{Ranking term}} + \underbrace{\frac{\br{\Delta^i_{\hat{r}} - \beta\Delta^i_{\theta}}^2}{2\mathbb{V}}}_{\text{Rating term}}
\label{eq:ML-RDPO}. 
\end{equation}

Then,~ML-RDPO outputs the policy with parameters that minimize $\mathcal{L}_{\mathrm{ML-RDPO}}$.
Note that $\mathcal{L}_{\mathrm{ML-RDPO}}$ is the sum of two terms, one that only depends on the ranking information and one that is solely computed from ratings. This allows to easily apply ML-RDPO even when the ranking and rating information are not provided for the same prompt-response tuples (unlike the algorithms derived in Section~\ref{subsec:RDPO}). With a careful look at the ML-RDPO loss in~\eqref{eq:ML-RDPO}, we notice that the ranking and rating terms coincide with the DPO loss and the Distilled-DPO loss weighted by $\mathbb{V}^{-1}$, respectively. Thus, the more confidence we have in the rating information, the smaller the value of $\mathbb{V}$ should be. At an intuitive level, $\mathbb{V}$ in ML-RDPO acts similarly to $\beta_1$ in RDPO.

{\bf Tuning hyperparameters $\beta_1$ and $\mathbb{V}$.} As discussed above, the value of $\beta_1$ and $\mathbb{V}$ depend on the accuracy of the rating information, which might be unknown. However, we managed to tune them rather easily in our experiments. We ablated $5$ different values of each of these hyperparameters on one model, namely Zephyr-7B, and used the best choice for the rest of the models of similar size (Llama-3.1-8B and Mistral-7B). We will have a detailed discussion of the hyperparameters selection in \Cref{app:zephyr_ablation}.


\section{Theoretical results}
\label{sec:analysis}

Our main theoretical contribution is to prove that augmenting the DPO dataset with rating gaps $\Delta_{\hat{r}} (x_i,a^+_i, a^-_i)$ allows us to achieve better statistical guarantees. However, the benefit depends heavily on the quality of the rating estimates $\hat{r}$. In order to formalize this concept, we need the following assumption about the dataset generation.
\begin{assumption}\textbf{(i.i.d.~dataset generation)}
Each pair of candidate actions is sampled i.i.d.~from the policy $\pidata$, i.e.,~$a_i,a'_i \sim \pidata(\cdot|x_i)$. 
\end{assumption}
At this point, we introduce the approximation error $\mathrm{Err}(\hat{r})$, defined as
\begin{equation*}
\mathrm{Err}(\hat{r}) := \mathbb{E}_{\!\!\!\!\!\substack{x \sim \initial\\a,a'\sim\pidata(\cdot|X)}}\big[\big(\Delta_{r^\star}(x,a,a') - \Delta_{\hat{r}}(x,a,a') \big)^2\big].
\end{equation*}
Intuitively, $\mathrm{Err}(\hat{r})$ quantifies the rating estimate quality, i.e. how much the reward differences $\Delta_{r^\star}$ and $\Delta_{\hat{r}}$ induced by rewards $r^\star$ and $\hat{r}$ are close to each other under the support of the reference policy $\piref$. Note that requiring small $\mathrm{Err}(\hat{r})$ is much weaker than requiring $\hat{r}(x,a)$ to be close to $r^\star(x,a)$ for all $x,a \in \X\times\A$. In practice, it is reasonable to assume that $\mathrm{Err}(\hat{r})$ is small for datasets like \texttt{ultrafeedback\_binarized}\footnote{\url{https://huggingface.co/datasets/HuggingFaceH4/ultrafeedback_binarized}} in which $\hat{r}$ is obtained by querying a (rather) reliable evaluator, such as Chat GPT-4~\citep{cui2023ultrafeedback}. We will make use of $\mathrm{Err}(\hat{r})$ to characterize the statistical rate attained by RDPO in~\Cref{subsec:RDPO-Theory}. Before delving into the proofs, let us state an important assumption that we leverage in all our technical results. 

\begin{assumption}
\label{ass:policy_realizability_main}  
Let $0 \leq R_{\min}\leq r^\star(x,a) \leq R_{\max}$ for all $x,a\in \X\times\A$, and $\pi^\star_\beta$ be defined as in~\eqref{eq:Optimal-Policy} for any $\piref$ and reward $r:\X\times\A\rightarrow[R_{\min},R_{\max}]$. Then, we assume that we have access to a policy class $\Pi$ such that $\pi^\star_\beta \in \Pi$. 
\end{assumption}

This assumption indicates that the policy class is expressive enough to realize the closed-form solution of~\eqref{eq:RLHF} corresponding to any bounded reward function. Note that this assumption is standard in our setting---it is even needed for deriving the DPO loss from the RLHF problem in~\eqref{eq:RLHF}.



\subsection{Theoretical Guarantees for RDPO} 
\label{subsec:RDPO-Theory}

After stating our main assumptions, we are now ready to present our theoretical results starting from RDPO. In order to report our bounds more clearly, let us define $\mathrm{Err_{DPO}}(N,\delta) := \frac{ c R^2_{\max}e^{4 R_{\max}} \log(\abs{\Pi}/\delta)}{N} $ for some $c>0$. The definition comes from the fact that DPO finds a policy $\pi_{\mathrm{DPO}}$ enjoying the following guarantees with probability at least $1-\delta$ (see e.g.,~\citealt[Theorem 3.1]{huang2024correcting}),
\begin{equation}
J_\beta(\pi^\star;r^\star) - J_\beta(\piout;r^\star) \leq \mathcal{O}\br{\sqrt{ C^{\max} \cdot \mathrm{Err_{DPO}}(N,\delta) }},
\label{eq:dpo_rate}
\end{equation}
where for some $\pi$, we define the concentrability coefficient as $C^\pi = \sum_{x \in \X} \initial(x)\sum_{a\in \A} \pi(a|x) \abs{\frac{\pi(a|x)}{\pidata(a|x)}}$. Moreover, for the optimal policy $\pi^\star$, we use the notation $C^\star := C^{\pi^\star}$, and denote the maximum concentrability by $C^{\max} := \max_{\pi\in\Pi} C^{\pi}$. 
We now state our main result for RDPO. For technical reasons, we consider minimizing the RDPO loss in~\eqref{eq:RDPO} over the constrained policy class $\Pi'\subset\Pi$ such that $\pi \in \Pi'$ implies that $\beta \Delta_{\log \pi /\piref} - \frac{\beta}{\beta_1}\Delta_{\hat{r}} $ is in the interval $ [-R_{\max}, R_{\max}] $.
\begin{restatable}{theorem}{RDPO}
\label{thm:ratings_dpo}
For any $\beta\in (0,\infty)$ , let $\piout$ be the policy with parameters minimizing $\mathcal{L}_{\mathrm{RDPO}}$ over the policy class $\Pi'$ with hyperparameter $\beta_1 = \beta \cdot \nicefrac{\mathrm{Err}(\hat{r})}{\mathrm{Err_{DPO}}(N,\delta)}$ 
Then, it holds that with probability at least $1-\delta$, 
 the suboptimality gap,i.e. $J_{\frac{\beta\beta_1}{\beta+\beta_1}}(\pi^\star;r^\star) - J_{\frac{\beta\beta_1}{\beta+\beta_1}}(\piout;r^\star)$, is upper-bounded by 
\begin{equation}\mathcal{O}\br{\sqrt{C^{\max} \cdot \min\bc{ \mathrm{Err_{DPO}}(N,\delta),\mathrm{Err}(\hat{r})}}}. \label{eq:RDPO_rate}
\end{equation}
\end{restatable}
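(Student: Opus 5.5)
The plan is to view RDPO as maximum-likelihood estimation for a \emph{shifted} reward and then reuse the standard DPO analysis of \citet[Theorem 3.1]{huang2024correcting}. Write $\tilde\beta := \frac{\beta\beta_1}{\beta+\beta_1}$. For $\pi\in\Pi'$ define the implicit reward
\[
r_\pi(x,a) := \beta\log\frac{\pi(a|x)}{\piref(a|x)} - \frac{\beta}{\beta_1}\hat r(x,a),
\]
so that $\mathcal L_{\mathrm{RDPO}}$ is exactly the BT cross-entropy loss~\eqref{eq:RM} evaluated at $r_\pi$. By \Cref{ass:policy_realizability_main}, the policy $\pi^\circ \propto \piref\exp(r^\star/\beta + \hat r/\beta_1)$ lies in $\Pi$. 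Its implicit reward satisfies $\Delta_{r_{\pi^\circ}} = \Delta_{r^\star}$, which lies in $[-R_{\max},R_{\max}]$, so $\pi^\circ\in\Pi'$. Hence the BT model with the true $r^\star$ is realized inside $\Pi'$.

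The first step is the standard log-loss concentration argument over a finite class (Bernstein or Freedman plus a union bound over $|\Pi'|\le|\Pi|$). Because the argument of the sigmoid is bounded by $R_{\max}$, the link's curvature is at least of order $e^{-2R_{\max}}$. This gives, with probability $1-\delta$,
\[
\mathbb E_{x,a,a'}\big[(\Delta_{r_{\piout}}-\Delta_{r^\star})^2\big] \lesssim \mathrm{Err_{DPO}}(N,\delta).
\]
This is the same estimation bound as in the DPO proof, and it holds irrespective of $\beta_1$.

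The second step is to turn this into a bound on the reward that $\piout$ is actually optimal for. Rewriting,
\[
\tilde\beta\log\frac{\piout}{\piref} = \frac{\beta_1}{\beta+\beta_1}\,r_{\piout} + \frac{\beta}{\beta+\beta_1}\,\hat r .
\]
So $\piout$ is the $J_{\tilde\beta}$-optimal policy for the convex combination $\bar r := \lambda r_{\piout} + (1-\lambda)\hat r$, with $\lambda = \beta_1/(\beta+\beta_1)$. The same identity shows that $\pi^\star$ (the $\tilde\beta$-optimum for $r^\star$) corresponds to $r^\star$ itself. By convexity of the square,
\[
\mathbb E\big[(\Delta_{\bar r}-\Delta_{r^\star})^2\big] \le \lambda\,\mathrm{Err_{DPO}} + (1-\lambda)\,\mathrm{Err}(\hat r).
\]
A direct algebraic check is that the choice $\beta_1/\beta = \mathrm{Err}(\hat r)/\mathrm{Err_{DPO}}$ yields the harmonic-type quantity $2\,\mathrm{Err}\,\mathrm{Err_{DPO}}/(\mathrm{Err}+\mathrm{Err_{DPO}}) \le 2\min\{\mathrm{Err_{DPO}},\mathrm{Err}(\hat r)\}$. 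Squared errors would be better weighted by $\lambda^2$ with a cross term, but the convexity bound already suffices.

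The final step is the suboptimality decomposition used for DPO. Since $\piout$ maximizes $J_{\tilde\beta}(\cdot;\bar r)$, we have
\[
J_{\tilde\beta}(\pi^\star;r^\star)-J_{\tilde\beta}(\piout;r^\star) \le \langle \pi^\star-\piout,\; r^\star-\bar r\rangle .
\]
The per-prompt constant shift in $r^\star-\bar r$ cancels because both policies are distributions. Each term is then controlled by Cauchy--Schwarz with a change of measure to $\pidata\otimes\pidata$, which costs $\sqrt{C^{\pi}}$ via the pairwise-difference trick of \citet{huang2024correcting}. Taking the maximum over $\pi\in\{\pi^\star,\piout\}$ gives $\sqrt{C^{\max}}$ times the root of the bound from the second step, which is~\eqref{eq:RDPO_rate}.

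The main obstacle is the first step. It requires checking that the fixed shift by $\hat r/\beta_1$ does not break the likelihood concentration: this needs boundedness of $\Delta_{r_\pi}$ (enforced by $\Pi'$) and realizability of $\pi^\circ$ in $\Pi'$. A secondary care point is converting pairwise-difference errors into value gaps without incurring an extra per-state centering term.
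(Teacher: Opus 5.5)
Your argument follows the paper's proof almost step for step. Your $r_{\piout}$ is the paper's $r_{\mathrm{out}}$, and your $\lambda=\beta_1/(\beta+\beta_1)$ is the paper's $1-\alpha$. The identity $\tilde\beta\log(\piout/\piref)=\bar r$, the bound $J_{\tilde\beta}(\pi^\star;r^\star)-J_{\tilde\beta}(\piout;r^\star)\le\langle\pi^\star-\piout,r^\star-\bar r\rangle\le 2\sqrt{C^{\max}\mathrm{Err}(\bar r)}$ via centering and Cauchy--Schwarz, and the realizability check through $\pi^\circ\in\Pi'$ are all what the paper does. At this $\beta_1$, your convexity bound $\lambda\,\mathrm{Err_{DPO}}(N,\delta)+(1-\lambda)\mathrm{Err}(\hat r)$ equals the same $2\mathrm{Err_{DPO}}(N,\delta)\mathrm{Err}(\hat r)/(\mathrm{Err_{DPO}}(N,\delta)+\mathrm{Err}(\hat r))$ that the paper gets from Young's inequality. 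Using excess log-loss plus curvature instead of the Hellinger-based MLE bound is an inessential variation. Your aside that $\pi^\star$ is the $\tilde\beta$-optimum does not match the paper, which defines $\pi^\star$ with $\beta$. Nothing uses it, though, since the decomposition holds for any comparator in $\Pi$.

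The gap is in your first step, where you treat the shift $\frac{\beta}{\beta_1}\hat r$ as fixed and union bound only over $\abs{\Pi'}$. The paper explicitly says this is not justified in its setting. There, $\hat r$ may be correlated with the ranking data; for example, in \texttt{ultrafeedback\_binarized} ratings and rankings come from the same source. Then several objects become data-dependent:
the class $\bc{r_\pi:\pi\in\Pi'}$; the constraint set $\Pi'$ itself, which is defined through $\Delta_{\hat r}$; and even $\beta_1$, through $\mathrm{Err}(\hat r)$.
The finite-class concentration you invoke needs a class fixed before the sample is drawn, so it does not apply to this MLE.

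The paper repairs this with a union bound over a finite class $\mathcal R\ni\hat r$. For each fixed $\tilde r\in\mathcal R$, it defines $\Pi'_{\tilde r}$ and the shifted class $\mathcal R_{\tilde r}$, with $\alpha$ a deterministic function of $\tilde r$. It then checks realizability of $r^\star$ up to a prompt-dependent shift, exactly as you did with $\pi^\circ$. Finally it applies the conditional MLE bound with failure probability $\delta/\abs{\mathcal R}$. The resulting guarantee is uniform over the fixed union of these classes. Together with realizability in each subclass, it therefore covers the MLE computed in the data-selected subclass indexed by $\hat r$. The cost is $\log(\abs{\Pi}\abs{\mathcal R}/\delta)\le 2\log(\abs{\Pi}/\delta)$ when $\abs{\mathcal R}\le\abs{\Pi}$, which is absorbed into $\mathrm{Err_{DPO}}(N,\delta)$. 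If you instead assume $\hat r$ is a fixed evaluator independent of the sample, your proof goes through as written, but it proves a weaker statement than the paper intends.
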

%
The proof can be found in \Cref{app:proof_rdpo_main}.
Moreover \Cref{thm:ratings_dpo_app} in \Cref{app:Cstar} shows a stronger theoretical bound, where $C^{\max}$ is replaced by the smaller quantity $C^\star$ that can be achieved via an additional $\chi^2$ regularization and the assumption that $\pidata$ coincides with $\piref$. However, we noticed that the $\chi^2$ regularization can make the algorithm numerically unstable in practice. As a final comment, we notice that the restriction over $\Pi'$ conveys additional stability to the algorithm and has a positive practical effect (see \Cref{app:exp_constrain}). 

We now discuss the important theoretical properties that the rate in~\eqref{eq:RDPO_rate} features.

\paragraph{Acceleration.} When the ratings $\hat r$ are accurate, it is expected that $\mathrm{Err}(\hat{r}) \leq \mathrm{Err_{DPO}}(N,\delta) $. Then, the bound in \Cref{thm:ratings_dpo} predicts a rate of order $ \mathcal{O}\br{\sqrt{ C^{\max}  \cdot \mathrm{Err}(\hat{r})}}$, which is faster than the DPO rate in~\eqref{eq:dpo_rate}.

\paragraph{Robustness.} Even if the ratings $\hat r$ are inaccurate, i.e.,~$\mathrm{Err}(\hat{r}) \geq \mathrm{Err_{DPO}}(N,\delta)$, the guarantees for RDPO do not collapse. Indeed, \Cref{thm:ratings_dpo} predicts a rate equal to $\mathcal{O}\br{ \sqrt{C^{\max}\cdot \mathrm{Err_{DPO}}(N,\delta) }}$, which is the same bound achieved by DPO-like algorithms.
 

\paragraph{Principled choice of $\beta$ and $\beta_1$ in practice.} 
\Cref{thm:ratings_dpo} suggests to set the rating trust $1/\beta_1$ to the inverse of the regularization parameter (i.e.,~$1/\beta$) times the errors ratio $\nicefrac{\mathrm{Err_{DPO}}(N,\delta)}{\mathrm{Err}(\hat{r})}$.  
This finding gives us the following practical guideline: 
\begin{tcolorbox}[
  colback=blue!10!white,
  colframe=blue!80!black,
  width=\linewidth
]
\textbf{\textcolor{blue}{Practical Take Away:}} The more we trust the rating over the ranking, the larger $\beta/\beta_1$ should be. As an example, $\beta$ can be set to the default value in DPO, e.g., $\beta =0.1$, and $\beta_1$ can be chosen consequently to obtain the desired ratio $\beta/\beta_1$.
\end{tcolorbox}


\subsection{Theoretical Guarantees for ML-RDPO}

We now prove our theoretical results for ML-RDPO. 
Similar to the derivation of ML-RDPO, we assume that $z_i$ and $\Delta^i_{\hat{r}}$ are conditionally independent given $x_i, a_i, a'_i$, and rating gaps are Gaussian $\Delta^i_{\hat{r}} \sim \mathcal{N}( \Delta^i_{r^\star}, \mathbb{V})$ for all $i \in [N]$. For ML-RDPO, the theoretical analysis also requires restricting the optimization domain to the parameter spaces that induce a policy class $\tilde{\Pi}\subset\Pi$ with the property that $\beta \Delta_{\log \pi/\piref}\in[-R_{\max}, R_{\max}]$ for all $\pi \in \tilde{\Pi}$.  

\begin{restatable}{theorem}{MLRDPO} 
\label{thm:DPO+distilled_main}
Let $\piout$ be the policy with parameters minimizing $\mathcal{L}_{\mathrm{ML-RDPO}}$ over the policy class $\tilde{\Pi}$. 
Then for any value of $\beta\in(0,\infty)$, with probability at least $1-\delta$, the suboptimality $J_{\beta}(\pi^\star;r^\star) - J_\beta(\piout;r^\star) $ is upper-bounded by 
\begin{equation*}
\tilde{\mathcal{O}}\br{\sqrt{\frac{C^{\max} \min\bc{ e^{R_{\max}}R^2_{\max},R_{\max}^2 + \mathbb{V}} \log (\abs{\Pi}/\delta)}{N} }}.
\end{equation*}
\end{restatable}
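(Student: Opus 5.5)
The proof has two stages. First, a maximum-likelihood estimation bound shows that the learned reward gaps are close to the true ones under the data distribution. Second, the DPO-style performance-difference argument turns that closeness into a suboptimality bound.

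\textbf{Reparametrization.} Let $\tilde{r}(x,a) = \beta\log(\piout(a|x)/\piref(a|x))$. By the change of variable that defines $\mathcal{L}_{\mathrm{ML-RDPO}}$, $\tilde{r}$ maximizes the joint log-likelihood of \Cref{lemma:ml-rdpo_derivation} over the reward class $\mathcal{R}$ induced by $\tilde{\Pi}$. By \Cref{ass:policy_realizability_main}, the true reward is realizable: the policy $\pi^\star_\beta$ built from $r^\star$ lies in $\Pi$, and its reward gaps are bounded by $R_{\max}$, so it lies in $\tilde{\Pi}$.

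\textbf{Step 1 (estimation).} I would apply the standard finite-class MLE generalization bound, as in \citet{huang2024correcting}. With probability $1-\delta$, the squared Hellinger distance between the joint model $P_{\tilde r}$ and the true model $P_{r^\star}$, averaged over $x\sim\initial$ and $a,a'\sim\pidata$, is at most $2\log(\abs{\Pi}/\delta)/N$.

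Conditional independence of $z$ and the rating gap makes the joint law a product. The squared Hellinger distance of a product is at least that of either marginal, via $1-H^2(P\otimes Q)=(1-H^2(P))(1-H^2(Q))$. This gives two lower bounds, and I would take whichever is better.

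\emph{Ranking marginal.} The Bernoulli laws $\texttt{Bern}(\sigma(\Delta))$ with $\abs{\Delta}\le R_{\max}$ satisfy $H^2\gtrsim e^{-R_{\max}}(\Delta_{\tilde r}-\Delta_{r^\star})^2$. This is the standard DPO argument, which uses the bounded gaps to lower-bound $\sigma'$ on $[-R_{\max},R_{\max}]$. It yields
$$\mathbb{E}\big[(\Delta_{\tilde r}-\Delta_{r^\star})^2\big]\lesssim e^{R_{\max}}\log(\abs{\Pi}/\delta)/N.$$
To reach the stated $R^2_{\max}$ factor, I would use the version of the bound with gaps normalized by $R_{\max}$.

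\emph{Rating marginal.} For Gaussians with variance $\mathbb{V}$ and means differing by $d$,
$$H^2 = 1-\exp\!\big(-d^2/(8\mathbb{V})\big).$$
Since $\abs{d}\le 2R_{\max}$, the inequality $1-e^{-u}\ge u\,(1-e^{-u_{\max}})/u_{\max}$ gives $H^2\gtrsim d^2/(R_{\max}^2+\mathbb{V})$ up to constants. This yields
$$\mathbb{E}\big[d^2\big]\lesssim (R_{\max}^2+\mathbb{V})\log(\abs{\Pi}/\delta)/N.$$

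Combining the two marginals, the squared gap error is bounded by $\min\{e^{R_{\max}}R^2_{\max},R^2_{\max}+\mathbb{V}\}\log(\abs{\Pi}/\delta)/N$.

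\textbf{Step 2 (from reward error to suboptimality).} Since $\piout\propto\piref\exp(\tilde r/\beta)$, the policy $\piout$ is optimal for $J_\beta(\cdot;\tilde r)$. Therefore
$$J_\beta(\pi^\star;r^\star)-J_\beta(\piout;r^\star)\le \innerprod{\pi^\star-\piout}{r^\star-\tilde r}.$$
Per-prompt baselines cancel because both policies are distributions. I would center each reward by its $\pidata$-mean, $b(x)=\mathbb{E}_{a'\sim\pidata}[r(x,a')]$, and apply Cauchy--Schwarz with importance weights $\pi/\pidata$. For each $\pi\in\{\pi^\star,\piout\}$ this gives
$$\abs{\innerprod{\pi}{(r^\star-b^\star)-(\tilde r-\tilde b)}}\le\sqrt{C^{\pi}\,\mathbb{E}_{\pidata}\big[(\Delta_{r^\star}-\Delta_{\tilde r})^2\big]}.$$
Bounding $C^{\pi}\le C^{\max}$ and plugging in Step 1 gives the claimed rate.

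\textbf{Main obstacle.} The delicate part is Step 1: the Hellinger-to-squared-gap conversions, in particular tracking exactly how the $e^{R_{\max}}R^2_{\max}$ and $R^2_{\max}+\mathbb{V}$ constants arise. I would first try to borrow the DPO constant from \citet{huang2024correcting} directly for the ranking marginal, and do the explicit Gaussian computation for the rating marginal.
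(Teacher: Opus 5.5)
Your proposal is correct and follows the paper's proof essentially step for step. The paper also applies the conditional-MLE Hellinger bound to the joint model and passes to the ranking and rating marginals; it does this by data processing with a weight $\alpha\in\{0,1\}$, which is equivalent to your product identity. It converts the Gaussian part using $1-e^{-x}\ge x/(1+x)$, as you do, and finishes with the same Cauchy--Schwarz change of measure against $\pidata$ to get $2\sqrt{C^{\max}\,\mathrm{Err}}$.

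The one step to tighten is the Bernoulli conversion, where your sketch and the paper differ. The paper uses the lemma adapted from \citet{huang2024correcting}, which gives $(\Delta_{r_{\mathrm{out}}}-\Delta_{r^\star})^2\le 16e^{4R_{\max}}R_{\max}^2\,H^2$. So the paper's argument, and your fallback of borrowing the DPO constant, only puts $e^{4R_{\max}}R_{\max}^2$ inside the minimum, not the $e^{R_{\max}}R^2_{\max}$ written in the statement.

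Your claim $H^2\gtrsim e^{-R_{\max}}(\Delta_{\tilde r}-\Delta_{r^\star})^2$ is true and recovers the stated exponent. However, it does not follow from $\abs{\sigma(u)-\sigma(v)}\ge\sigma'(R_{\max})\abs{u-v}$ combined with $H^2\gtrsim(\sigma(u)-\sigma(v))^2$; that route only gives $e^{-2R_{\max}}$. A clean argument goes as follows.
\begin{itemize}
\item Write $P_u=\mathrm{Bern}(\sigma(u))$. Its Bhattacharyya coefficient with $P_v$ equals $\cos(\theta(u)-\theta(v))$ with $\theta=\arcsin\sqrt{\sigma}$.
\item On $[-R_{\max},R_{\max}]$ you have $\theta'=\tfrac12\sqrt{\sigma(1-\sigma)}\ge\tfrac14e^{-R_{\max}/2}$, and $\abs{\theta(u)-\theta(v)}<\pi/2$.
\item Hence $\sum_{z}\big(\sqrt{P_u(z)}-\sqrt{P_v(z)}\big)^2=4\sin^2\!\big(\tfrac{\theta(u)-\theta(v)}{2}\big)\ge\tfrac{e^{-R_{\max}}}{2\pi^2}(u-v)^2$.
\end{itemize}
For $R_{\max}\ge 1$ this is already no worse than $e^{R_{\max}}R^2_{\max}$. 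So you need no renormalization to ``reach'' the $R^2_{\max}$ factor.
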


The proof can be found in \Cref{app:proofMLRDPOmain}. Moreover, as we show in \Cref{thm:DPO+distilled_app} in \Cref{app:CstarMLRDPO}, with an additional $\chi^2$ divergence term, we can obtain a tighter bound where $C^{\max}$ is replaced with a smaller quantity $C^\star$. We now highlight some interesting features of the ML-RDPO bound in \Cref{thm:DPO+distilled_main}. 

\paragraph{Acceleration.} As for RDPO, ML-RDPO enjoys both the acceleration and robustness properties. Regarding acceleration, it can be seen that when the variance $\mathbb{V}$ is low, i.e.,~$\mathbb{V} \leq \mathcal{O}\br{e^{R_{\max}}R^2_{\max}}$, \Cref{thm:DPO+distilled_main} shows a better rate for ML-RDPO than the one for DPO given by~\eqref{eq:dpo_rate}. In particular, when the variance is small, we have the following exponential improvement:
\begin{tcolorbox}[
  colback=blue!10!white,
  colframe=blue!80!black,
  width=\linewidth
]
\textbf{\textcolor{blue}{Exponential Improvement:}} ML-RDPO incurs only a polynomial dependence on $R_{\max}$ and $\mathbb{V}$, while the sample complexity of all known algorithms that only use ranking information, such as DPO, scales with $e^{R_{\max}}$.
\end{tcolorbox}

In alignment using only ranking, the $e^{R_{\max}}$ dependence can be moved to lower order terms \cite{chen2025avoiding}, requiring online generation of new responses and it is conjectured to be unavoidable in the offline setting \cite{das2024active}.

\paragraph{Robustness.} We note that the proof technique in \Cref{thm:DPO+distilled_main}
can be used to prove a new bound for Distilled-DPO~\citep{fisch2024robust} of order $\mathcal{O}\br{\sqrt{C^{\max} (\mathbb{V} + R_{\max}^2)\log(\abs{\Pi}/\delta)/N}}$. This allows us to conclude that ML-RDPO with appropriately tuned $\beta$ and $\mathbb{V}$ is no worse than the best between DPO and ML-RDPO. 

\paragraph{Comparison with RDPO.}
As we previously mentioned, $\mathcal{L}_{\mathrm{ML-RDPO}}$ is written as the sum of a rating term and a ranking term. This fact makes ML-RDPO easily generalizable to the setting where the rating feedback is observed only for certain responses in the dataset. On the contrary, RDPO cannot be directly applied in this setting because $\mathcal{L}_{\mathrm{RDPO}}$ does not decompose into rating and ranking terms. We elaborate on this in \Cref{app:hetereogenous_data} and show the efficiency of ML-RDPO with partial rating information in practice in \Cref{fig:ml-rdpo-missing}. Of course, a downside of ML-RDPO is the Gaussian assumption on the rating, which is avoided by  RDPO.


\section{Related Work}
\label{sec:related}

Before empirically evaluating our proposed algorithms, we provide a brief overview of the related work. Some of the algorithms we discuss in this section will serve as baselines in our experiments. A more comprehensive literature review is provided in~\Cref{app:related}.

Among the algorithms that only work with ranking information, we use DPO~\citep{rafailov2023direct}, which leverages the BT model, and IPO~\citep{azar2024general}, which is applied to general preference models, as baselines. Moreover, we compare with SIMPO~\citep{meng2024simpo}, which modifies DPO by removing the regularization towards $\piref$ and dividing the log ratios by the length of the responses in the dataset. Among the ratings-only alignment algorithms, we compare with Distilled DPO (DDPO)~\citep{fisch2024robust}, which uses the squared-loss to learn a policy whose difference of log ratios is close to the difference of observed ratings. Interestingly, DDPO is quite similar to our RIPO algorithm, albeit they derived it from different perspectives.

We also consider recently proposed algorithms that utilize both rating and ranking information. We use as a baseline RPO~\citep{adler2024nemotron}, which minimizes $\mathbb{KL}(\mathrm{Bern}(\sigma(\Delta^i_\theta)), \mathrm{Bern}(\sigma(\Delta^i_{\hat{r}}) ))$. 
Unfortunately, RPO does not enjoy the performance guarantees attained by our RDPO and ML-RDPO algorithms, and more importantly, it is difficult to tune due to the non-smoothness of its loss function.
We also compare with another rating-augmented alignment algorithm dubbed MAPPO~\citep{lan2025mappo} that minimizes 
the DPO loss up to the fact that the $\beta$ for the rejeced response is multiplied by the rating gap.



\begin{figure*}[t] 
\centering
\begin{tabular}{cc} 
\subfloat{
        \includegraphics[width=0.45\textwidth]{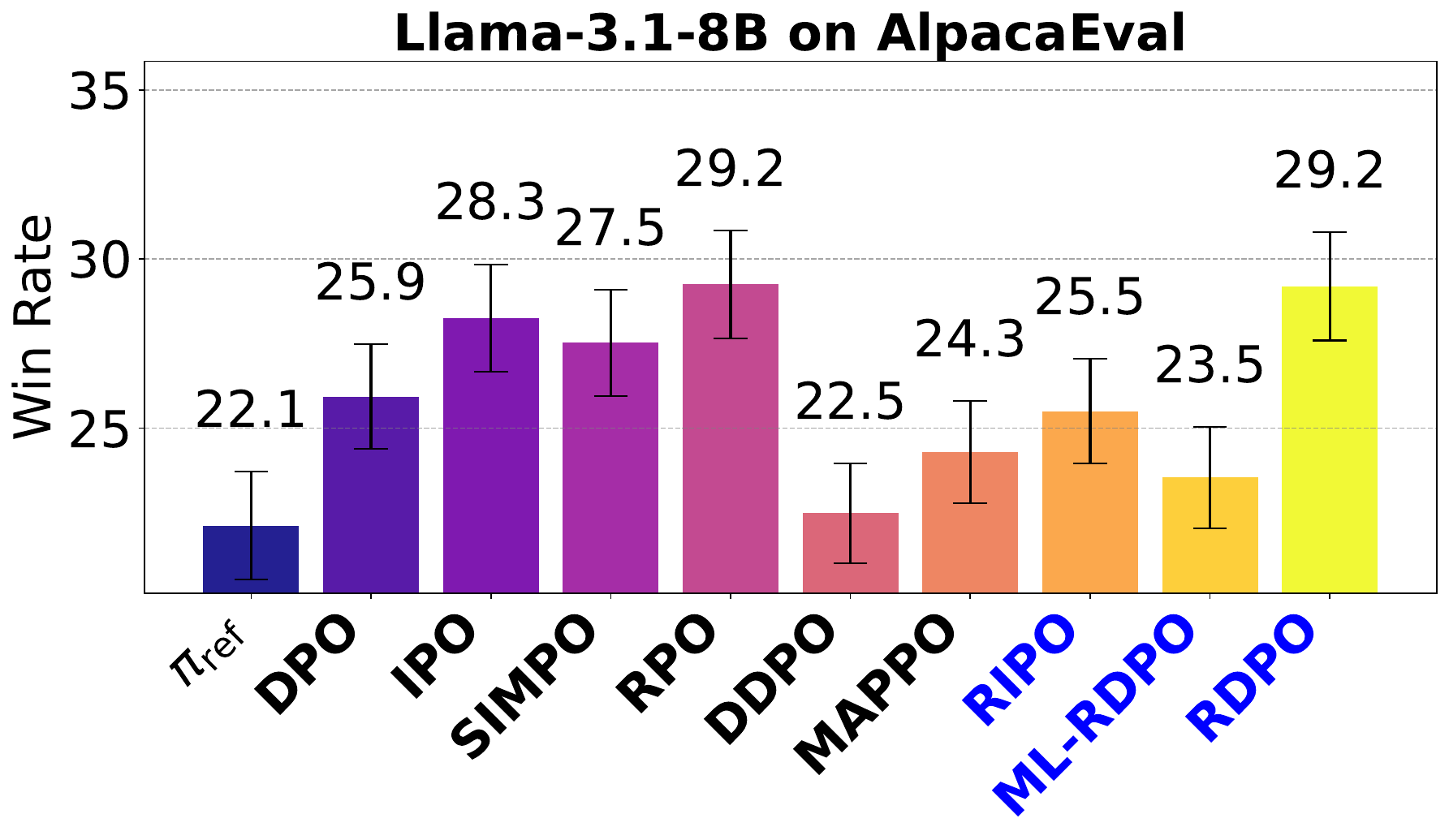}
    }
    &
    \subfloat{
        \includegraphics[width=0.45\textwidth]{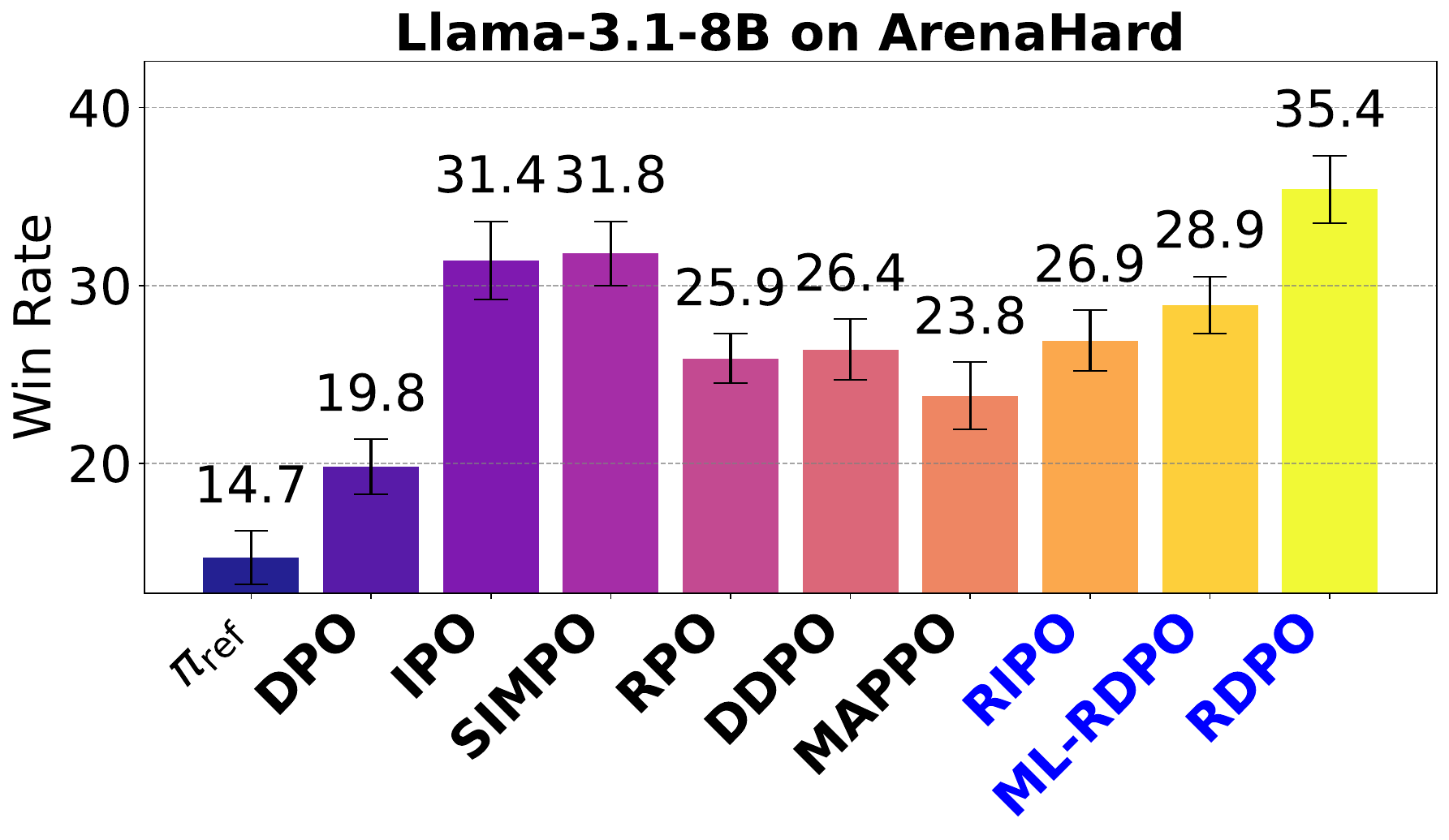}
    }
\\
 \subfloat{
        \includegraphics[width=0.45\textwidth]{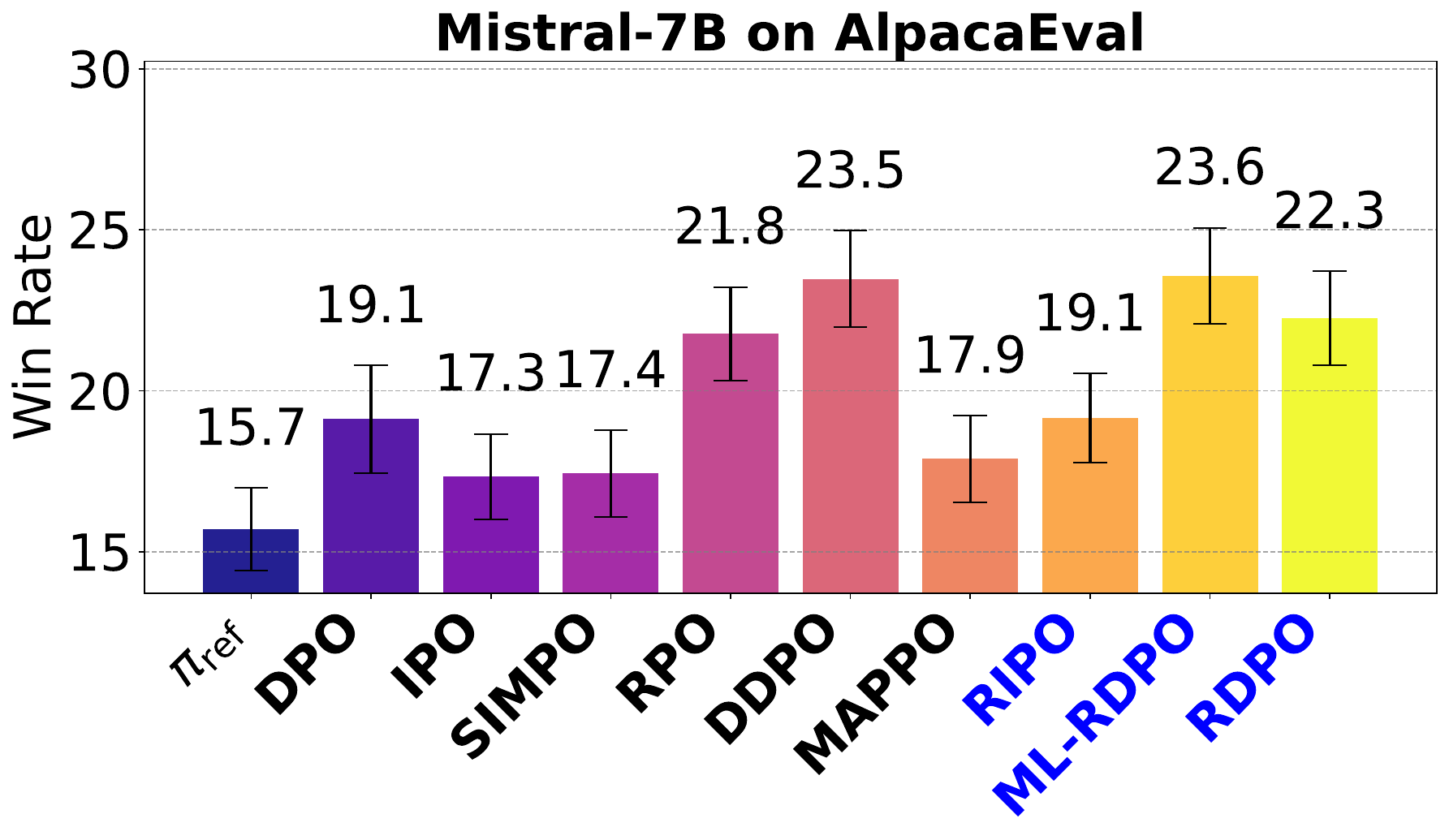}
}
    &
    \subfloat{
        \includegraphics[width=0.45\textwidth]{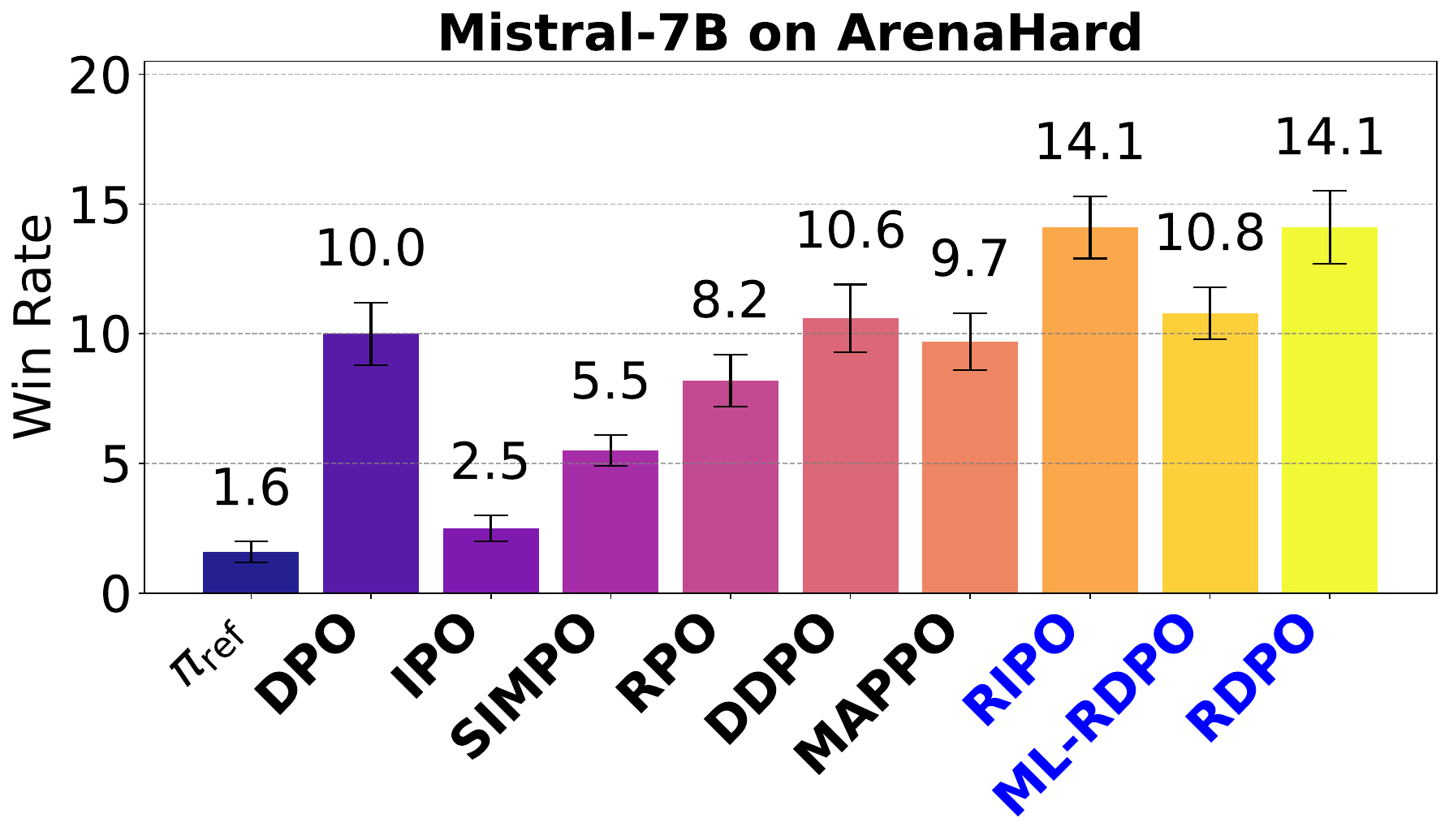}
    } \\
    \subfloat{
        \includegraphics[width=0.45\textwidth]{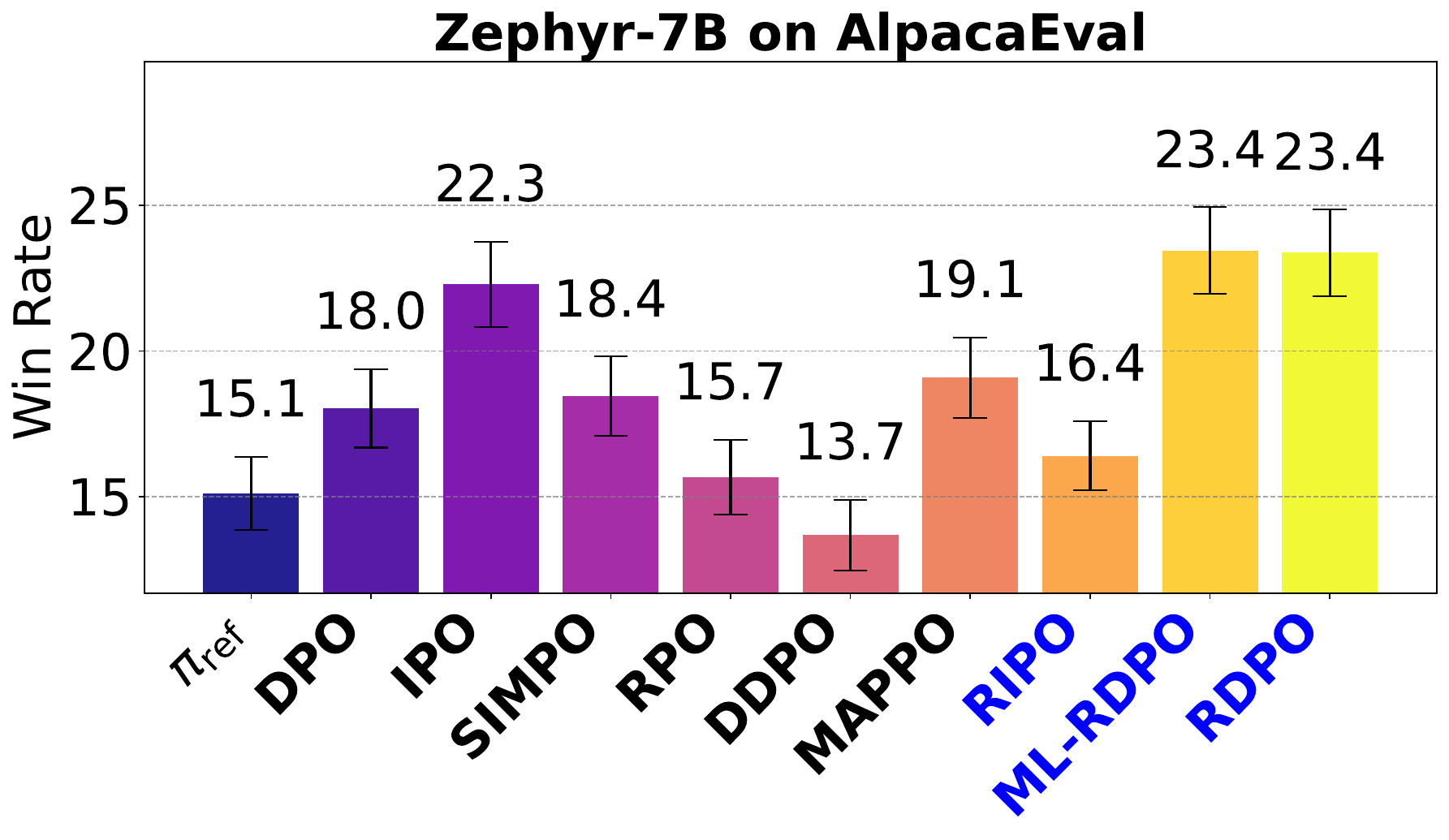}
    }
    &
    \subfloat{
        \includegraphics[width=0.45\textwidth]{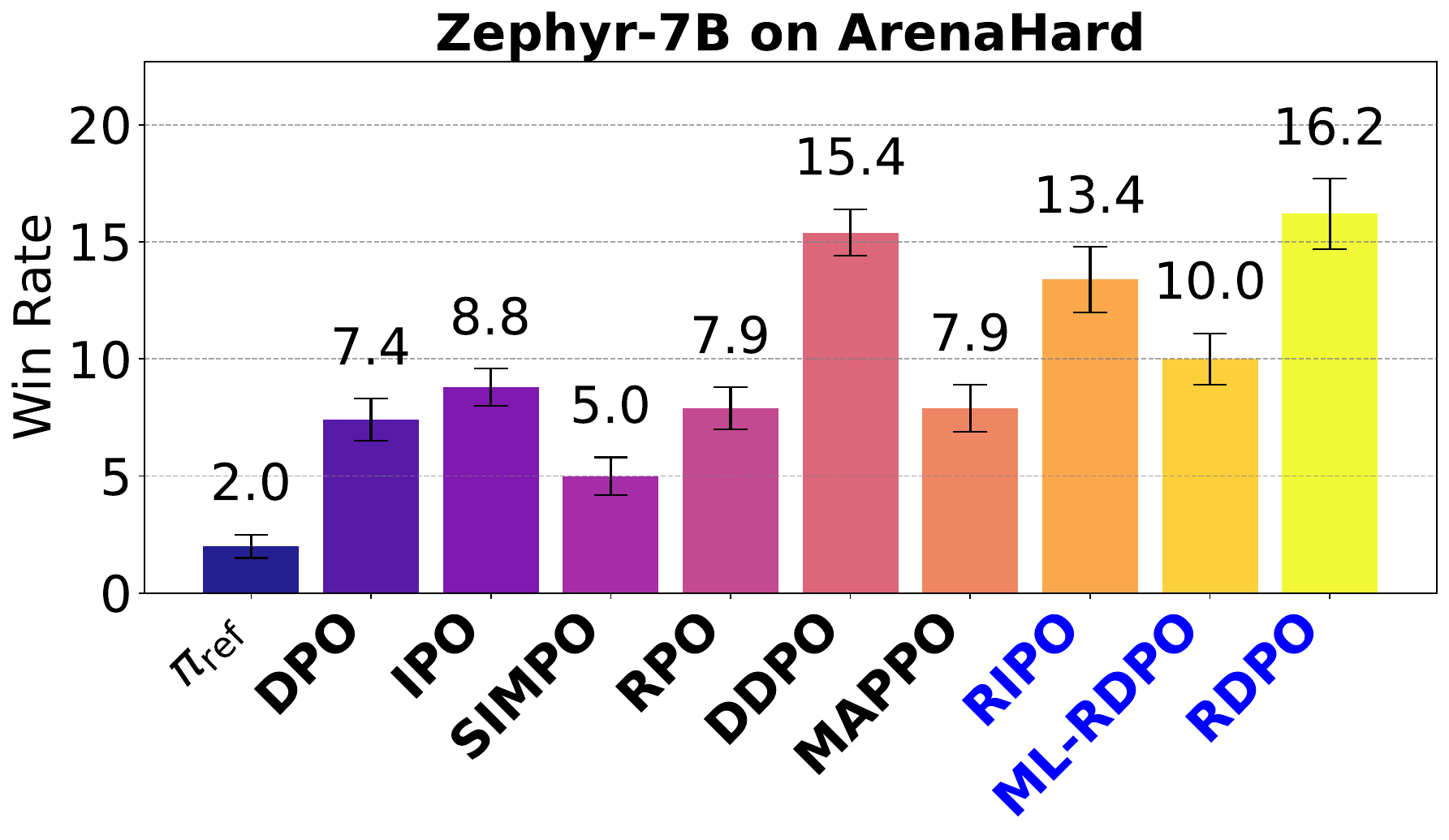}
    }
\end{tabular}
\caption{Win rates against GPT4, judged by Claude-Sonnet-3.5 v2, in AlpacaEval and ArenaHard.}
\label{fig:gpt4scores}
\end{figure*}

\section{Experiments}
\label{sec:experiments}

In this section, we empirically evaluate our proposed algorithms, RDPO, RIPO, and ML-RDPO, and demonstrate their solid performance in comparison to a number of DPO-style methods across a wide range of LLMs and evaluation benchmarks. The experiments verify our theoretical findings, especially in terms of robustness and faster convergence rate (acceleration). 


\paragraph{Experiments to assess acceleration.}
Here we aim at verifying the favorable convergence properties of our algorithms shown in Theorems~\ref{thm:ratings_dpo} and~\ref{thm:DPO+distilled_main}. We consider having access to high-quality rating information, $\hat{r}$, in the \texttt{ultrafeedback\_binarized} dataset, and compare our methods to a diverse set of baselines: DPO, IPO, and SIMPO (only use ranking information); DDPO~\citep{fisch2024robust} (only uses rating information); and finally RPO~\citep{adler2024nemotron} and MAPPO~\citep{lan2025mappo} (use both sources of information).


For most algorithms, we set the DPO hyperparameter as $\beta=0.1$ and learning rate to $10^{-6}$. For RDPO and ML-RDPO, we tune $\beta_1$ and $\mathbb{V}$ in Zephyr-7B, as shown in \Cref{tab:ablation7B} in Appendix~\ref{app:zephyr_ablation}, and keep their best values, $\beta_1 = 1/10$ and $\mathbb{V} = 1/100$, for the experiments with the other two models: Llama-3.1-8B and Mistral-7B. To obtain good results with DDPO, we reduced the learning rate to $10^{-7}$, while for IPO we used length normalization, larger $\beta$, i.e.,~$\beta=0.8$, and a smaller learning rate $10^{-7}$.

In \Cref{fig:gpt4scores}, we start the fine-tuning from each of the specified base models (Llama, Mistral, Zephyr), and then evaluate the fine-tuned checkpoints in terms of their win-rates against GPT-4, judged by Claude-Sonnet-3.5 v2, over {\em AlpacaEval} and {\em ArenaHard} benchmarks. RDPO performs consistently well, being the best performing method in $3$ out of the $6$ experiments, and the second or third best in the others. ML-RDPO is less consistent, but it is the best performing method for Mistral-7B and Zephyr-7B in AlpacaEval. RPO is the best for Llama-3.1-8B in AlpacaEval, but the performance is sub-optimal in the other cases. 
The other methods that are only based on either rating or ranking information are less efficient in all experiments. 

\paragraph{Experiments to assess robustness.}
Rating information is harder to collect and can be noisier than its ranking counterpart. Therefore, we evaluate the robustness of RDPO by corrupting the rating information in two different forms. In the first experiment, we swap the score of the chosen and discarded responses for a certain portion ($0\%$, $10\%$, $30\%$) of the training data. As can be seen in \Cref{fig:corruption_via_swaps}, when we use a small value $\beta_1=1/40$ (high trust in rating information), the performance of RDPO worsens quickly as the number of swaps is increased. On the other hand, a larger $\beta_1 = 1/10$ makes the performance of RDPO robust and (almost) unaffected by this corruption. We can draw a similar conclusion from our second experiment shown in \Cref{fig:corruption_via_noise}, where we add zero-mean Gaussian noise with increasing variance to the rating information (UltraFeedback scores). Overall, these experiments confirm our theoretical observation that no matter how ``bad'' the rating information is, i.e.,~for any value of $\mathrm{Err}(\hat{r})$, there exists a choice of $\beta_1$ such that RDPO is guaranteed to perform at least as well as DPO.
\begin{figure}
    \centering
\begin{tabular}{cc} 
\subfloat[Corruption via swaps, $\piref = \texttt{Llama-3.1-8B}$. \label{fig:corruption_via_swaps}]{
        \includegraphics[width=0.45\textwidth]{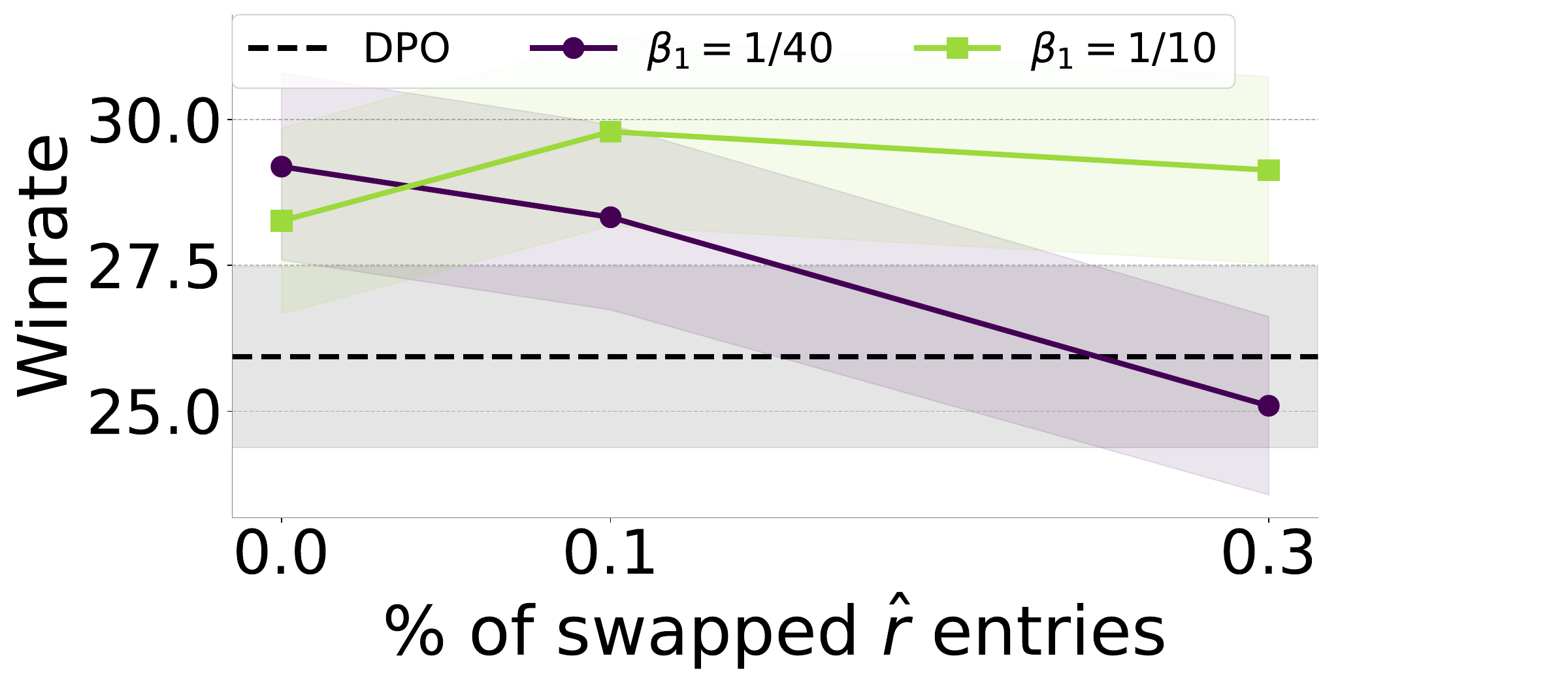}
    }
    &
    \subfloat[Corruption via noise, $\piref = \texttt{Llama-3.1-8B}$.\label{fig:corruption_via_noise}]{
        \includegraphics[width=0.45\textwidth]{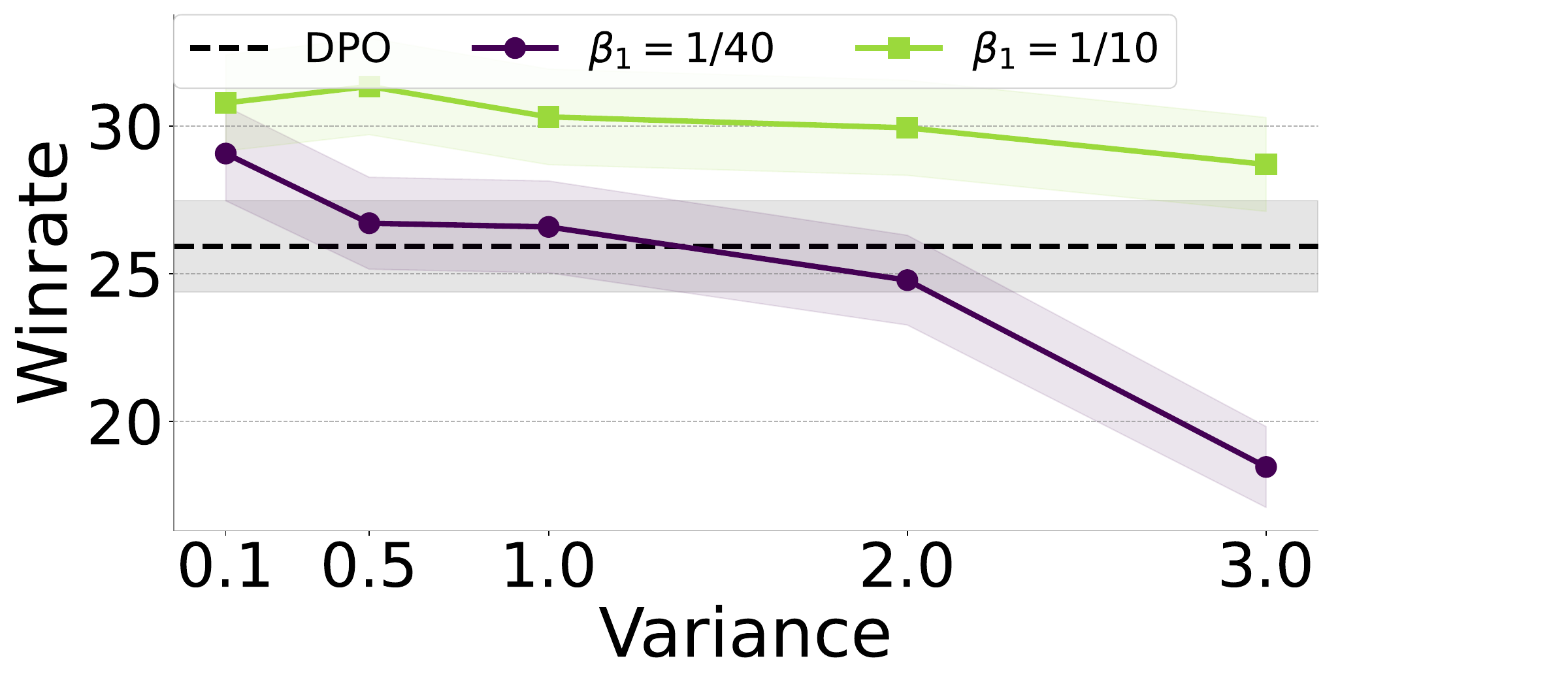}
    }
    \end{tabular}
    \caption{Robustness to inaccurate ratings experiments}
\end{figure}

\paragraph{Experiment with missing ratings.}
Given that collecting rating information is more complicated than ranking (preference) data, it is reasonable to expect that in a practical scenario, some data points come only with ranking labels and without the {\em rating gap} information.

\begin{wrapfigure}{r}{0.5\textwidth}
    \centering
   \begin{tabular}{c}
       \subfloat{%
\includegraphics[width=.95\linewidth]{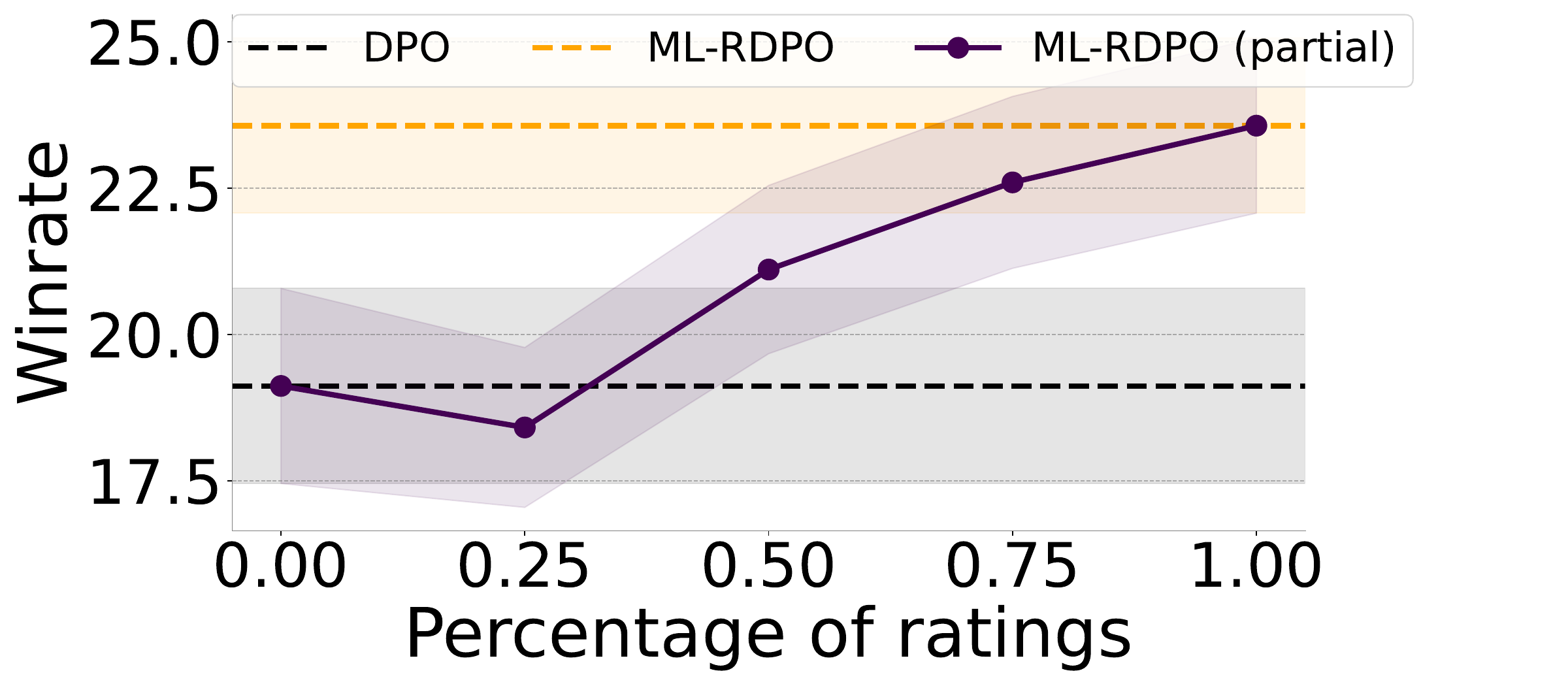}
        }
 \end{tabular}
  \caption{\label{fig:ml-rdpo-missing} ML-RDPO on \texttt{ultrafeedback} with partial ratings observation, $\piref = \texttt{Mistral-7B}$.}
\end{wrapfigure}
We set up an experiment in \Cref{fig:ml-rdpo-missing} to show the capability of ML-RDPO in handling training data with partially scored labels. In particular, we show that even when only $50\%$ of the training data contains rating information, ML-RDPO is preferable to DPO. Recall that, unlike RDPO, ML-RDPO does not need to have access to the rating gap for all the data-points in its training set. We provide the derivation of the ML-RDPO algorithm where some of the rating information is missing in \Cref{app:hetereogenous_data}, and the theoretical guarantees for this setting in \Cref{app:guarantees_hetero}.

\paragraph{Additional experiments.}
We report additional experiments in \Cref{app:experiments}. In particular, in \Cref{sec:battles}, we report the pairwise win rates between any pair of considered algorithms at 7/8B models. This serves as additional verification of the good performance of RDPO and ML-RDPO. In \Cref{sec:noisy}, we repeat the corruption experiments on 
other choices of $\piref$, which confirms the trend observed in \Cref{fig:corruption_via_noise}, especially using Mistral-7B as $\piref$. In \Cref{sec:SmolLM2}, we show that the gains from rating information are even more visible at smaller scales, using the SmolLM2 models~\citep{allal2025smollm2smolgoesbig} of sizes 135M, 360M, and 1.7B, as $\piref$. In particular, RDPO achieves win rates over $90\%$ in all these cases. Finally, in \Cref{app:exp_constrain}, we show that minimizing a modified RDPO loss, which encourages the restriction to the smaller policy class $\Pi'$  needed for the proof of \Cref{thm:ratings_dpo}, leads to improved results for Llama-3.1-8B and Mistral-7B as $\piref$. In particular, we add a piecewise linear penalty to encourage the implicit reward difference, $\beta \Delta^i_\theta - \beta \Delta^i_{\hat{r}}/\beta_1$, to lie in the interval $[-R_{\max}, R_{\max}]$.


\section{Conclusions}
\label{sec:conclu}

We studied how DPO-style algorithms can efficiently leverage additional information in the form of {\em rating gap}, which informs the learner how much the preferred response is better than the rejected one. We presented three algorithms, RDPO, RIPO, and ML-RDPO, that we derived using two different principled approaches. Our theoretical and empirical results show the better performance of our algorithms compared to methods that only use ranking information, when the rating information is accurate. We also showed that our algorithms are robust to noise in the rating information, and can still perform well in this situation if their relevant hyper-parameters, $\beta_1$ and $\mathbb{V}$, are tuned properly.  

A quick look at the losses of our proposed algorithms, we notice that they only leverage the rating gap, $\Delta^i_{\hat{r}}$, and not the individual ratings $\hat{r}(x_i, a^+_i)$ and $\hat{r}(x_i, a^-_i)$. This is positive because these algorithms can work with a weaker form of feedback (rating gap vs.~individual rating), but as a consequence, they may not be able to use all the available information when the individual ratings are available.  
We leave developing principled algorithms, possibly inspired by our methods, that can make full use of the individual ratings as an interesting open question for future work.  

\newpage
\bibliographystyle{plainnat}
\bibliography{references}
\newpage
\appendix

\section*{Appendix}
\startcontents[appendices]
\printcontents[appendices]{}{1}{}
\section{Extended Related Works discussion}
\label{app:related}
This section outlines important related works that have been omitted from the main text due to space limitations.
\paragraph{Learning from preferences and ratings}
The use of rating gaps has received little attention at the current stage, mainly because of the concern that the data collection becomes more complicated and inevitably noisier. However, there are some works that go against this belief and have shown practical benefit in using rating information. For example, \cite{adler2024nemotron} introduced RPO and used it for the post training of a large (340B) language model after an initial post-training phase that was carried out from preferences only via DPO. In their case, the ratings allowed to use a large dataset with 300K examples with a rather permissive quality filtering on the chosen response. Their conclusion is therefore that bad quality chosen ( over even worse ones ) responses do not negatively affect the performance since they are accompanied by a low rate. Albeit RPO is effective in practice, as confirmed by our implementation, it does not enjoy the performance guarantees attained by RDPO and ML-RDPO, and it is difficult to tune due to the non-smoothness of its loss function.

RPO has also been generalized in \cite{sun2025reward} to show that it can be seen as a general framework capturing various existing algorithms. 
Remaining on the algorithmic side, ratings information has been leveraged in deriving MAPPO \cite{lan2025mappo}, which modifies the DPO objective using the rating gap to modify the regularization parameter $\beta$ in front of the log ratio of the loser response.
As another example, \cite{chen2024noise} provides a reward-based generalization of DPO dubbed NCA (Noisy Contrastive Alignment), which ensures that the probability of the winning response does not decrease, while this can (and in fact it does, very often) happen in DPO.  However, it is questionable if such behavior is always desirable. For example, our intuition is that this should not happen if the rating of the winner's response is low. Moreover, notice that the same effect can be achieved by using an SFT term within an Online DPO scheme \cite{pang2024iterative}. In this work, the authors assume rating feedback in order to assign the roles of winner and loser responses in the preference datasets generated iteratively by Online DPO without the need to resort to an LLM as a judge. 
Finally, rating information has been used within a data augmentation technique \cite{zhang2024reward} to double the size of the preference dataset available for DPO training.

Moreover, access to ratings has been used in the context of alignment from multi preferences, i.e. when multiple preferred and dispreferred responses are available in \cite{gupta2025wmpo,guptaampo} which introduced MPO and W-MPO.  In passing, we notice that when only one preferred response and one rejected one is available MPO and W-MPO reduce to DPO. Interestingly, if the definition of $\Delta W_{\mathrm{abs}}$ is replaced with $\Delta W(y) = S(y) - S_{\mathrm{mean}}(x)$ and only two responses are available then W-MPO reduces to our Ratings DPO.
In a similar vein, REFA \cite{gupta2024refa} generalizes SIMPO \cite{meng2024simpo} to the case of multiple preferred and discared responses. \cite{gupta2024refa}  also introduced a version of REFA, dubbed W-REFA, which allows the use of rating gaps.

\paragraph{Learning from ratings only}
Several methods have focused on datasets containing only the rating gap information and developed methods to compute the soft optimal policy despite the computational hardness of the partition function.
Examples are Distilled DPO \cite{fisch2024robust} that studied in offline setting, the squared loss minimization suggested in \cite{gao2024rebel,gao2024regressing} in the context of online RLHF for single or multi-turn, respectively.
A similar approach is derived in \cite{cai2023ulma}.
\paragraph{Other variants of DPO with additional information} 
\cite{song2024hybrid} studied which kind of coverage conditions are necessary for DPO, IPO and in general purely offline alignment methods. Moreover, they relax the notion of coverage needed, resorting to online interaction with the environment but without invoking the preference oracle. This allows to conclude that even without collecting new preferences, online access to the environment is beneficial for DPO.
Similarly, \cite{chang2024dataset} studied preference optimization from the RLHF perspective when new trajectories ( from any desired initial prompt) can be collected.

Furthermore, \cite{liu2024provably} tackled the likelihood displacement problem and mitigated the issue by adding to the algorithm an imitation learning component that increases the likelihood of generating the responses observed in the dataset. \cite{cen2024value} introduced a similar regularization mechanism by changing the maximum likelihood estimation problem for the reward function. In particular, they look for the reward that maximizes the likelihood of the observed preferences and jointly makes the cumulative reward achievable by the best policy as high as possible. 
\paragraph{Provably learning from preferences}
Our work establishes some theoretical guarantees under single policy concentrability only. It is therefore related to the line of literature started by \cite{zhan2023provable} that studied the problem of offline alignment from a statistical learning viewpoint. The approaches in \cite{zhan2023provable, zhu2023principled} achieve indeed a single policy concentrability guarantee style. However, their algorithm is not implementable because it involves estimating confidence sets for the reward function and the transition dynamics in the multi-turn setting. Moreover, once the confidence sets are computed, they require solving a robust MDP problem \cite{iyengar2005robust,wiesemann2013robust} using the confidence sets as uncertainty sets. Although some efficient policy gradient methods \cite{li2023policy, kumar2025dual,viano2021robust,viano2022robust} and value-based methods \cite{zouitine2024solving,clavier2024near} have been developed, this approach is considered not scalable for LLMs.
Alternatively, \cite{li2023reinforcement} implements pessimism, implementing pessimistic value iteration via elementwise bonuses similar to what is done in the optimistic setting with UCBVI \cite{azar2017minimax}. While more elegant theoretically, this approach does not solve the scalability issue. A  more practical pessimistic approach is obtained in \cite{huang2024correcting} via $\chi^2$ regularization, which also serves as a source of inspiration for our analysis. 

\paragraph{Learning from preferences imposing a margin}
Our method is also related to all variants of DPO that imposes that the implicit reward of the chosen completion is higher that the discarded completion one by at least a fixed margin. Examples are IPO that imposes the margin to be \emph{exactly} $1/2$ via the squared loss, in the training of Llama2 \cite{touvron2023llama} they use three possible discrete values for the margin in the log sigmoid loss. Notice that the log sigmoid does not penalize the overestimation of the margin so it encourages the implicit reward difference to be at least as high as the margin but not to match the exact value. SimPO \cite{meng2024simpo} uses a fixed margin as in IPO but uses the log sigmoid loss instead of squares and neglects the effect of the reference policy in the definition of the implicit rewards.
RDPO presented in this document can be seen as a generalization of the above approaches using a margin which is prompt completions dependent and takes values in a continuous set.
\section{Generalizations of RDPO and ML-RDPO }
In this appendix, we present two important extensions on RDPO and ML-RDPO. First, we present a generalization that allows to change the KL regularizer to any $f$-divergence. In particular, using as regularizer the sum of KL and $\chi^2$ divergence allows to prove statistical guarantees under single policy concentrability.
Secondly, we show that we can handle disjoint datasets of rating and ranking. That is, we can apply our algorithm where the data are organized in two dataset: a classic ranking dataset $\mathcal{D}_{\mathrm{rank}}=\bc{x_i, a^+_i, a^-_i}^{N_{\mathrm{rank}}}_{i=1}$ and a rating only dataset $\mathcal{D}_{\mathrm{rat}}=\bc{x'_i, \tilde{a}_i, \bar{a}_i, \hat{r}(x'_i, \tilde{a}_i) - \hat{r}(x'_i, \bar{a}_i) }^{N_{\mathrm{rat}}}_{i=1}$.
We show that in this setting RDPO requires the intermediate step of estimating the unobserved ratings. That is, the rating gaps over the ranking dataset which are $\bc{ \hat{r}(x_i, a^+_i) -   \hat{r}(x_i,a^-_i) }$. 
Surprisingly, ML-RDPO avoids such step completely making it a more attractive alternative to handle this setting. We elaborate on this in \Cref{app:hetereogenous_data}. 
\subsection{Different choices for the regularizer in Eq.~\ref{eq:RLHF}}
\label{app:chi2PO}
A natural generalization of RDPO can be derived replacing the KL regularization in Eq.~\ref{eq:RDPO_RLHF} with other possible statistical divergences.
An important case, is considering regularization given by the sum of KL and $\chi^2$ divergence which gives the following RLHF-like problem
\begin{equation}
\argmax_{\pi\in\Pi} \innerprod{\pi}{\frac{\hat{r}_{BT}(x,a)}{\beta} + \frac{\hat{r}(x,a)}{\beta_1} } - D_{\mathrm{KL}}(\pi,\piref) -  D_{\chi^2}(\pi,\piref) \label{eq:RLHF_with_chi2}
\end{equation}
where $\hat{r}_{BT} \in \argmax_{r \in \mathcal{R}_\Pi} \sum^N_{i=1} \log \sigmoid(r(x_i,a_i^+) - r(x_i,a_i^-))$ is the Bradley-Terry reward estimator and the divergences (averaged w.r.t. the prompt distribution $\initial$), i.e.  $D_{\mathrm{KL}}(\pi,\piref)$, $D_{\chi^2}(\pi,\piref)$, are defined as follows $$D_{\mathrm{KL}}(\pi,\pi') = \sum_{x\in\X} \initial(x)\sum_{a\in \A} \pi(a|x) \log \frac{\pi(a|x)}{\pi'(a|x)}$$ and of the $\chi^2$-squared divergence defined as $$D_{\chi^2}(\pi,\pi') = \frac{1}{2}\br{\sum_{x\in\X} \initial(x)\sum_{a\in \A} \pi(a|x) \frac{\pi(a|x)}{\pi'(a|x)} - 1}.$$ It is easy to see that since $2x - 1\geq \log(x)$ holds for any $x > 0$, it also holds that for a fixed policy pair $\pi,\pi'$, we have that $D_{\chi^2}(\pi,\pi) \geq D_{\mathrm{KL}}(\pi,\pi')$. However, the KL term is still important to ensure that Eq.~\ref{eq:RLHF_with_chi2} admits the following closed form solution
\[
\piout(a|x) \propto  \piref(a|x)\phi_{\mathrm{KL}+\chi^2}^{-1}\br{ \frac{\hat{r}_{BT}(x,a)}{\beta} + \frac{\hat{r}(x,a)}{\beta_1} -\zeta(x)}.
\]
where we used $\phi_{\mathrm{KL}+\chi^2}(x) : = \log(x) + x$. The next Lemma shows that, when the reward class is parametrized by the policy parameters, the computation of $\piout$ can be carried out without an explicit estimation of $\hat{r}_{BT}$.
\begin{algorithm}[t]
\caption{ RDPO with arbitrary regularizer\label{alg:ratingsDPO}}
\begin{algorithmic}[1]
\STATE \textbf{Input:} Datasets $\mathcal{D} = \bc{(x_i, a_i, a'_i, z_i, \Delta^i_{\hat{r}})}$, $\beta'$, $\alpha$, Policy class $\Pi'$, convex increasing function $\phi:\mathbb{R}\rightarrow\mathbb{R}$.
\STATE Set $\beta = \frac{\beta'}{1-\alpha}$, $\beta_1 = \frac{\beta'}{\alpha}$.
\STATE Let $\Delta_{\phi \pi/\piref}(x,a,b) :=  \phi \br{\frac{\pi(a|x)}{\piref(a|x)}} -  \phi \br{\frac{\pi(b|x)}{\piref(b|x)}}$
\STATE \textbf{Return:} $\pi_{\mathrm{out}} \in \argmin_{\pi \in \Pi'} \sum^{N}_{i=1} - \log  \sigma \bigg ( \beta \Delta_{\phi \pi/\piref}(x_i, a_i, a'_i)- \frac{\beta}{\beta_1} \Delta^i_{\hat{r}}\bigg) $.
\end{algorithmic}
\end{algorithm}
\begin{lemma}\textbf{Generalization of RDPO}.\label{lem:gen_rdpo}
Let us consider an increasing invertible function $\phi:\mathbb{R}^+ \rightarrow \mathbb{R}$ and, given an arbitrary policy class $\Pi$, let us define the reward class $\mathcal{R}_{\Pi}$   defined as 
\begin{equation}
\mathcal{R}_\Pi = \bc{ r: ~~\text{s.t.}~~ \exists \pi \in \Pi, \zeta:\X\rightarrow\mathbb{R} ~~~~ r(x,a) = \beta \phi\br{\frac{\pi(a|x)}{\piref(a|x)}} - \frac{\beta}{\beta_1}\hat{r}(x,a) + \beta \zeta(x) }, \label{eq:general_R_pi}  
\end{equation}
and the Bradley-Terry reward estimate $\hat{r}_{BT}$ computed as
\[
\hat{r}_{BT} = \argmax_{r \in \mathcal{R}_\Pi} \sum^N_{i=1} \log \sigma( r(x_i,a^+_i) - r(x_i, a^-_i)).
\]
Moreover, let us consider $\piout$ defined as
\[
\piout(a|x) \propto \piref(a|x)\phi^{-1}\br{ \frac{\hat{r}_{BT}(x,a)}{\beta} + \frac{\hat{r}(x,a)}{\beta_1} -\zeta(x)}.
\]
Then, $\piout$ is in the solution set of the following optimization problem, which corresponds to a generalization of $\mathcal{L}_{RDPO}$. In formulas, we have
 \begin{equation}
 \piout \in \argmin_{\pi\in \Pi}  \sum^N_{i=1} - \log \sigma \br{\beta\Delta^i_{\phi \pi / \piref} - \frac{\beta}{\beta_1} \Delta^i_{\hat{r}}},
 \end{equation}
 where $\Delta_{\phi \pi/\piref}^i = \Delta_{\phi \pi/\piref}(x_i,a_i,a'_i) =  \phi \br{\frac{\pi(a_i|x_i)}{\piref(a_i|x_i)}} -  \phi \br{\frac{\pi(a'_i|x_i)}{\piref(a'_i|x_i)}}$.
\end{lemma}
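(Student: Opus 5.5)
The plan is a change of variables that turns the reward-space maximum-likelihood problem into a policy-space problem, in the same way the DPO loss is derived from \eqref{eq:RM}. The key observation is that every element of $\mathcal{R}_\Pi$ enters the Bradley--Terry objective only through differences $r(x_i,a^+_i)-r(x_i,a^-_i)$. In such a difference the prompt-dependent term $\beta\zeta(x)$ cancels, so for $r$ generated by $(\pi,\zeta)$ we have
\[
r(x_i,a^+_i)-r(x_i,a^-_i)=\beta\Delta^i_{\phi\pi/\piref}-\frac{\beta}{\beta_1}\Delta^i_{\hat r}.
\]
Up to the orientation convention between $(a_i,a'_i)$ and $(a^+_i,a^-_i)$, which is handled through $z_i$ exactly as in \eqref{eq:DPO}, the objective $\sum_i\log\sigma(r(x_i,a^+_i)-r(x_i,a^-_i))$ evaluated at such an $r$ therefore equals minus the generalized RDPO loss evaluated at $\pi$.

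\emph{First step.} I would write $\ell(r)$ for the BT log-likelihood and $L(\pi)$ for the generalized RDPO loss. The identity above gives $\ell(r_{\pi,\zeta})=-L(\pi)$ for every $\pi\in\Pi$ and every $\zeta$. Since the map $(\pi,\zeta)\mapsto r_{\pi,\zeta}$ is onto $\mathcal{R}_\Pi$ by definition, it follows that
\[
\max_{r\in\mathcal{R}_\Pi}\ell(r)=-\min_{\pi\in\Pi}L(\pi).
\]
Moreover, $r$ is a maximizer of $\ell$ exactly when one (equivalently, any) of its generating pairs $(\pi,\zeta)$ has $\pi$ minimizing $L$.

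\emph{Second step.} I would show that $\piout$ is precisely a policy generating $\hat r_{BT}$. Because $\hat r_{BT}\in\mathcal{R}_\Pi$, there exist $\hat\pi\in\Pi$ and $\hat\zeta$ with
\[
\hat r_{BT}=\beta\phi(\hat\pi/\piref)-\frac{\beta}{\beta_1}\hat r+\beta\hat\zeta.
\]
Rearranging gives $\hat r_{BT}/\beta+\hat r/\beta_1-\hat\zeta=\phi(\hat\pi/\piref)$. Since $\phi$ is increasing and invertible, this yields $\hat\pi(a|x)=\piref(a|x)\phi^{-1}\bigl(\hat r_{BT}/\beta+\hat r/\beta_1-\hat\zeta(x)\bigr)$, which is the displayed formula for $\piout$ with $\zeta=\hat\zeta$. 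By the first step, $\hat\pi$ minimizes $L$, so $\piout\in\argmin_{\pi\in\Pi}L(\pi)$.

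\emph{Main obstacle.} The subtle point is the role of $\zeta(x)$ and the proportionality sign. For a non-exponential $\phi$, the normalization of $\piout$ is not a multiplicative constant. I would therefore interpret the ``$\propto$'' as saying that $\zeta(x)$ is chosen so that $\piout(\cdot|x)$ sums to one. The lemma's statement leaves $\zeta$ free, and I would make that choice explicit by taking $\zeta=\hat\zeta$ inherited from the representation of $\hat r_{BT}$; this is what makes the expression a valid policy, namely $\hat\pi\in\Pi$. I would also note that if $\hat r_{BT}$ has several representations, any of them works, because all generating policies of a maximizer minimize $L$.
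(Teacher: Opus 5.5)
Your proposal is correct and uses the same change-of-variables argument as the paper: the prompt-dependent shift $\beta\zeta(x)$ cancels in the reward differences, so the Bradley--Terry objective over $\mathcal{R}_\Pi$ coincides with the generalized RDPO loss over $\Pi$. Your second step identifies $\piout$ as the policy generating $\hat r_{BT}$, with $\zeta$ taken as the normalizing shift; since $\phi^{-1}$ is strictly increasing, that shift is unique and hence equals $\hat\zeta$. This step makes explicit something the paper leaves implicit after equating the optimal values.
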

For $\phi(x) = \log(x)$, we recover the original DPO loss.
For $\phi(x) = \phi_{\mathrm{KL} + \chi^2}(x) := x + \log(x)$ we obtain an RDPO version with an additional $\chi^2$ regularization term, which has been shown useful in \cite{huang2024correcting} to recover better theoretical results. Moreover, this generalization allows for rating gaps informed versions of $f$-DPO \cite{wang2023beyond}.
\begin{proof}
Let us consider the log likelihood maximization of the ranking under the Bradley-Terry model ( see Eq.~\ref{eq:RM}) and let us choose as reward class $\mathcal{R}$ the following class 
\[
\mathcal{R}_\Pi = \bc{ r: ~~\text{s.t.}~~ \exists \pi \in \Pi, \zeta:\X\rightarrow\mathbb{R} ~~~~ r(x,a) = \beta \phi\br{\frac{\pi(a|x)}{\piref(a|x)}} - \frac{\beta}{\beta_1}\hat{r}(x,a) + \beta\zeta(x) }.\]
We can perform a change of variable to obtain the following equality,
\begin{align*}
& \min_{r \in \mathcal{R}_{\Pi}} \sum^N_{i=1} - \log \sigma \br{
r(x_i, a^+_i) - r(x_i, a^-_i)
} \\&= \min_{\pi \in \Pi} - \sum^N_{i=1} \log \sigma \bigg(
\beta \phi\br{\frac{\pi(a^+_i|x_i)}{\piref(a^+_i|x_i)}} - \beta \phi\br{\frac{\pi(a^-_i|x_i)}{\piref(a^-_i|x_i)}} - \frac{\beta}{\beta_1}\br{\hat{r}(x_i, a^+_i) - \hat{r}(x_i, a^-_i)} \\
& \quad \quad \quad \quad \quad \quad - \beta\zeta(x) + \beta\zeta(x)
\bigg) \\
&= \min_{\pi \in \Pi} \sum^N_{i=1} - \log \sigma \br{
\beta \phi\br{\frac{\pi(a^+_i|x_i)}{\piref(a^+_i|x_i)}} - \beta\phi\br{\frac{\pi(a^-_i|x_i)}{\piref(a^-_i|x_i)}} - \frac{\beta}{\beta_1}\br{\hat{r}(x_i, a^+_i) - \hat{r}(x_i, a^-_i)}
} \\
&= \min_{\pi \in \Pi} \sum^N_{i=1} - \log \sigma \br{
\beta \Delta^i_{\phi \pi /\piref} - \frac{\beta}{\beta_1} \Delta^i_{\hat{r}}
},
\end{align*}
which is the loss we report in \Cref{alg:ratingsDPO}.
\end{proof}

Moreover, we have the following generalization of ML-RDPO.

\paragraph{Generalization of ML-RDPO}
Using the reward class $\mathcal{R}_\Pi$ for a general $\phi$ given in \cref{eq:general_R_pi} as constraint for the following optimization problem 
$\tilde{r} = \argmax_{r\in\mathcal{R}_\Pi} \log P^{\mathrm{rat,rank}}_r( \mathcal{D})$ 
and using the conditional independence assumption and Gaussianity of the ratings gap as in the ML-RDPO derivation, as done in the proof of \Cref{lemma:ml-rdpo_derivation}, we arrive at the following minimization problem to compute the output policy,
\begin{align}
\piout \in \argmin_{\pi\in\Pi} \sum^{N}_{i=1} - \log \sigma (\Delta^i_{\phi \pi/\piref}) + (2\mathbb{V})^{-1}\br{\Delta^i_{\hat{r}} - \Delta^i_{\phi \pi/\piref}}^2. \label{eq:ML_RDPO_phi}
\end{align}
Again, the only difference is that $\Delta^i_{\phi \pi/\piref}$ replaces the log gaps $\Delta^i_{\log \pi/\piref}$.
\subsection{Extension to heterogeneous datasets and missing ratings}
\label{app:hetereogenous_data}
In this section, we explain how RDPO and ML-RDPO can be extended to the case where the rating and ranking datasets are only partly overlapping or even disjoint. As previously mentioned, we denote the ranking dataset as $\mathcal{D}_{\mathrm{rank}}=\bc{x_i, a^+_i, a^-_i}^{N_{\mathrm{rank}}}_{i=1}$ and the rating only dataset $\mathcal{D}_{\mathrm{rat}}=\bc{x'_i, \tilde{a}_i, \bar{a}_i, \hat{r}(x'_i, \tilde{a}_i) - \hat{r}(x'_i, \bar{a}_i) }^{N_{\mathrm{rat}}}_{i=1}$.
An important special case is represented by the setting where the ranking is available for all data pairs in the data set, but the rating is present only on a portion of it. Such a situation is quite likely to happen since rating gaps are more costly to obtain. 
\paragraph{RDPO for heterogeneous data}
In order to apply RDPO to this setting we need to maintain a rating estimator $\hat{r}_{\mathrm{rat}}$ computed via least square regression using the available observations $\bc{\hat{r}(x'_i, \tilde{a}_i) - \hat{r}(x'_i, \bar{a}_i) }^{N_{\mathrm{rat}}}_{i=1}$ as regression targets.
In particular, as a first step, we compute,
\[
\hat{r}_{\mathrm{rat}} = \argmin_{r \in \mathcal{R}} \sum^{N_{\mathrm{rat}}}_{i=1} (r(x'_i, \tilde{a}_i) - r(x'_i, \bar{a}_i) - \hat{r}(x'_i, \tilde{a}_i) + \hat{r}(x'_i, \bar{a}_i) )^2
\]
At this point, we can use RDPO to evaluate the reward model on the ranking dataset. This gives \Cref{alg:RDPO_heterogenous}.
\begin{algorithm}[t]
\caption{RDPO for heterogeneous data \label{alg:RDPO_heterogenous}}
\begin{algorithmic}[1]
\STATE \textbf{Inputs:} \begin{itemize} \item $\phi(x) = \gamma x + \log(x)$, $\beta$, $\alpha$. 
\item Policy class $\Pi'$ such that $\pi\in\Pi'$ implies $\abs{\beta\Delta_{\phi \pi/\piref} - \beta/\beta_1 \cdot \Delta_{\hat{r}_{\mathrm{rat}}}}\leq R_{\max}-R_{\min}$, \item Datasets:
\begin{itemize}
    \item $\mathcal{D}_{\mathrm{rank}}=\bc{x_i, a^+_i, a^-_i}^{N_{\mathrm{rank}}}_{i=1}$
    \item $\mathcal{D}_{\mathrm{rat}}=\bc{x'_i, \tilde{a}_i, \bar{a}_i, \hat{r}(x'_i, \tilde{a}_i) - \hat{r}(x'_i, \bar{a}_i) }^{N_{\mathrm{rat}}}_{i=1}$
\end{itemize}
\end{itemize}
\STATE Set $\beta' = \beta (1-\alpha)$ and $\beta_1 = \frac{\beta'}{\alpha}$.
\STATE Let us denote \begin{itemize}
    \item $\Delta_{\phi \pi / \piref}(x,a,b) = \beta \phi \br{\frac{\pi(a|x)}{\piref(a|x)}} - \beta \phi \br{\frac{\pi(b|x)}{\piref(b|x)}}$
    \item $\Delta^i_{\phi \pi / \piref} = \Delta_{\phi \pi / \piref}(x_i, a^+_i, a^-_i)$
\end{itemize}
\STATE Compute the reward estimator based on ratings
\[
\hat{r}_{\mathrm{rat}} = \argmin_{r \in \mathcal{R}} \sum^{N_{\mathrm{rat}}}_{i=1} (r(x'_i, \tilde{a}_i) - r(x'_i, \bar{a}_i) - \hat{r}(x'_i, \tilde{a}_i) + \hat{r}(x'_i, \bar{a}_i) )^2
\]
\STATE Set $\Delta^i_{\hat{r}_{\mathrm{rat}}} = \hat{r}_{\mathrm{rat}}(x_i,a^+_i) - \hat{r}_{\mathrm{rat}}(x_i,a^-_i)$ for all $i\in[N_{\mathrm{rank}}]$
\STATE \textbf{Return:} $\pi_{\mathrm{out}}$ such that $\piout \in \argmax_{\pi\in\Pi'} \sum^{N_{\mathrm{rank}}}_{i=1} \log \sigma \bigg ( \beta \Delta^i_{\phi \pi / \piref}- \frac{\beta}{\beta_1} \Delta^i_{\hat{r}_{\mathrm{rat}}}\bigg) $.
\end{algorithmic}
\end{algorithm}
\paragraph{ML-RDPO for heterogeneous data}
ML-RDPO is particularly convenient in the heterogeneous case because it avoids the least square regression problem for computing $\hat{r}_{\mathrm{rat}}$.
For this derivation, we assume that the datasets $\mathcal{D}_{\mathrm{rank}}$
and $\mathcal{D}_{\mathrm{rat}}$ are independent. Therefore $P^\mathrm{rat,rank}_r( \mathcal{D}_{\mathrm{rat}},  \mathcal{D}_{\mathrm{rank}}) = P^{\mathrm{rat}}_r( \mathcal{D}_{\mathrm{rat}})  P^{\mathrm{rank}}_r(\mathcal{D}_{\mathrm{rank}}) $. Under this condition, we can rewrite the joint maximum likelihood problem as follows for any reward class $\mathcal{R}$
\begin{align*}
\tilde{r} &= \argmax_{r\in\mathcal{R}} \log \br{P^{\mathrm{rank}}_r(\mathcal{D}_{\mathrm{rank}}) P^{\mathrm{rat}}_r( \mathcal{D}_{\mathrm{rat}})  }  \\
&= \argmax_{r\in\mathcal{R}} \log \br{P^{\mathrm{rank}}_r\br{\bc{x_i, a_i^{+}, a_i^{-}}^{N_{\mathrm{rank}}}_{i=1}} P^{\mathrm{rat}}_r\br{\bc{x'_i,\tilde{a}_i, \bar{a}_i, \Delta^i_{\hat{r}}}^{N_{\mathrm{rat}}}_{i=1}}  } \\
&= \argmax_{r\in\mathcal{R}}  \sum^{N_{\mathrm{rank}}}_{i=1} \log P^{\mathrm{rank}}_r\br{ z_i | x_i, a_i, a'_i} + \sum^{N_{\mathrm{rat}}}_{i=1} \log P^{\mathrm{rat}}_r\br{ \Delta^i_{\hat{r}} | x'_i,  \tilde{a}_i, \bar{a}_i} 
\end{align*}
Now, plugging the Bradley-Terry model with reward $r \in \mathcal{R}$ as probability law $P^{\mathrm{rank}}_r$ and the Gaussian density of variance $\mathbb{V}$ and mean $\Delta^i_{r}:= r(x_i', \tilde{a}_i) - r(x_i', \bar{a}_i) $ for  $P^{\mathrm{rat}}_r$, we obtain the following loss
\begin{equation*}
\argmin_{r\in\mathcal{R}} 
\sum^{N_{\mathrm{rank}}}_{i=1} - \log \sigma( r(x_i,a_i^+) - r(x_i,a_i^-))  + \sum^{N_{\mathrm{rat}}}_{i=1} \br{ \Delta^i_{\hat{r}} -r(x'_i,  \tilde{a}_i) + r(x'_i, \bar{a}_i)}^2. 
\end{equation*}
Then, choosing the reward class 
\[
\mathcal{R} = \bc{ r ~~|~~ \exists \pi \in \tilde{\Pi} ~~\text{s.t.}~~r(x,a) = \beta \phi( \pi(a|x) /\piref(a|x)) },
\]
we can perform the change of variable from $r$ to $\beta \phi( \pi /\piref)$
\begin{equation}
\argmin_{\pi\in \tilde{\Pi}} 
\sum^{N_{\mathrm{rank}}}_{i=1} - \log \sigma( \Delta_{\phi \pi/\piref}(x_i,a_i^+,a_i^-))  + \sum^{N_{\mathrm{rat}}}_{i=1} \br{ \Delta^i_{\hat{r}} - \Delta_{\phi \pi/\piref}(x'_i,\tilde{a}_i,\bar{a}_i)}^2 \label{eq:ML_RDPO_het}
\end{equation}
which is the ML-RDPO loss but with the sums taken on different set of data. 
\section{Omitted Proofs}
\label{app:proofs}
This section contains the technical proofs of all the results included in the main text.
\subsection{Proof of \Cref{lemma:ml-rdpo_derivation}}
\begin{proof}
The key idea of the proof is to use the conditional independence and Gaussian assumption to rewrite the joint likelihood in  a simpler, computationally attractive fashion.
\begin{align*}
\tilde{r} &= \argmax_{r\in\mathcal{R}} \sum^N_{i=1}\log \mathbb{P}(x_i, a_i, a'_i, \Delta^i_{\hat{r}}, z_i) \\ &= \argmax_{r\in\mathcal{R}} \sum^N_{i=1}\log P^\mathrm{rat,rank}_r(\Delta^i_{\hat{r}}, z_i | x_i, a_i, a'_i) \mathbb{P}(x_i, a_i, a'_i) \\
 &= \argmax_{r\in\mathcal{R}} \sum^N_{i=1}\log P^\mathrm{rat,rank}_r(\Delta^i_{\hat{r}}, z_i | x_i, a_i, a'_i) ~~~~~ \text{(Using conditional independence)}\\
&= \argmax_{r\in\mathcal{R}} \sum^{N}_{i=1} \log P^{\mathrm{rank}}_r\br{ z_i | x_i, a_i, a'_i} + \sum^{N}_{i=1}\log P^{\mathrm{rat}}_r\br{ \Delta^i_{\hat{r}} | x_i, a_i, a'_i} 
\end{align*}
At this point, we model $P^{\mathrm{rat}}_r(\Delta^i_{\hat{r}} | x_i, a_i, a'_i)$ as a gaussian random variable of variance $\mathbb{V}$ and mean $ r(x_i, a_i) - r( x_i,a'_i)$, That is, we set $P^{\mathrm{rat}}_r(\Delta^i_{\hat{r}}|x,a_i,a'_i) = \frac{1}{\sqrt{2 \pi \mathbb{V}}}\exp\br{- \frac{(\Delta^i_{\hat{r}} - \Delta^i_r)^2}{2\mathbb{V}}}$.
Then, using the assumption that $P^{\mathrm{rank}}_r$ follows the Bradley-Terry model. All in all, we arrive at the following maximization problem
\[
\tilde{r} = \argmax_{r\in\mathcal{R}} \sum^{N}_{i=1} \log \sigma ( r(x_i, a^+_i) - r(x_i, a^-_i)) - \frac{\sum^{N}_{i=1} \br{r(x_i, a_i) - r(x_i, a'_i) -\Delta^i_{\hat{r}}}^2}{2\mathbb{V}}.
\]
Moreover, noticing that no matter if $a_i = a^+_i$ or $a'_i = a^+_i$ the squared loss term does not change by symmetric around $0$, we obtain that
\[
\tilde{r} = \argmax_{r\in\mathcal{R}} \sum^{N}_{i=1} \log \sigma ( r(x_i, a^+_i) - r(x_i, a^-_i)) - \frac{\sum^{N}_{i=1} \br{r(x_i, a^+_i) - r(x_i, a^-_i) -\Delta^i_{\hat{r}}}^2}{2\mathbb{V}}.
\]
Replacing the definition $\Delta^i_r = r(x_i, a^+_i) - r(x_i, a^-_i)$  concludes the proof.
\end{proof}
\subsection{Proof of \Cref{thm:ratings_dpo} (RDPO guarantees)}
\label{app:proof_rdpo_main}
Here, we prove our main result concerning RDPO.
\RDPO*
\begin{proof}
    Let us introduce the notation $\alpha = \br{1 + \mathrm{Err}(\hat{r}) / \mathrm{Err_{DPO}}(N,\delta)}^{-1}$.
    Notice that under this notation, we have that the choice for $\beta_1$ can be rewritten as 
    $\beta_1 = \frac{1-\alpha}{\alpha}\beta$.
    Moreover, let us define $\beta' =(1-\alpha) \beta$ and the implicit reward
    \[
r_{\mathrm{out}}(x,a) = \frac{\beta'}{1-\alpha}\log\br{\frac{\piout(a|x)}{\piref(a|x)}} - \frac{\alpha}{1-\alpha} \hat{r}(x,a).
\]
Then, by \Cref{lemma:dpo_rlhf_duality}, we have that $\piout$ is the solution of the following RLHF like problem
\begin{align}
\pi_{\mathrm{out}} = \argmax_{\pi\in\Pi}\innerprod{\pi}{(1-\alpha)r_{\mathrm{out}} + \alpha \hat{r}} -\beta' D_{\mathrm{KL}}(\pi,\piref)
\label{eq:optimality}
\end{align}
Then, let us define $\bar{r} = (1-\alpha) r_{\mathrm{out}} + \alpha \hat{r}$ . Then, by the fact that $\piout$ is a maximizer of the problem in~\eqref{eq:optimality}, we obtain that 
\[
\innerprod{\pi^\star}{\bar{r}} - \beta' D_{\mathrm{KL}}(\pi^\star,\pi_{\mathrm{ref}}) 
\leq \innerprod{\pi_{\mathrm{out}}}{\bar{r}} - \beta' D_{\mathrm{KL}}(\pi_{\mathrm{out}},\pi_{\mathrm{ref}})
\]
Therefore, we notice that
\begin{align*}
    \innerprod{\pi^\star - \piout}{r^\star} &= \innerprod{\pi^\star - \piout}{\bar{r}} + \innerprod{\pi^\star - \piout}{r^\star-\bar{r} } \\
    &\leq \beta' D_{\mathrm{KL}}(\pi^\star,\pi_{\mathrm{ref}}) - \beta' D_{\mathrm{KL}}(\pi_{\mathrm{out}},\pi_{\mathrm{ref}}) + \innerprod{\pi^\star - \piout}{r^\star-\bar{r} }.
\end{align*}
Rearranging implies that 
\begin{align*}
J_{\beta'}(\pi^\star; r^\star) &- J_{\beta'}(\piout;r^\star) \leq \innerprod{\pi^\star - \piout}{r^\star-\bar{r} } \\
&= \sum_{x\in\X}\initial(x)\sum_{a\in \A} \sum_{b\in \A} (\pi^\star(a|x) - \piout(a|x)) \pidata(b|x) \br{r^\star(x,a)-\bar{r}(x,a) - r^\star(x,b) + \bar{r}(x,b) }
\end{align*}
Then, notice that for any policy $\pi$ we can perform the following change of measure where $f(x,a)$ stands for $\sum_{b\in\A}\pidata(b|x)\br{r^\star(x,a)-\bar{r}(x,a) - r^\star(x,b) + \bar{r}(x,b) }$
\begin{align*}
\sum_{x\in \X} \initial(x) \sum_{a\in \A} \pi(a|x) f(x,a) &=
\sum_{x\in \X} \initial(x) \sum_{a\in \A} \frac{\pi(a|x)}{\sqrt{\pi_{\mathrm{data}}(a|x)} } \sqrt{\pi_{\mathrm{data}}(a|x)}  f(x,a) \\
&\leq \sqrt{\sum_{x\in \X} \initial(x) \sum_{a\in \A} \frac{\pi^2(a|x)}{\pi_{\mathrm{data}}(a|x)}} \sqrt{\sum_{x\in \X} \initial(x) \sum_{a\in \A} \pi_{\mathrm{data}}(a|x)  f^2(x,a) } \\
&\leq \sqrt{C^\pi \mathrm{Err}(\bar{r})} 
\\
&\leq \sqrt{C^{\max} \mathrm{Err}(\bar{r})} ,
\end{align*}
where we recall that
\[
\mathrm{Err}(\bar{r}) := \mathbb{E}_{\!\!\!\!\!\substack{x \sim \initial\\a,a'\sim\pidata(\cdot|X)}}\big[\big(\Delta_{r^\star}(x,a,a') - \Delta_{\bar{r}}(x,a,a') \big)^2\big].
\]
Therefore, we can upper bound $\innerprod{\pi^\star - \piout}{r^\star-\bar{r}} \leq 2\sqrt{C^{\max} \mathrm{Err}(\bar{r})}$ and therefore,
\[
J_{\beta'}(\pi^\star; r^\star) - J_{\beta'}(\piout; r^\star) \leq 2\sqrt{C^{\max} \mathrm{Err}(\bar{r})}
\]
At this point, invoking the auxiliary error decomposition proven in \Cref{lemma:error_decomp}, we have that
    \[
\mathrm{Err}(\bar{r}) \leq \frac{2\mathrm{Err_{DPO}} (N,\delta) \mathrm{Err} (\hat{r})}{\mathrm{Err_{DPO}} (N,\delta) + \mathrm{Err} (\hat{r})}.
    \]
    The proof is then concluded noticing that
    \[
    \beta' = (1-\alpha) \beta = \frac{\beta_1}{\beta + \beta_1} \beta 
    \]
    where we use that the definition of $\alpha$ implies that $1-\alpha =  \frac{\beta_1}{\beta + \beta_1} $. All in all this implies that for any policy $\pi$, it holds that $J_{\beta'}(\pi; r^\star) = J_{\frac{\beta\beta_1}{\beta + \beta_1}}(\pi; r^\star)$.
\end{proof}
The next step towards the proof of \Cref{thm:ratings_dpo} is to decompose the generalization error of the mixed reward $\bar{r}$ into a contribution depending only on the ratings gap-based estimator and a second contribution depending only on the estimator learned via ranking.
\begin{lemma}{\textbf{Error Decomposition}} \label{lemma:error_decomp}
Let the policy class $\Pi'$ and the errors $\mathrm{Err_{DPO}}(N,\delta)$ and $\mathrm{Err}(\hat{r})$ be defined as in \Cref{thm:ratings_dpo}, then choosing $\alpha = \br{1 + \mathrm{Err}(\hat{r}) / \mathrm{Err_{DPO}}(N,\delta)}^{-1}$ it holds that
\[
\mathrm{Err}(\bar{r}) \leq 2\br{\mathrm{Err_{DPO}}^{-1}(N,\delta) + \mathrm{Err}^{-1}(\hat{r})  }^{-1} \leq 2 \min \bc{\mathrm{Err_{DPO}}(N,\delta), \mathrm{Err}(\hat{r})  }
\]

\end{lemma}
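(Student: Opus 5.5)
The plan is to use the fact that $\bar{r} = (1-\alpha) r_{\mathrm{out}} + \alpha \hat{r}$ is a convex combination. This lets the reward-gap error split into a ranking part and a rating part:
\[
\Delta_{r^\star} - \Delta_{\bar{r}} = (1-\alpha)\br{\Delta_{r^\star} - \Delta_{r_{\mathrm{out}}}} + \alpha\br{\Delta_{r^\star} - \Delta_{\hat{r}}}.
\]
Squaring, applying $(u+v)^2 \le 2u^2 + 2v^2$, and taking expectations over $x\sim\initial$ and $a,a'\sim\pidata$ gives
\[
\mathrm{Err}(\bar{r}) \le 2(1-\alpha)^2\,\mathrm{Err}(r_{\mathrm{out}}) + 2\alpha^2\,\mathrm{Err}(\hat{r}).
\]
The remaining task is to bound $\mathrm{Err}(r_{\mathrm{out}})$ and then tune $\alpha$.

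\textbf{Main step (ranking term).} I want to show that with probability at least $1-\delta$, $\mathrm{Err}(r_{\mathrm{out}}) \lesssim \mathrm{Err_{DPO}}(N,\delta)/(1-\alpha)^2$.

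Since $\Delta_{r_{\mathrm{out}}} = \frac{1}{1-\alpha}\br{\beta\Delta_{\log \piout/\piref} - \frac{\beta}{\beta_1}\Delta_{\hat r}}$, it suffices to control the rescaled quantity $g_\pi := \beta\Delta_{\log \pi/\piref} - \frac{\beta}{\beta_1}\Delta_{\hat r}$. This is exactly the logit being fit by the RDPO loss. By the definition of $\Pi'$, $g_\pi \in [-R_{\max},R_{\max}]$ for every $\pi \in \Pi'$.

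\textbf{Realizability.} By Assumption~\ref{ass:policy_realizability_main}, $\Pi'$ contains a policy for which $g_\pi = \Delta_{r^\star}$, namely $\pi \propto \piref \exp\br{r^\star/\beta + \hat r/\beta_1}$. This needs $\hat r$ bounded so that the combined reward is in range; I will take this as implicit in the paper's assumptions.

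\textbf{MLE generalization.} I then apply the standard maximum-likelihood generalization bound for the Bradley–Terry model over the finite class $\{g_\pi : \pi\in\Pi'\}$ of logits bounded by $R_{\max}$, as in the DPO analysis of \citet{huang2024correcting}. The argument has three parts. First, bound the squared Hellinger distance between the Bernoulli preference laws by $\log(|\Pi|/\delta)/N$. Second, convert Hellinger to squared logit differences; this costs a factor $e^{4R_{\max}}$ from the lower bound on $\sigma'$ over the bounded range, together with the $R_{\max}^2$ factor. Third, note that $\Delta_{r^\star}$ is the true logit because the preferences follow the BT model with $r^\star$. This gives
\[
\mathbb{E}\bs{\br{g_{\piout} - \Delta_{r^\star}}^2} \le \mathrm{Err_{DPO}}(N,\delta).
\]
One subtlety is that I need $(1-\alpha)\Delta_{r_{\mathrm{out}}} = g_{\piout}$ to be compared against $(1-\alpha)\Delta_{r^\star}$ rather than against $\Delta_{r^\star}$. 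I would resolve this by tracking the decomposition directly in terms of $g_{\piout}$, writing
\[
\Delta_{r^\star} - \Delta_{\bar{r}} = \br{\Delta_{r^\star} - g_{\piout}} + \alpha\br{\Delta_{r^\star}-\Delta_{\hat r}}\cdot(\text{suitable weights}),
\]
and checking the algebra against the definition of $\beta'$. I expect this bookkeeping, together with confirming that the MLE bound applies with exactly the constant in $\mathrm{Err_{DPO}}$, to be the main obstacle.

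\textbf{Tuning $\alpha$.} With the bound in the form $2(1-\alpha)^2 E_1 + 2\alpha^2 E_2$, where $E_1 = \mathrm{Err_{DPO}}$ and $E_2 = \mathrm{Err}(\hat r)$, the prescribed choice $\alpha = E_1/(E_1+E_2)$ minimizes the expression. Substituting gives
\[
2\,\frac{E_1E_2^2 + E_2E_1^2}{(E_1+E_2)^2} = \frac{2E_1E_2}{E_1+E_2} = 2\br{E_1^{-1}+E_2^{-1}}^{-1}.
\]
Finally, $\br{E_1^{-1}+E_2^{-1}}^{-1} \le \min\{E_1,E_2\}$ because each reciprocal term is nonnegative, which yields the second inequality of Lemma~\ref{lemma:error_decomp}.
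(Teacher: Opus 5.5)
Your overall architecture (convex split of $\Delta_{r^\star}-\Delta_{\bar r}$, Young's inequality, a Bradley--Terry MLE bound on the ranking term, then tuning $\alpha$) matches the paper's. However, the step you flag as the ``main obstacle'' is left broken, and as written the argument does not give the harmonic mean.

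The source of the problem is the identity $\Delta_{r_{\mathrm{out}}}=\frac{1}{1-\alpha}g_{\piout}$, which is wrong. Use $\beta'=(1-\alpha)\beta$ and $\beta_1=\frac{1-\alpha}{\alpha}\beta$, so that $\beta/\beta_1=\alpha/(1-\alpha)$. Then the definition $r_{\mathrm{out}}=\frac{1}{1-\alpha}\br{\beta'\log\frac{\piout}{\piref}-\alpha\hat r}$ gives $r_{\mathrm{out}}=\beta\log\frac{\piout}{\piref}-\frac{\beta}{\beta_1}\hat r$. Hence $\Delta_{r_{\mathrm{out}}}=g_{\piout}$ exactly, and $\Delta_{\bar r}=\beta'\Delta_{\log\piout/\piref}=(1-\alpha)g_{\piout}+\alpha\Delta_{\hat r}$. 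It follows that
\[
\Delta_{r^\star}-\Delta_{\bar r}=(1-\alpha)\br{\Delta_{r^\star}-g_{\piout}}+\alpha\br{\Delta_{r^\star}-\Delta_{\hat r}},
\]
and your MLE bound $\mathbb{E}[(g_{\piout}-\Delta_{r^\star})^2]\le\mathrm{Err_{DPO}}(N,\delta)$ is precisely $\mathrm{Err}(r_{\mathrm{out}})\le\mathrm{Err_{DPO}}(N,\delta)$. The ``subtlety'' you raise disappears once the identity is corrected.

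With your stated main-step target $\mathrm{Err}(r_{\mathrm{out}})\lesssim\mathrm{Err_{DPO}}/(1-\alpha)^2$, Young's inequality yields only $2\mathrm{Err_{DPO}}+2\alpha^2\mathrm{Err}(\hat r)$. That is minimized at $\alpha=0$ and gives no gain from the ratings. Your tuning step silently uses the form $2(1-\alpha)^2\mathrm{Err_{DPO}}+2\alpha^2\mathrm{Err}(\hat r)$, which contradicts the main step. Your proposed repair, with weight $1$ on $\Delta_{r^\star}-g_{\piout}$, loses the same factor $(1-\alpha)^2$.

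A second step you skip is one the paper treats explicitly. The logit class $\{g_\pi:\pi\in\Pi'\}$ and the constraint set $\Pi'$ are both built from $\hat r$. That $\hat r$ may be correlated with the preference labels; in \texttt{ultrafeedback\_binarized} the preference is derived from the same scores. So the finite-class MLE bound cannot be applied to a hypothesis class that is chosen after seeing the data. The paper's fix runs as follows:
\begin{itemize}
\item It fixes an arbitrary $\tilde r$ in a finite reward class $\mathcal R\ni\hat r$ of size at most $\abs{\Pi}$.
\item It shows the Bradley--Terry law with $r^\star$ is realizable, up to a prompt-dependent shift, in the data-independent class $\mathcal{R}_{\tilde r}$.
\item It applies the MLE bound at level $\delta/\abs{\mathcal R}$ and union-bounds over $\tilde r$.
\end{itemize}
This costs $\log(\abs{\Pi}^2/\delta)$, which is absorbed into the constant in $\mathrm{Err_{DPO}}$.

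Your realizability worry is handled by viewing your target policy as the $\beta'$-regularized optimum for the convex combination $(1-\alpha)r^\star+\alpha\hat r$. That reward stays in $[R_{\min},R_{\max}]$ whenever $\hat r$ does.
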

\begin{proof}
    Let us start replacing the definition of $\bar{r}$ in the definition of $\mathrm{Err}(\bar{r})$. We obtain,
    \begin{align*}
\mathrm{Err}(\bar{r}) &= 
\sum_{x\in \X} \initial(x) \sum_{a\in \A} \pidata(a|x) \pidata(b|x)\br{\Delta_{r^\star}(x,a,b) - \beta' \Delta_{\phi \piout/\piref}(x,a,b)}^2\\
&= 
\sum_{x\in \X} \initial(x) \sum_{a\in \A} \pidata(a|x) \pidata(b|x)\br{\Delta_{r^\star}(x,a,b) -  \Delta_{(1-\alpha)r_{\mathrm{out}} + \alpha \hat{r}}(x,a,b)}^2,\\
    \end{align*}
    where $\phi(x)$ is defined as in \Cref{alg:ratingsDPO}, i.e. $\phi(x) = \log(x)$ when no $\chi^2$ regularization is used and $\phi(x) = \log(x) + x$ otherwise.
At this point, we use the following decompositions 
\begin{align*}
\Delta_{(1-\alpha)r_{\mathrm{out}} + \alpha \hat{r}}(x,a,b) &= (1-\alpha)(r_{\mathrm{out}}(x,a) - r_{\mathrm{out}}(x,b)) + \alpha
(\hat{r}(x,a) - \hat{r}(x,b)) \\
&= (1-\alpha)\Delta_{r_{\mathrm{out}}}(x,a,b) + \alpha \Delta_{\hat{r}}(x,a,b)
\end{align*}
Equipped of this fact and rewriting $\Delta_{r^\star}(x,a,b) = (1-\alpha) \Delta_{r^\star}(x,a,b) + \alpha \Delta_{r^\star}(x,a,b)$, we have,
\begin{align*}
\mathrm{Err}(\bar{r}) &= \sum_{x\in \X} \initial(x) \sum_{a\in \A} \pidata(a|x) \pidata(b|x)\bigg((1-\alpha)(\Delta_{r^\star}(x,a,b) - \Delta_{r_{\mathrm{out}}}(x,a,b)) \\&\phantom{=}+ \alpha (\Delta_{r^\star}(x,a,b) - \Delta_{\hat{r}}(x,a,b))\bigg)^2\\
&\leq 2 (1-\alpha)^2 \sum_{x\in \X} \initial(x) \sum_{a\in \A} \pidata(a|x) \pidata(b|x) (\Delta_{r^\star}(x,a,b) - \Delta_{r_{\mathrm{out}}}(x,a,b))^2 \\
&\phantom{=}+2 \alpha^2 \sum_{x\in \X} \initial(x) \sum_{a\in \A} \pidata(a|x) \pidata(b|x) (\Delta_{r^\star} (x,a,b) - \Delta_{\hat{r}}(x,a,b))^2
\end{align*}
where the inequality follows from Young's inequality.
The second term is by definition equal to $\mathrm{Err}(\hat{r})$. 
For the first term, we notice that we can rewrite the optimization problem in \eqref{eq:RM} as a log likelihood maximization problem over reward variables 
\[
r_{\mathrm{out}} = \argmax_{r\in\mathcal{R}_{\Pi'}} \sum^{N}_{i=1}\log \sigma \br{r(x_i, a_i^{+}) - r(x_i, a_i^{-})}
\]
for the reward class $\mathcal{R}_{\Pi'} = \bc{r | \exists \pi\in\Pi' ~~~\text{such that}~~~ r(x,a) = \frac{1}{1-\alpha}\br{\beta' \phi\br{\frac{\pi(a|x)}{\pi_{\mathrm{ref}}(a|x)}} - \alpha \hat{r}(x,a)}}$. 
Here, the reward class depends on $\hat{r}$, which could be dependent on the dataset $\mathcal{D}$.
This invalidates the classic MLE results (see e.g. \citet[Proposition 1]{foster2023foundations}) if the dataset of rankings is correlated with the dataset of ratings. Notice that this is indeed the case if ratings and rankings come from the same source as for example in \texttt{ultrafeedback-binarized} dataset.
However, we know that $\hat{r} \in \mathcal{R}$ by definition. Therefore, let us handle dependent data with a covering number. In particular, let us fix $\tilde{r} \in \mathcal{R}$ such that the policy induced reward class $\mathcal{R}_{\tilde{r}}$
defined as
\begin{equation}
\mathcal{R}_{\tilde{r}} = \bc{r | \exists \pi\in\Pi'_{\tilde{r}} ~~~\text{such that}~~~ r(x,a) = \frac{1}{1-\alpha}\br{\beta' \phi\br{\frac{\pi(a|x)}{\pi_{\mathrm{ref}}(a|x)}} - \alpha \tilde{r}(x,a)}}. \label{eq:new_reward_class}
\end{equation}
with $\Pi'_{\tilde{r}}$ being the analog of $\Pi'$ where the fixed reward $\tilde{r}$ replaces $\hat{r}$. That is,

\begin{align*} \Pi'_{\tilde{r}} = \bigg\{\pi \in \Pi : \text{s.t.} ~~\forall x,a,b   ~~ \abs{\beta' \Delta_{\phi \pi/\piref}(x,a,b) -\alpha \Delta_{\tilde{r}}(x,a,b)} \leq (1-\alpha)  ( R_{\max} - R_{\min}) \bigg\},
\end{align*} 
where $\Delta_{\tilde{r}}(x,a,b) = \tilde{r}(x,a) - \tilde{r}(x,b)$.
Clearly, $\mathcal{R}_{\tilde{r}}$ is independent of $\mathcal{D}$ since $\tilde{r}$ is independent of $\mathcal{D}$ \footnote{in fact, it is a fixed quantity, not a random variable.}.
Next, we show that this reward class realizes the true reward $r^\star$ up to a state-dependent shift $\zeta: \X \rightarrow \mathbb{R}$. To this end let us define the policy $\pi_{\tilde{r}}$ as follows
\[
  \pi_{\tilde{r}}(a|x) = \pi_{\mathrm{ref}}(a|x) \phi^{-1}(\beta'^{-1}((1-\alpha)r^\star(x,a) + \alpha \tilde{r}(x,a) + \zeta(x)))
\]
where $\zeta(x)$ is chosen to ensure that $\sum_{a\in\A} \pi_{\tilde{r}}(a|x) = 1$ for all $x\in \X$. 
Now we show that $\pi_{\tilde{r}} \in \Pi'_{\tilde{r}}$.
Rearranging, we have that,
\[
\beta' \phi\br{\frac{\pi_{\tilde{r}}(a|x)}{\pi_{\mathrm{ref}}(a|x)}} = (1-\alpha)r^\star(x,a) + \alpha \tilde{r}(x,a) + \zeta(x)
\]
Therefore, taking the difference between the log policy ratios evaluated at two different actions $a,b$ for a fixed prompt $x$, we have that
\[
\beta' \phi\br{\frac{\pi_{\tilde{r}}(a|x)}{\pi_{\mathrm{ref}}(a|x)}} - \beta' \phi\br{\frac{\pi_{\tilde{r}}(b|x)}{\pi_{\mathrm{ref}}(b|x)}} = (1-\alpha)(r^\star(x,a)-r^\star(x,b)) + \alpha (\tilde{r}(x,a) -\tilde{r}(x,b)).
\]
It follows that 
\begin{align*}
(1-\alpha)(R_{\min} - R_{\max}) + \alpha (\tilde{r}(x,a) - \tilde{r}(x,b) ) &\leq \beta' \phi\br{\frac{\pi_{\tilde{r}}(a|x)}{\pi_{\mathrm{ref}}(a|x)}} - \beta' \phi\br{\frac{\pi_{\tilde{r}}(b|x)}{\pi_{\mathrm{ref}}(b|x)}} \\&\leq (1-\alpha)(R_{\max} - R_{\min}) + \alpha(\tilde{r}(x,a) - \tilde{r}(x,b)).
\end{align*}
Let us consider a reward class $\mathcal{R}$ such that there exists a reward function $\bar{r}^\star \in \mathcal{R}$ such thar $\Delta_{\bar{r}^\star} = \Delta_{r^\star}$. Moreover, we consider that the class $\mathcal{R}$ is coincise in the sense that there are no two reward functions $r_1, r_2 \in \mathcal{R}$ such that $\Delta_{r_1} = \Delta_{r_2}$. This ensures that the cardinality of the reward class is upper bounded by the cardinality of the policy $\Pi$ as long as it satisfies \Cref{ass:policy_realizability_main}. 
Moreover, notice that for any $\tilde{r}\in\mathcal{R}$, we have that $\pi_{\tilde{r}} \in \Pi'_{\tilde{r}}$. At this point, notice that
for $\pi(a|x) = \pi_{\tilde{r}}(a|x)$ in \eqref{eq:new_reward_class}, 
we have that the true reward plus a state (prompt) dependent shift belongs to the induced reward class. In formulas, we have that there exists $\zeta: \X \rightarrow \mathbb{R}$ such that $ r_{\zeta} \in \mathcal{R}_{\tilde{r}} $ where we defined $r_{\zeta}(x,a) = r^\star(x,a) + \zeta(x)/(1-\alpha)$ for all $x,a$.
We can conclude that the true reward function is realizable (up to a state dependent shift) for all possible values of $\tilde{r}$. 
Introducing the probability model, $P_r(a ~\text{is preferred to}~b |x,a,b) = \sigma (r(x,a) - r(x,b))$, we can consider that
\[
r_{\mathrm{out}} = \argmax_{r \in \mathcal{R}_{\tilde{r}}} \sum^{N_{\mathrm{rank}}}_{i=1} \log P_r(a_i^+ ~\text{is preferred to} ~a_i^{-} |x_i, a_i^-, a_i^+).
\]
Notice that since $r_{\zeta} \in \mathcal{R}_{\tilde{r}}$ we have that
\begin{align*}
P_{r_{\zeta}}(a ~\text{is preferred to}~b |x,a,b) &= \sigma \br{r^\star(x,a) + \zeta(x)/(1-\alpha)- r^\star(x,b) - \zeta(x)/(1-\alpha)} 
\\&= \sigma \br{r^\star(x,a)- r^\star(x,b)}
\\&= P_{r^{\star}}(a ~\text{is preferred to}~b |x,a,b)
\end{align*}
Therefore, the true preference generator $P_{r^\star}$ is realized by the preference models class induced by the reward class $\mathcal{R}_{\tilde{r}}$ even if we do not necessarily have that $r^\star \in \mathcal{R}_{\tilde{r}}$. This fact should not surprise the reader since the Bradley-Terry model is invariant to shifts depending on the state only.

Moreover, the definition of $\Pi'$ guarantees that $$
\pi \in \Pi' \implies  \abs{ \beta' \Delta_{\phi \pi / \piref}(x,a,b) - \alpha \Delta_{\hat{r}}(x,a,b)} \leq (1-\alpha)(R_{\max} - R_{\min}) ~~~~\forall x,a,b.$$
Therefore, since $\piout\in\Pi'$,
the above fact implies that $\Delta_{r_{\mathrm{out}}}\in[-R_{\max},R_{\max}]$.
At this point, thanks to \Cref{lemma:hellinger_upper_bound}, a covering number over the class $\mathcal{R}$ and with \Cref{lemma:conditionalMLE}  invoked with failure probability $\delta/\abs{\mathcal{R}}$, we obtain that with probability at least $1-\delta$ it holds that 
\begin{align*}
(1-\alpha)^2 &\sum_{x\in \X} \initial(x) \sum_{a\in \A} \pidata(a|x) \pidata(b|x)\br{\Delta_{r^\star}(x,a,b) - \Delta_{r_{\mathrm{out}}}(x,a,b)}^2
\\&=(1-\alpha)^2 \sum_{x\in \X} \initial(x) \sum_{a\in \A} \pidata(a|x) \pidata(b|x)\br{r^\star(x,a) - r^\star(x,b) - r_{\mathrm{out}}(a|x)+ r_{\mathrm{out}}(x,b)}^2 \\ & \leq 
32 R^2_{\max} (1-\alpha)^2 e^{4 R_{\max}} \frac{\log (\abs{\Pi'}\abs{\mathcal{R}}/\delta) }{N_{\mathrm{rank}}}\\
&\leq 32 R^2_{\max} (1-\alpha)^2 e^{4 R_{\max}} \frac{\log (\abs{\Pi}^2/\delta) }{N_{\mathrm{rank}}} \\
&= \mathcal{O}\br{R^2_{\max} (1-\alpha)^2 e^{4 R_{\max}} \frac{\log (\abs{\Pi}/\delta) }{N_{\mathrm{rank}}}} 
= (1-\alpha)^2\mathrm{Err_{DPO}}(N,\delta),
\end{align*}
where the last inequality uses that $\Pi' \subset \Pi$ and that by the assumption about the expressivity of the policy class (i.e. \Cref{ass:policy_realizability_main}) and by the fact that we considered the class $\mathcal{R}$ to be coincise, we have that $\abs{\mathcal{R}} = \abs{\Pi}$. Furthermore, the last equality holds by definition of $\mathrm{Err_{DPO}}(N,\delta)$ and by the fact that $\Pi' \subset \Pi$.
Putting all together, we have that
\begin{align*}
\mathrm{Err}(\bar{r}) &\leq 2 \bigg( (1-\alpha)^2 \mathrm{Err_{DPO}} (N,\delta) + \alpha^2 \mathrm{Err} (\hat{r})\bigg) \\
&= \frac{2\mathrm{Err_{DPO}} (N,\delta) \mathrm{Err} (\hat{r})}{\mathrm{Err_{DPO}} (N,\delta) + \mathrm{Err} (\hat{r})}
\end{align*}
where the first equality follows from the choice of $\alpha$. 
Finally, using the relation $(1/\mathfrak{a} + 1/\mathfrak{b})^{-1} \leq \min(\mathfrak{a},\mathfrak{b})$ that holds between any two positive scalars $\mathfrak{a},\mathfrak{b} \in \mathbb{R}^+$, we that 
\begin{equation}
\br{ \mathrm{Err_{DPO}}^{-1}(N,\delta)   + {\mathrm{Err}^{-1}(\hat{r})}}^{-1} \leq \min\bc{ \mathrm{Err_{DPO}}(N,\delta)  ,\mathrm{Err}(\hat{r})} 
\end{equation}
and therefore
\begin{align*}
\mathrm{Err}(\bar{r}) \leq 2\min\bc{ \mathrm{Err_{DPO}}(N,\delta)  ,\mathrm{Err}(\hat{r})}.
\end{align*}
\end{proof}

\subsection{RDPO with $\chi^2$ regularization: single policy concentrability guarantees}
\label{app:Cstar}
Next, we prove guarantees for the more general form in \Cref{alg:ratingsDPO} based on single policy concentrability.
We will make use of the following quantities 
\[
C^\pi = \sum_{x\in\X} \initial(x)\sum_{a\in \A} \pi(a|x) \frac{\pi(a|x)}{\piref(a|x)}  ~~~~~ C^\star = \sum_{x\in\X} \initial(x)\sum_{a\in \A} \pi^\star(a|x) \frac{\pi^\star(a|x)}{\piref(a|x)},
\]
and of the following relation:
\[
\frac{1}{2}(C^\pi - 1) = D_{\chi^2}(\pi,\piref) \geq D_{\mathrm{KL}}(\pi,\piref)
\]
for all policies $\pi\in\Pi_{\mathrm{all}}$. Therefore, we also have that \[
\frac{1}{2}(C^\star - 1) = D_{\chi^2}(\pi^\star,\piref) \geq D_{\mathrm{KL}}(\pi^\star,\piref).
\]
 Moreover, a crucial quantity for the analysis is the on policy estimation error for the implicit reward estimation error $\bar{r}(x,a) = \beta'\phi\br{\frac{\piout(a|x)}{\pidata(a|x)}}$ defined as
 \[
\mathrm{Err}_{\pi_{\mathrm{ref}}}(\bar{r}) =
\sum_{x\in \X} \initial(x) \sum_{a\in \A} \pi_{\mathrm{ref}}(a|x) \pi_{\mathrm{ref}}(b|x)\br{\Delta_{r^\star}(x,a,b) - \beta' \Delta_{\phi \piout/\piref}(x,a,b)}^2,
\]
where we recall that  $\Delta_{\phi \pi / \piref}(x,a,a') :=  \phi\br{\frac{\pi(a|x)}{\piref(a|x)}} - \phi\br{ \frac{\pi(a'|x)}{\piref(a'|x)}}$.
Moreover, we highlighted the fact that for this result we need $\pidata=\piref$ by explicitely adding $\piref$ in the subscript of the reward estimation error defined above. 
\begin{theorem} 
\label{thm:ratings_dpo_app}
 Let us consider the setting where $\pidata = \piref$, i.e. the dataset of responses is collected with the initial model.
  Moreover, for any $\beta \in (0, \infty)$ let $\piout$ be the output of \Cref{alg:ratingsDPO}, ran with $\phi(x) =  \gamma x + \log(x)$ for $\gamma = \sqrt{\frac{\mathrm{Err}_{\max}}{\beta^2 (1-\alpha)^2 C^\star}} $ where
\[
\mathrm{Err}_{\max} = 2\min\bc{ \mathrm{Err_{DPO}}(N,\delta)  ,\mathrm{Err}_{\piref}(\hat{r})}
\]
and  $$ \alpha = \br{1 + \mathrm{Err}_{\piref}(\hat{r}) / \mathrm{Err_{DPO}}(N,\delta)}^{-1}$$
Then, it holds that
    \[
J_{\frac{\beta\beta_1}{\beta+\beta_1}}(\pi^\star;r^\star) - J_{\frac{\beta\beta_1}{\beta+\beta_1}}(\piout;r^\star)  \leq  \mathcal{O}\br{\sqrt{C^\star \min\br{ \mathrm{Err_{DPO}}(N,\delta),\mathrm{Err}_{\piref}(\hat{r})}}}, \quad \text{w.p.}\quad  1-\delta.
\] 
\end{theorem}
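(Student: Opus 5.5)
The plan is to follow the proof of \Cref{thm:ratings_dpo} and change only the step that introduces the concentrability coefficient. That step is where the $\chi^2$ part of $\phi(x)=\gamma x+\log x$ lets us pay $C^\star$ instead of $C^{\max}$, in the spirit of \citet{huang2024correcting}.

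\textbf{Step 1: first-order characterization of $\piout$.} By \Cref{lem:gen_rdpo}, $\piout$ is the closed-form solution of
\[
\max_\pi \innerprod{\pi}{\bar r}-\beta' D_{\mathrm{KL}}(\pi,\piref)-\beta'\gamma D_{\chi^2}(\pi,\piref), \qquad \bar r=(1-\alpha)r_{\mathrm{out}}+\alpha\hat r .
\]
Equivalently, $\bar r(x,a)=\beta'\phi(\piout(a|x)/\piref(a|x))$ up to a prompt-only shift. Since $r_{\mathrm{out}}$ is itself defined through $\phi$, this identity holds by construction.

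\textbf{Step 2: decomposition of the suboptimality gap.} Write
\[
J_{\beta'}(\pi^\star;r^\star)-J_{\beta'}(\piout;r^\star)=\innerprod{\pi^\star}{r^\star-\bar r}+\innerprod{\piout}{\bar r-r^\star}+\big[\innerprod{\pi^\star-\piout}{\bar r}-\beta' D_{\mathrm{KL}}(\pi^\star,\piref)+\beta' D_{\mathrm{KL}}(\piout,\piref)\big].
\]
Here $r^\star$ and $\bar r$ may be replaced by their centered versions, i.e.\ differences against $b\sim\piref$. Optimality of $\piout$ for the regularized problem bounds the bracket by $\beta'\gamma\big(D_{\chi^2}(\pi^\star,\piref)-D_{\chi^2}(\piout,\piref)\big)\le \beta'\gamma C^\star/2-\beta'\gamma D_{\chi^2}(\piout,\piref)$.

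\textbf{Step 3: the $\pi^\star$ term.} Apply Cauchy--Schwarz with change of measure to $\piref=\pidata$, exactly as in the proof of \Cref{thm:ratings_dpo}. This gives $\innerprod{\pi^\star}{r^\star-\bar r}\le\sqrt{C^\star\,\mathrm{Err}_{\piref}(\bar r)}$.

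\textbf{Step 4: the $\piout$ term (main obstacle).} Here we cannot afford $C^{\piout}$. Use Cauchy--Schwarz followed by AM--GM:
\[
\innerprod{\piout}{\bar r-r^\star}\le\sqrt{C^{\piout}\mathrm{Err}_{\piref}(\bar r)}\le \frac{\eta}{2}C^{\piout}+\frac{1}{2\eta}\mathrm{Err}_{\piref}(\bar r).
\]
Using $C^{\piout}=1+2D_{\chi^2}(\piout,\piref)$ and taking $\eta=\beta'\gamma$, the $\chi^2$ term cancels against $-\beta'\gamma D_{\chi^2}(\piout,\piref)$ from Step 2. What remains is of order $\beta'\gamma C^\star+\mathrm{Err}_{\piref}(\bar r)/(\beta'\gamma)$.

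\textbf{Step 5: bounding $\mathrm{Err}_{\piref}(\bar r)$.} Invoke \Cref{lemma:error_decomp}, whose proof is written for general $\phi$ and uses realizability of $\pi_{\tilde r}$ via $\phi^{-1}$. With $\pidata=\piref$, this gives $\mathrm{Err}_{\piref}(\bar r)\le\mathrm{Err}_{\max}$ with probability $1-\delta$.

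\textbf{Step 6: choice of $\gamma$ and conclusion.} With $\beta'=(1-\alpha)\beta$, the prescribed $\gamma=\sqrt{\mathrm{Err}_{\max}/(\beta^2(1-\alpha)^2C^\star)}$ gives $\beta'\gamma=\sqrt{\mathrm{Err}_{\max}/C^\star}$. This balances the two terms of Step 4 at $\mathcal O(\sqrt{C^\star\mathrm{Err}_{\max}})$, and Step 3 contributes the same order. Finally, identify $\beta'=\beta\beta_1/(\beta+\beta_1)$ as in the proof of \Cref{thm:ratings_dpo}.

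The delicate points are in Steps 2 and 4. First, the centering: the prompt-dependent shift must drop out of $\innerprod{\pi^\star-\piout}{\cdot}$. Second, the target is $J_{\beta'}$ with only the KL part of the regularizer, so the extra $\chi^2$ penalty appearing in $\piout$'s own optimality must be handled as above rather than absorbed into $J$.
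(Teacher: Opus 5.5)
Your proposal is correct and matches the paper's proof (\Cref{lemma:first_bound} followed by \Cref{lemma:error_decomp}) essentially step for step. The shared steps are: optimality of $\piout$ for the KL$+\chi^2$-regularized problem with $\bar r=\beta'\phi(\piout/\piref)$, cancellation of the prompt-only shift between the $\pi^\star$ and $\piout$ terms, AM--GM with weight $\beta'\gamma$ so that $C^{\piout}$ cancels against the $\chi^2$ penalty, and \Cref{lemma:error_decomp} with $\pidata=\piref$ to get $\mathrm{Err}_{\piref}(\bar r)\le\mathrm{Err}_{\max}$. The only differences are cosmetic: the paper also applies AM--GM to the $\pi^\star$ term before balancing, and it gets the optimality of $\piout$ from \Cref{lemma:dpo_rlhf_duality} (the first-order condition for this $f$-divergence) rather than from \Cref{lem:gen_rdpo}.
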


In order to prove the statements, we consider the function $\phi(x) = \gamma x + \log(x)$.
 The proof of \Cref{thm:ratings_dpo_app} is divided in the next two lemmas: \Cref{lemma:first_bound} converts the suboptimality of the extracted policy $\pi_{\mathrm{out}}$ into the excess risk of the reward estimator corresponding to $\piout$, i.e. $r_{\mathrm{out}}$

 Then, \Cref{lemma:error_decomp} ( invoked recalling that $\pidata=\piref$ in this setting)  bounds $\mathrm{Err}_{\pi_{\mathrm{ref}}}(\bar{r})$ in high probability. Once these core facts are proven, the final result follows by plugging in the upper bound from \Cref{lemma:error_decomp} into \Cref{lemma:first_bound}.
\begin{lemma} 
\label{lemma:first_bound}
    Under the setting of \Cref{thm:ratings_dpo_app}, it holds that
    \[
J_{\frac{\beta\beta_1}{\beta+\beta_1}}(\pi^\star;r^\star) - J_{\frac{\beta\beta_1}{\beta+\beta_1}}(\piout;r^\star) 
 \leq  \mathcal{O}\br{\sqrt{\mathrm{Err}_{\max}C^\star }}.
\]

\end{lemma}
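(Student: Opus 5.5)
We follow the $\chi^2$-regularization argument of \citet{huang2024correcting}. The analysis is the same as in the proof of \Cref{thm:ratings_dpo}, except that the KL regularizer is replaced by $\beta'(D_{\mathrm{KL}}+\gamma D_{\chi^2})$ with $\beta'=(1-\alpha)\beta$.

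\textbf{Step 1: $\piout$ as an exact maximizer.} Set $\bar r(x,a)=\beta'\phi\br{\piout(a|x)/\piref(a|x)}$ with $\phi(x)=\gamma x+\log x$. By \Cref{lem:gen_rdpo}, $\piout$ is the closed-form solution of
\[
\max_\pi \innerprod{\pi}{\bar r}-\beta'\br{D_{\mathrm{KL}}(\pi,\piref)+2\gamma D_{\chi^2}(\pi,\piref)},
\]
up to prompt-dependent shifts, which leave the problem unchanged. Using this optimality against $\pi^\star$, exactly as in the proof of \Cref{thm:ratings_dpo}, gives
\[
J_{\beta'}(\pi^\star;r^\star)-J_{\beta'}(\piout;r^\star)\le \innerprod{\pi^\star-\piout}{r^\star-\bar r}+2\beta'\gamma\br{D_{\chi^2}(\pi^\star,\piref)-D_{\chi^2}(\piout,\piref)}.
\]
We center both rewards by subtracting their $\piref$-average, i.e.\ we work with $f(x,a)=\mathbb{E}_{b\sim\piref}[\Delta_{r^\star}(x,a,b)-\Delta_{\bar r}(x,a,b)]$. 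This changes neither side, since $\pi^\star-\piout$ sums to zero in $a$ for each $x$.

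\textbf{Step 2: the two halves of the error term.} Split $\innerprod{\pi^\star-\piout}{f}$ into a $\pi^\star$ part and a $\piout$ part. For the $\pi^\star$ part, Cauchy--Schwarz with respect to $\piref$ (using $\pidata=\piref$) and Jensen give $\sqrt{C^\star\,\mathrm{Err}_{\piref}(\bar r)}$. For the $\piout$ part, Cauchy--Schwarz followed by AM--GM gives, for any $\eta>0$,
\[
\sqrt{C^{\piout}\,\mathrm{Err}_{\piref}(\bar r)}\le \tfrac{\eta}{2}C^{\piout}+\tfrac{1}{2\eta}\mathrm{Err}_{\piref}(\bar r).
\]
Since $C^{\piout}=1+2D_{\chi^2}(\piout,\piref)$, choosing $\eta$ of order $\beta'\gamma$ lets the negative term $-2\beta'\gamma D_{\chi^2}(\piout,\piref)$ from Step 1 absorb the $C^{\piout}$ contribution. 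What remains is $O(\beta'\gamma)+O\br{\mathrm{Err}_{\piref}(\bar r)/(\beta'\gamma)}$. The $\pi^\star$ regularization term adds $\beta'\gamma(C^\star-1)$.

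\textbf{Step 3: collecting terms and choosing $\gamma$.} The total bound is
\[
\sqrt{C^\star\mathrm{Err}}+\beta'\gamma C^\star+\mathrm{Err}/(\beta'\gamma),\qquad \mathrm{Err}=\mathrm{Err}_{\piref}(\bar r).
\]
By \Cref{lemma:error_decomp} with $\pidata=\piref$, with probability $1-\delta$ we have $\mathrm{Err}\le\mathrm{Err}_{\max}$; the lemma is stated for general $\phi$, so it applies here. The choice $\gamma=\sqrt{\mathrm{Err}_{\max}/(\beta^2(1-\alpha)^2C^\star)}$ means $\beta'\gamma=\sqrt{\mathrm{Err}_{\max}/C^\star}$. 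Every term is then $O\br{\sqrt{C^\star\mathrm{Err}_{\max}}}$. Finally, $\beta'=\beta\beta_1/(\beta+\beta_1)$, so $J_{\beta'}$ is the objective in the statement.

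\textbf{Main obstacle.} The delicate step is Step 2. The absorption needs the $\chi^2$ term with the right sign and constant, and the centering must be handled so that $\mathrm{Err}_{\piref}(\bar r)$, which only controls pairwise gaps, actually bounds $\mathbb{E}_{\piref}[f^2]$; Jensen over $b$ should give this. If the constants do not match, the fallback is to rescale $\eta$, since only the order matters.
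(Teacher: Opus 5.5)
Your proposal is correct and follows essentially the same route as the paper: optimality of $\piout$ for the $\beta'(D_{\mathrm{KL}}+\gamma D_{\chi^2})$-regularized problem with implicit reward $\bar r=\beta'\phi(\piout/\piref)$, centering by the $\piref$-average (the paper's ``Shift'' term), Cauchy--Schwarz plus Jensen to reach $\mathrm{Err}_{\piref}(\bar r)$, AM--GM so that the negative $\chi^2$ term at $\piout$ cancels $C^{\piout}$, and the choice $\beta'\gamma=\sqrt{\mathrm{Err}_{\max}/C^\star}$. One harmless slip: with the paper's normalization $D_{\chi^2}=\tfrac12(C^\pi-1)$ and $\phi(x)=\gamma x+\log x$, the first-order conditions (this is \Cref{lemma:dpo_rlhf_duality}, not \Cref{lem:gen_rdpo}) put coefficient $\gamma$ rather than $2\gamma$ on $D_{\chi^2}$, which only changes constants if you take $\eta=\beta'\gamma$.
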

When comparing \Cref{thm:ratings_dpo,thm:ratings_dpo_app}, an important remark is that the KL regularization is strictly suboptimal compared to $\chi^2$ regularization in the worst case, since we have that 
$C^{\max} > C^\star$ unless we are in the spurious case of $\Pi' = \bc{\pi_{\mathrm{ref}}}$. However, when the policy class $\Pi'$ does not allow large deviations from $\pi_{\mathrm{ref}}$ the difference between $D_{\chi^2}$ and $D_{\mathrm{KL}}$ is well controlled as shown in the \Cref{fig:kl_vs_chi2}.
Therefore, even with $\gamma=0$ we expect good results in practice if the algorithm output remains close to $\pi_{\mathrm{ref}}$.
\begin{figure}
    \centering
    \includegraphics[width=0.5\linewidth]{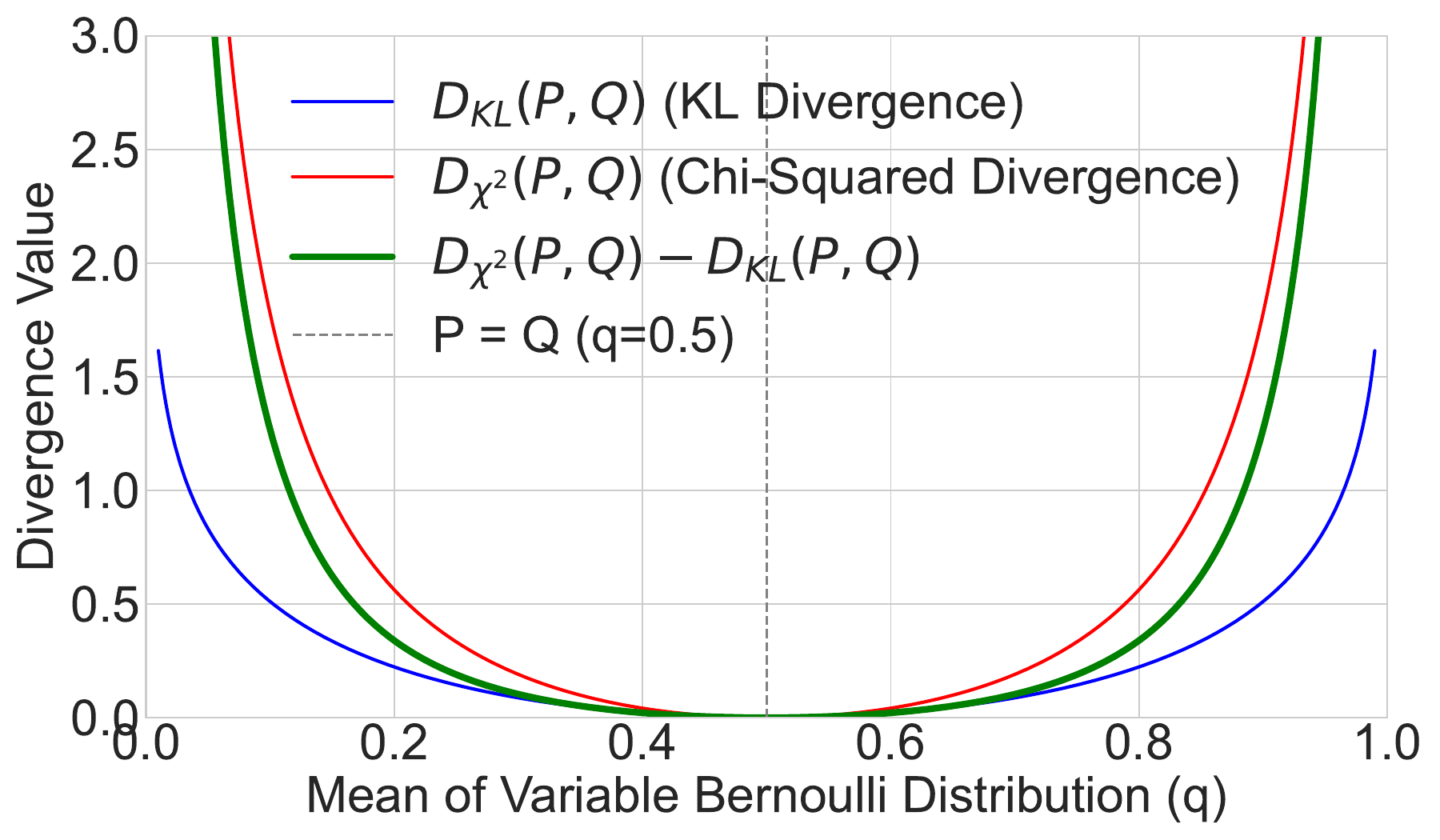}
    \caption{Comparison between $D_{\mathrm{KL}}$ and $D_{\chi^2}$ and their differences for Bernoulli random variables. }
    \label{fig:kl_vs_chi2}
\end{figure}
Next, we state the proof of \Cref{lemma:first_bound}.
\begin{proof}\emph{of \Cref{lemma:first_bound} }
Let us consider the notation
\[
r_{\mathrm{out}}(x,a) = \frac{\beta'}{1-\alpha}\phi\br{\frac{\piout(a|x)}{\piref(a|x)}} - \frac{\alpha}{1-\alpha} \hat{r}(x,a).
\]
Rearranging the above definition, we obtain the following elementwise equality
\[
(1-\alpha)r_{\mathrm{out}} + \alpha \hat{r} = \beta' \phi\br{\frac{\piout}{\piref}}
\]
By \Cref{lemma:dpo_rlhf_duality}, we have that $\piout$ is the solution of the following RLHF like problem
\begin{align}
\pi_{\mathrm{out}} = \argmax_{\pi\in\Pi}\innerprod{\pi}{(1-\alpha)r_{\mathrm{out}} + \alpha \hat{r}} - \beta' \gamma D_{\chi^2}(\pi,\piref) -\beta' D_{\mathrm{KL}}(\pi,\piref)
\end{align}
At this point, let $\bar{r} = (1-\alpha) r_{\mathrm{out}} + \alpha \hat{r}$ and $D_F(\pi,\piref) := \gamma D_{\chi^2}(\pi,\piref) + D_{\mathrm{KL}}(\pi,\piref)$. It holds that
\[
\innerprod{\pi^\star}{\bar{r}} - \beta' D_F(\pi^\star,\pi_{\mathrm{ref}}) 
\leq \innerprod{\pi_{\mathrm{out}}}{\bar{r}} - \beta' D_F(\pi_{\mathrm{out}},\pi_{\mathrm{ref}})
\]
We can now use the above fact to bound the suboptimality against the optimal policy $\pi^\star$
\begin{align}
\innerprod{\pi^\star}{r^\star} & - \innerprod{\pi_{\mathrm{out}}}{r^\star} = \nonumber\\
&\leq \innerprod{\pi^\star}{r^\star} - \innerprod{\pi^\star}{\bar{r}} + \beta' D_F(\pi^\star,\pi_{\mathrm{ref}}) 
+ \innerprod{\pi_{\mathrm{out}}}{\bar{r}} - \beta' D_F(\pi_{\mathrm{out}},\pi_{\mathrm{ref}})
- \innerprod{\pi_{\mathrm{out}}}{r^\star} 
\nonumber\\&= \innerprod{\pi^\star}{r^\star} - \innerprod{\pi^\star}{\bar{r}} + \frac{\beta' \gamma}{2} C^\star + \beta' D_{\mathrm{KL}}(\pi^\star,\piref) \nonumber\\&\phantom{=}
+ \innerprod{\pi_{\mathrm{out}}}{\bar{r}} - \frac{\beta' \gamma}{2}C^{\piout} - \beta' D_{\mathrm{KL}}(\piout,\piref)
- \innerprod{\pi_{\mathrm{out}}}{r^\star}. 
\label{eq:decomposition}
\end{align}
In the final equality, we used the following equivalencies
\begin{align*}
D_F(\pi^\star,\piref) - D_F(\piout,\piref) &=  \gamma D_{\chi^2}(\pi^\star,\piref) + 
D_{\mathrm{KL}}(\pi^\star,\piref) - \gamma D_{\chi^2}(\piout,\piref)\\
&\phantom{=} -
D_{\mathrm{KL}}(\piout,\piref) \\
&= \gamma \br{\frac{C^\star}{2} - 1} + 
D_{\mathrm{KL}}(\pi^\star,\piref) - \gamma \br{\frac{C^{\piout}}{2} - 1}\\
&\phantom{=}-
D_{\mathrm{KL}}(\pi,\piref) \\
&= \gamma\frac{C^\star}{2} + 
D_{\mathrm{KL}}(\pi^\star,\piref) - \gamma \frac{C^{\piout}}{2} -
D_{\mathrm{KL}}(\piout,\piref)
\end{align*}
At this point, we bound the reward estimation error under $\pi^\star$ as follows
\begin{align}
\innerprod{\pi^\star}{r^\star - \bar{r}} &= \sum_{x\in \X} \initial(x) \sum_{a\in \A} \pi^\star(a|x)(r^\star(x,a) - \bar{r}(x,a)) \nonumber \\ &= \sum_{x\in \X} \initial(x) \sum_{a\in \A} \pi^\star(a|x) \pi_{\mathrm{ref}}(b|x)(r^\star(x,a) - r^\star(x,b) - \bar{r}(x,a) + \bar{r}(x,b) ) \nonumber \\&\phantom{=}- \underbrace{\sum_{x\in \X}\initial(x)\sum_{b\in\A} \pi_{\mathrm{ref}} (b|x) (\bar{r}(x,b) - r(x,b))}_{=\mathrm{Shift}} \label{eq:intermediate}
\end{align}
At this point, we can perform the following change of measure trick. For notation compactness, let $f(x,a) = \sum_{b\in\A}\pi_{\mathrm{ref}}(b|x)(r^\star(x,a) - r^\star(x,b) - \bar{r}(x,a) + \bar{r}(x,b))$ and let $\pi$ be an arbitrary policy
\begin{align*}
\sum_{x\in \X} \initial(x) \sum_{a\in \A} \pi(a|x) f(x,a) &=
\sum_{x\in \X} \initial(x) \sum_{a\in \A} \frac{\pi(a|x)}{\sqrt{\pi_{\mathrm{ref}}(a|x)} } \sqrt{\pi_{\mathrm{ref}}(a|x)}  f(x,a) \\
&\leq \sqrt{\sum_{x\in \X} \initial(x) \sum_{a\in \A} \frac{\pi^2(a|x)}{\pi_{\mathrm{ref}}(a|x)}} \sqrt{\sum_{x\in \X} \initial(x) \sum_{a\in \A} \pi_{\mathrm{ref}}(a|x)  f^2(x,a) } \\
&\leq \sqrt{C^\pi} \sqrt{\mathrm{Err}_{\pi_{\mathrm{ref}}}(\bar{r}) },
\end{align*}
where the last inequality holds because, thanks to Jensen's inequality, we have that
\begin{align*}
\sum_{a\in \A} \pi_{\mathrm{ref}}(a|x)  f^2(x,a) &\leq \sum_{a\in\A} \sum_{b\in\A}\pi_{\mathrm{ref}}(b|x)\pi_{\mathrm{ref}}(a|x)(r^\star(x,a) - r^\star(x,b) - \bar{r}(x,a) + \bar{r}(x,b))^2\\
&= \mathrm{Err}_{\pi_{\mathrm{ref}}}(\bar{r}) .
\end{align*}
Endowed with this result, we continue by upper bounding \eqref{eq:intermediate} as follows
\begin{align*}
\innerprod{\pi^\star}{r^\star - \bar{r}} &\leq \sqrt{ C^\star \sum_{x\in \X} \initial(x) \sum_{a\in \A} \pi_{\mathrm{ref}}(a|x) \pi_{\mathrm{ref}}(b|x)(r^\star(x,a) - r^\star(x,b) - \bar{r}(x,a) + \bar{r}(x,b) )^2  } \\&\phantom{=}+\mathrm{Shift} \\
&\leq  \sqrt{ C^\star \mathrm{Err}_{\pi_{\mathrm{ref}}}(\bar{r}) }  + \mathrm{Shift} \\
&\leq  \frac{\beta'\gamma C^\star }{2}  + \frac{ \mathrm{Err}_{\pi_{\mathrm{ref}}}(\bar{r}) }{2\beta'\gamma }  + \mathrm{Shift}
\end{align*}
where in the last line we recognized the generalization error $\mathrm{Err}_{\pi_{\mathrm{ref}}}(\bar{r})$ for the hypothesis $\bar{r}$ on the response distribution generated by $\pi_{\mathrm{ref}}$. In the above derivation, we added and subtracted the term $r^\star(s,b)$ because the Bradley-Terry preference distribution is invariant under a constant additional shift of the reward function; therefore, we can only aim at learning the difference between the reward assigned to two distinct actions after seeing the same prompt. We can not learn the individual values $r^\star(s,a)$ and $r^\star(s,b)$.

Following the same steps, we can also bound the expected reward difference under $\pi_{\mathrm{out}}$.

\begin{align*}
\innerprod{\pi_{\mathrm{out}}}{\bar{r} - r^\star} &\leq \sqrt{ C^{\piout} \mathrm{Err}_{\pi_{\mathrm{ref}}}(\bar{r}) }  - \mathrm{Shift} \\
&\leq  \frac{\beta'\gamma C^{\piout}}{2}  + \frac{ \mathrm{Err}_{\pi_{\mathrm{ref}}}(\bar{r}) }{2\beta'\gamma } -  \mathrm{Shift}.
\end{align*}

At this point, plugging into 
\eqref{eq:decomposition}, we obtain 
\begin{align*}
\innerprod{\pi^\star}{r^\star} - \innerprod{\pi_{\mathrm{out}}}{r^\star} &\leq \innerprod{\pi^\star}{r-\bar{r}} + \frac{\beta' \gamma}{2} C^\star + \beta' D_{\mathrm{KL}}(\pi^\star,\piref)
+ \innerprod{\pi_{\mathrm{out}}}{\bar{r} - r} \\ &\phantom{=}- \frac{\beta'\gamma}{2} C^{\piout} - \beta' D_{\mathrm{KL}}(\piout,\piref)   \\
&\leq  \frac{\beta' \gamma C^\star}{2}  + \frac{ \mathrm{Err}_{\pi_{\mathrm{ref}}}(\bar{r}) }{2\beta'\gamma } 
\\&\phantom{=} + \frac{\beta'\gamma C^\star}{2} + \beta' D_{\mathrm{KL}}(\pi^\star,\piref)
\\&\phantom{=} +  \frac{\beta'\gamma C^{\piout}}{2}  + \frac{ \mathrm{Err}_{\pi_{\mathrm{ref}}}(\bar{r}) }{2\beta'\gamma } \\
&\phantom{=}- \frac{\beta' \gamma}{2} C^{\piout} - \beta' D_{\mathrm{KL}}(\piout,\piref)
\end{align*}
Therefore, bringing the term $\beta' D_{\mathrm{KL}}(\pi^\star,\piref)- \beta' D_{\mathrm{KL}}(\piout,\piref)$ on the left hand side and noticing on the right hand side that $\frac{\beta'\gamma C^{\piout}}{2} - \frac{\beta'\gamma C^{\piout}}{2} = 0$, we obtain that
\begin{align*}
J_{\beta'}(\pi^\star; r^\star) - J_{\beta'}(\piout; r^\star)
\leq  \beta' \gamma C^\star + \frac{ \mathrm{Err}_{\pi_{\mathrm{ref}}}(\bar{r}) }{\beta' \gamma }
\end{align*}
Therefore, if $\gamma =\sqrt{\frac{\mathrm{Err}_{\max} }{(\beta')^2 C^\star}}$ for any $\mathrm{Err}_{\max}$ such that $\mathrm{Err}_{\max} \geq \mathrm{Err}_{\pi_{\mathrm{ref}}}(\bar{r})$, we have that
\[
J_{\beta'}(\pi^\star; r^\star) - J_{\beta'}(\piout; r^\star) \leq \mathcal{O}\br{\sqrt{C^\star \mathrm{Err}_{\max}}}.
\]
\end{proof}

\subsection{Proof of \Cref{thm:DPO+distilled_main} (ML-RDPO guarantees)}\label{app:proofMLRDPOmain}
We restate here the main guarantees for ML-RDPO which we will prove in the following.
\MLRDPO*
\begin{proof}
Let $$r_{\mathrm{out}}(x,a) = \beta \log(\piout(a|x)/\piref(a|x)) + \zeta(x) $$ for some unknown state dependent function $\zeta$ and $\piout \in \argmin_{\pi \in \tilde{\Pi}} \mathcal{L}_{\mathrm{ML-RDPO}}$.
Then, by \Cref{lemma:dpo_rlhf_duality} invoked for $\gamma=0$ it holds that
\[
\piout \in \argmax_{\pi\in\Pi} \innerprod{r_{\mathrm{out}}}{\pi} - \beta D_{\mathrm{KL}}(\pi,\piref)
\]
Therefore, we have that
\begin{align*}
\innerprod{\pi^\star - \piout}{r^\star} & = \innerprod{\pi^\star - \piout}{r_{\mathrm{out}}} + \innerprod{\pi^\star - \piout}{r^\star - r_{\mathrm{out}}} \\
&\leq \beta D_{\mathrm{KL}}(\pi^\star,\piref) - \beta D_{\mathrm{KL}} (\piout,\piref) + \innerprod{\pi^\star - \piout}{r^\star - r_{\mathrm{out}}}
\end{align*}
Then, rearranging the above inequality we have that
\[
J_\beta(\pi^\star;r^\star) - J_\beta(\piout;r^\star) \leq \innerprod{\pi^\star - \piout}{r^\star - r_{\mathrm{out}}}
\]
Then, prooceding as in \Cref{thm:ratings_dpo}, we have that
\[
\innerprod{\pi^\star - \piout}{r^\star - r_{\mathrm{out}}} \leq \sqrt{C^{\max} \mathrm{Err}(r_{\mathrm{out}})},
\]
where 
\[
\mathrm{Err}(r_{\mathrm{out}}) := \mathbb{E}_{\!\!\!\!\!\substack{x \sim \initial\\a,a'\sim\pidata(\cdot|X)}}\big[\big(\Delta_{r^\star}(x,a,a') - \Delta_{r_{\mathrm{out}}}(x,a,a') \big)^2\big].
\]
Then, since $\piout$ minimizes $\mathcal{L}_{\mathrm{ML-RDPO}}$ and by the definition of $r_{\mathrm{out}}$, we can notice that $r_{\mathrm{out}}$ coincides with $\tilde{r}$, i.e. $$r_{\mathrm{out}} = \tilde{r},$$ where we recall that $\tilde{r}$ is the maximum likelihood estimator estimator for the reward function used in the derivation of ML-RDPO (see \Cref{lemma:ml-rdpo_derivation}). That is,
\[
 \tilde{r} = \argmax_{r\in\mathcal{R}_{\tilde{\Pi}}} \log P^{\mathrm{rat,rank}}_r( \mathcal{D}_{\mathrm{rat}},  \mathcal{D}_{\mathrm{rank}}).
\] Moreover, notice that $r_{\mathrm{out}} \in [-R_{\max},R_{\max}]$ since $\piout \in \tilde{\Pi}$.
Then, the proof is concluded invoking \Cref{lemma:routconc} which establishes a high probability concentration result for the reward estimate, i.e. \[
\mathrm{Err}(r_{\mathrm{out}}) \leq \min \bc{2(R_{\max} - R_{\min})^2 + 4 \mathbb{V},  16 e^{4 R_{\max}}R^2_{\max}}\frac{2 \log (\tilde{\Pi}/\delta)}{N} ,
\]
and noticing that $\tilde{\Pi} \subset \Pi$.
\end{proof}

\begin{lemma}
\label{lemma:routconc}
Let 
$r_{\mathrm{out}} = \argmax_{r\in\mathcal{R}_{\tilde{\Pi}}} \log P^{\mathrm{rat,rank}}_r( \mathcal{D}_{\mathrm{rat}},  \mathcal{D}_{\mathrm{rank}})$,
then it holds that with probability at least $1-\delta$
\[
\mathrm{Err}(r_{\mathrm{out}}) \leq \min \bc{2(R_{\max} - R_{\min})^2 + 4 \mathbb{V},  16 e^{4 R_{\max}}R^2_{\max}}\frac{2 \log (\tilde{\Pi}/\delta)}{N} .
\]
\end{lemma}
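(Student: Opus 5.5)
We prove the bound as a minimum of two separate bounds, one from each term of the joint log-likelihood. We first reduce to a fixed reward class. By \Cref{ass:policy_realizability_main} and the restriction to $\tilde{\Pi}$, the class $\mathcal{R}_{\tilde{\Pi}}$ contains a reward whose gaps coincide with $\Delta_{r^\star}$ (up to a prompt-dependent shift, to which both the Bradley--Terry and the Gaussian gap likelihoods are invariant). All members of the class have gaps in $[-R_{\max},R_{\max}]$, and $\abs{\mathcal{R}_{\tilde{\Pi}}}\le\abs{\tilde{\Pi}}$.

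Since $r_{\mathrm{out}}$ maximizes the sum $\sum_i \log P^{\mathrm{rank}}_r(z_i\mid\cdot)+\log P^{\mathrm{rat}}_r(\Delta^i_{\hat r}\mid\cdot)$, comparing it with the realizable $r^\star$ gives the basic inequality
\[
\sum_{i=1}^N \log\frac{P^{\mathrm{rank}}_{r^\star}}{P^{\mathrm{rank}}_{r_{\mathrm{out}}}}(z_i) + \sum_{i=1}^N \log\frac{P^{\mathrm{rat}}_{r^\star}}{P^{\mathrm{rat}}_{r_{\mathrm{out}}}}(\Delta^i_{\hat r}) \le 0 .
\]
We then apply the standard MLE argument to the conditional product density (\Cref{lemma:conditionalMLE}) with a union bound over $\tilde{\Pi}$. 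Because the ratings and rankings are conditionally independent, the squared Hellinger distance of the product is at least the squared Hellinger distance of either factor. With probability $1-\delta$, the average squared Hellinger distance $\mathbb{E}_{x,a,a'}[H^2(P_{r^\star},P_{r_{\mathrm{out}}})]$ between the joint conditionals is therefore at most $2\log(\abs{\tilde{\Pi}}/\delta)/N$. This controls both marginal Hellinger distances simultaneously, which is what makes a minimum, rather than a single rate, available.

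We convert each marginal Hellinger bound into a squared-gap bound.

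\emph{Ranking branch.} We invoke \Cref{lemma:hellinger_upper_bound}, exactly as in the RDPO proof of \Cref{lemma:error_decomp}. For Bernoulli$(\sigma(u))$ versus Bernoulli$(\sigma(v))$ with $\abs{u},\abs{v}\le R_{\max}$, we have $(u-v)^2\lesssim e^{4R_{\max}}R_{\max}^2\,H^2$, which yields the $16e^{4R_{\max}}R^2_{\max}$ term.

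\emph{Rating branch.} The Hellinger distance between $\mathcal{N}(u,\mathbb{V})$ and $\mathcal{N}(v,\mathbb{V})$ is explicit: $H^2=1-\exp(-(u-v)^2/(8\mathbb{V}))$. Since $\abs{u-v}\le 2R_{\max}$, the elementary inequality $t\le (1+t)(1-e^{-t})$ gives
\[
(u-v)^2\le 8\mathbb{V}\Big(1+\frac{4R^2_{\max}}{8\mathbb{V}}\Big)H^2 = \big(8\mathbb{V}+4R^2_{\max}\big)H^2 .
\]
This has the form $(\,c_1(R_{\max}-R_{\min})^2+c_2\mathbb{V})H^2$ with no exponential factor. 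Taking expectations over $(x,a,a')\sim\initial\times\pidata\times\pidata$ and taking the smaller of the two constants gives the claim, with constants adjusted to those stated.

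The main obstacle is the rating branch. We must either get a polynomial (in $R_{\max}$ and $\mathbb{V}$) inverse-Hellinger inequality for Gaussians with bounded mean gap, or justify the MLE concentration lemma for an unbounded continuous density. If the Hellinger route is awkward, we fall back on a direct least-squares analysis: the Gaussian log-likelihood term is a squared loss, so the basic inequality combined with sub-Gaussian concentration of $\Delta^i_{\hat r}-\Delta^i_{r^\star}$ plus Bernstein over the finite class gives the $(R_{\max}^2+\mathbb{V})\log(\abs{\tilde\Pi}/\delta)/N$ rate. Here the ranking term in the basic inequality needs separate handling: we bound it using its nonnegative expectation together with concentration.
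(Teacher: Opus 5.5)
Your proposal is correct and follows essentially the same route as the paper: the conditional MLE Hellinger bound on the joint rating--ranking density, data processing to each marginal, \Cref{lemma:hellinger_upper_bound} for the Bradley--Terry factor, the closed-form Gaussian Hellinger distance with $1-e^{-t}\ge t/(1+t)$ for the rating factor, and finally the smaller of the two constants, which the paper obtains by choosing $\alpha\in\{0,1\}$ in a convex combination of the two Hellinger terms. Your explicit realizability check and your use of $|u-v|\le 2R_{\max}$ (the bound that the constraint defining $\tilde{\Pi}$ actually gives) only make steps the paper leaves implicit more precise, and the least-squares fallback is not needed.
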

\begin{proof}

By the general bound for the conditional MLE risk \Cref{lemma:conditionalMLE}, we have that for the above MLE problem, we have that
it holds that with probability $1 - \delta$
\begin{align*}
\sum_{x\in \X}\initial(x) \sum_{a,b \in \A\times\A} \pidata(a|x) &\pidata(b|x) \int_{z\in\mathcal{Z}} \br{\sqrt{P^{\mathrm{rat,rank}}_{r_{\mathrm{out}}}(z|x,a,b)} - \sqrt{P^{\mathrm{rat,rank}}_{r^\star}(z|x,a,b)} }^2 \\&\leq \frac{2 \log (\abs{\Pi'}/\delta) }{N}.
\end{align*}
where $N$ is the dataset size and $\mathcal{Z}$ is the set of all possible joint feedbacks (i.e. all possible rating and ranking pairs) , that is $\mathcal{Z} = \bc{0,1} \times \mathbb{R}$.
Under the conditional independence assumption, we have that by the data processing inequality for any $\alpha \in [0,1]$
\begin{align*}
(1-\alpha)&\sum_{x\in \X}\initial(x) \sum_{a,b \in \A\times\A} \pidata(a|x) \pidata(b|x) \sum_{z\in\bc{0,1}} \br{\sqrt{P^{\mathrm{rank}}_{r_{\mathrm{out}}}(z|x,a,b)} - \sqrt{P^{\mathrm{rank}}_{r^\star}(z|x,a,b)} }^2 \\
&+\alpha \sum_{x\in \X}\initial(x) \sum_{a,b \in \A\times\A} \pidata(a|x) \pidata(b|x) \int^{\infty}_{-\infty} \br{\sqrt{P^{\mathrm{rat}}_{r_{\mathrm{out}}}(z|x,a,b)} - \sqrt{P^{\mathrm{rat}}_{r^\star}(z|x,a,b)} }^2 dz \\ &\leq \frac{2 \log (\abs{\tilde{\Pi}}/\delta) }{N}.
\end{align*}
By \Cref{lemma:hellinger_upper_bound}, it holds that
\[
\sum_{z\in\bc{0,1}} \br{\sqrt{P^{\mathrm{rank}}_{r_{\mathrm{out}}}(z|x,a,b)} - \sqrt{P^{\mathrm{rank}}_{r^\star}(z|x,a,b)} }^2 \geq \frac{(r_{\mathrm{out}}(x,a) - r_{\mathrm{out}}(x,b) -
r^\star(x,a) - r^\star(x,b)
)^2}{16 e^{4 R_{\max}} R^2_{\max}}.
\]
Finally, since $P^{\mathrm{rat}}_{r}$ is Gaussian for any $r \in \mathcal{R}$ with same variance $\mathbb{V}$, we have that
\begin{align*}
\int^{\infty}_{-\infty} &\br{\sqrt{P^{\mathrm{rat}}_{r_{\mathrm{out}}}(z|x,a,b)} - \sqrt{P^{\mathrm{rat}}_{r^\star}(z|x,a,b)} }^2 dz \\&= 2 - 2\exp\br{\frac{-(r_{\mathrm{out}}(x,a) - r_{\mathrm{out}}(x,b) - r^\star(x,a) - r^\star(x,b) )^2}{8 \mathbb{V}}} \\
&\geq \frac{(r_{\mathrm{out}}(x,a) - r_{\mathrm{out}}(x,b) - r^\star(x,a) - r^\star(x,b) )^2}{4 \mathbb{V} ((r_{\mathrm{out}}(x,a) - r_{\mathrm{out}}(x,b) - r^\star(x,a) - r^\star(x,b) )^2/(8\mathbb{V}) + 1)} \\
&\geq \frac{(r_{\mathrm{out}}(x,a) - r_{\mathrm{out}}(x,b) - r^\star(x,a) - r^\star(x,b) )^2}{4 \mathbb{V} ((R_{\max} - R_{\min})^2/(2\mathbb{V}) + 1)} \\
&\geq \frac{(r_{\mathrm{out}}(x,a) - r_{\mathrm{out}}(x,b) - r^\star(x,a) - r^\star(x,b) )^2}{2(R_{\max} - R_{\min})^2 + 4 \mathbb{V}},
\end{align*}
where the first equality holds because of the closed form solution for the squared Hellinger distance between Gaussian distributions with the same variance and different means \footnote{\url{https://en.wikipedia.org/wiki/Hellinger_distance}} and the first inequality holds because $1-\exp(-x) \geq \nicefrac{x}{x+1}$ for all $x\geq 0$.
All in all, recalling that we denote $$\mathrm{Err}(r_{\mathrm{out}}) := \sum_{x\in \X}\initial(x) \sum_{a,b \in \A\times\A} \pidata(a|x) \pidata(b|x) (r_{\mathrm{out}}(x,a) - r_{\mathrm{out}}(x,b) - r^\star(x,a) - r^\star(x,b) )^2.$$
Then, we get that
\[
\br{ \frac{1-\alpha}{16e^{4 R_{\max}}R^2_{\max}} + \frac{\alpha}{2(R_{\max} - R_{\min})^2 + 4 \mathbb{V}}} \mathrm{Err}(r_{\mathrm{out}}) \leq \frac{2 \log (\tilde{\Pi}/\delta)}{N} 
\]
Therefore, we conclude that with probability $1-\delta$,
\[
\mathrm{Err}(r_{\mathrm{out}}) \leq \br{\frac{16 e^{4 R_{\max}}R^2_{\max}(2(R_{\max} - R_{\min})^2 + 4 \mathbb{V})}{(2(R_{\max} - R_{\min})^2 + 4 \mathbb{V})(1-\alpha) + 16 e^{4 R_{\max}}R^2_{\max}\alpha}}\frac{2 \log (\tilde{\Pi}/\delta)}{N}.
\]
At this point, choosing $\alpha = \mathds{1}\bc{2(R_{\max} - R_{\min})^2 + 4 \mathbb{V} \leq 16 e^{4 R_{\max}}R^2_{\max}}$, we have that
\begin{align*}
\mathrm{Err}(r_{\mathrm{out}}) &\leq \br{\frac{16 e^{4 R_{\max}}R^2_{\max}(2(R_{\max} - R_{\min})^2 + 4 \mathbb{V})}{\max \bc{2(R_{\max} - R_{\min})^2 + 4 \mathbb{V},  16 e^{4 R_{\max}}R^2_{\max}}}}\frac{2 \log (\tilde{\Pi}/\delta)}{N} \\
&= \min \bc{2(R_{\max} - R_{\min})^2 + 4 \mathbb{V},  16 e^{4 R_{\max}}R^2_{\max}}\frac{2 \log (\tilde{\Pi}/\delta)}{N} .
\end{align*}
Plugging back into Eq.~\ref{eq:circle} concludes the proof.
\end{proof}

\subsection{ML-RDPO with $\chi^2$ regularization}
\label{app:CstarMLRDPO}
In this section, we prove the guarantees for ML-RDPO run with an aditional $\chi^2$ regularization term which allows to attain single policy concentrability guarantees under the setting $\pidata=\piref$.
\begin{theorem} \label{thm:DPO+distilled_app}
Let us assume that $\piref=\pidata$ and that we have conditional independence between $z_i$ and $\Delta^i_{\hat{r}}$ given $x_i, a_i, a'_i$, and rating gaps are Gaussian $\Delta^i_{\hat{r}} \sim \mathcal{N}( \Delta^i_{r}, \mathbb{V})$, for all $i \in [N]$. Under these condition, let $\piout$ as the minimizer of \Cref{eq:ML_RDPO_phi} constrained over a policy class $\tilde{\Pi}$ such that $\pi \in \tilde{\Pi}$ implies that $\Delta_{\phi \pi/\piref} \in [-R_{\max}, R_{\max}]$. Moreover, let us set  $\phi(x)= \gamma x+\log(x)$ for $\gamma = \sqrt{\frac{\mathrm{Err}_{\max}}{\beta^2 C^\star}}$ where 
\[
\mathrm{Err}_{\max} := \min \bc{2(R_{\max} - R_{\min})^2 + 4 \mathbb{V},  16 e^{4 R_{\max}}R^2_{\max}}\frac{2 \log (\tilde{\Pi}/\delta)}{N}.
\]
Then, we obtain that with probability $1-\delta$ for any $\beta \in (0, \infty)$, $$J_\beta(\pi^\star;r^\star) - J_\beta(\piout;r^\star) \leq \tilde{\mathcal{O}} \br{\sqrt{\frac{ C^\star \min\bc{ e^{R_{\max}}R^2_{\max},R_{\max}^2 + \mathbb{V}} \log (\abs{\Pi}/\delta)}{N} }}.$$
\end{theorem}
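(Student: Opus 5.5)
The proof will combine the two ingredients already developed. The suboptimality-to-error conversion comes from \Cref{lemma:first_bound}, which handles $\chi^2$-augmented regularization. The concentration of the joint maximum-likelihood reward estimate comes from \Cref{lemma:routconc}. First, set $r_{\mathrm{out}}(x,a) = \beta\phi\br{\piout(a|x)/\piref(a|x)} + \zeta(x)$ with $\phi(x) = \gamma x + \log x$. By \Cref{lemma:dpo_rlhf_duality}, $\piout$ is then the maximizer of $\innerprod{\pi}{r_{\mathrm{out}}} - \beta\gamma D_{\chi^2}(\pi,\piref) - \beta D_{\mathrm{KL}}(\pi,\piref)$. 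Comparing the objective at $\pi^\star$ and at $\piout$, exactly as in the proof of \Cref{lemma:first_bound} but with $\alpha = 0$ (so $\beta' = \beta$ and $\bar r = r_{\mathrm{out}}$), gives
\[
J_\beta(\pi^\star;r^\star) - J_\beta(\piout;r^\star) \leq \innerprod{\pi^\star}{r^\star - r_{\mathrm{out}}} + \innerprod{\piout}{r_{\mathrm{out}} - r^\star} + \frac{\beta\gamma}{2}\br{C^\star - C^{\piout}}.
\]

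Next, handle the two inner products by the change-of-measure argument of \Cref{lemma:first_bound}. Add and subtract $r^\star(x,b) - r_{\mathrm{out}}(x,b)$ averaged over $b\sim\piref$; the resulting shift terms cancel between the two inner products. Then apply Cauchy--Schwarz against $\piref = \pidata$, followed by Jensen and AM-GM. This bounds the $\pi^\star$ term by $\frac{\beta\gamma C^\star}{2} + \frac{\mathrm{Err}(r_{\mathrm{out}})}{2\beta\gamma}$ and the $\piout$ term by $\frac{\beta\gamma C^{\piout}}{2} + \frac{\mathrm{Err}(r_{\mathrm{out}})}{2\beta\gamma}$. The $C^{\piout}$ contributions cancel, leaving
\[
J_\beta(\pi^\star;r^\star) - J_\beta(\piout;r^\star) \leq \beta\gamma C^\star + \frac{\mathrm{Err}(r_{\mathrm{out}})}{\beta\gamma}.
\]
Here the assumption $\pidata = \piref$ matters, because it makes the error measured under $\pidata$ coincide with the one needed in the change of measure.

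The remaining step, which I expect to be the main obstacle, is to show that $\mathrm{Err}(r_{\mathrm{out}}) \leq \mathrm{Err}_{\max}$ with probability $1-\delta$ for this $\phi$-parameterized reward class. The plan is to verify that $r_{\mathrm{out}}$ is the joint conditional MLE over $\mathcal{R}_{\tilde\Pi} = \bc{\beta\phi(\pi/\piref) + \zeta : \pi\in\tilde\Pi}$. This follows by repeating the derivation of \Cref{lemma:ml-rdpo_derivation} with $\Delta_{\phi\pi/\piref}$ in place of the log ratio, which yields \eqref{eq:ML_RDPO_phi}. Two further properties are needed:
\begin{itemize}
\item Realizability: $r^\star$ lies in the class up to a prompt-dependent shift. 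This follows from \Cref{ass:policy_realizability_main} applied to the $\phi^{-1}$-type optimal policy, as in the proof of \Cref{lemma:error_decomp}; if necessary the assumption can be read as covering the $\mathrm{KL}+\chi^2$ closed form.
\item Boundedness: reward gaps stay in $[-R_{\max},R_{\max}]$, which holds because $\piout\in\tilde\Pi$.
\end{itemize}
With these in place, \Cref{lemma:routconc} applies verbatim. Its ingredients are \Cref{lemma:conditionalMLE}, the splitting of the Hellinger distance under conditional independence, and the two lower bounds on Hellinger distance: \Cref{lemma:hellinger_upper_bound} for the Bernoulli part and the closed form for equal-variance Gaussians. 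Together they give $\mathrm{Err}(r_{\mathrm{out}}) \leq \mathrm{Err}_{\max}$.

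Finally, substitute $\gamma = \sqrt{\mathrm{Err}_{\max}/(\beta^2 C^\star)}$. This balances the two terms, so the bound becomes $2\sqrt{C^\star\,\mathrm{Err}_{\max}}$. Using $\abs{\tilde\Pi}\leq\abs{\Pi}$ and absorbing constants (and the $e^{4R_{\max}}$ versus $e^{R_{\max}}$ convention into $\tilde{\mathcal{O}}$) yields the stated rate.
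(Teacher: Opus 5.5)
Your proposal is correct and follows essentially the same route as the paper. The paper reruns the argument of \Cref{lemma:first_bound} with $\hat r$ replaced by $r_{\mathrm{out}}=\beta\phi(\piout/\piref)$ (i.e.\ $\alpha=0$) to get $\mathcal{O}(\sqrt{C^\star\,\mathrm{Err}})$ for any $\mathrm{Err}\ge \mathrm{Err}_{\piref}(r_{\mathrm{out}})$, identifies $r_{\mathrm{out}}$ with the joint MLE over $\mathcal{R}_{\tilde\Pi}$, and invokes \Cref{lemma:routconc}; your explicit realizability check is a step the paper leaves implicit.

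One caveat: the factor $e^{4R_{\max}}$ coming from \Cref{lemma:routconc} is not logarithmic, so it cannot genuinely be absorbed into $\tilde{\mathcal{O}}$ to become $e^{R_{\max}}$. The paper's own proof has the same mismatch with its stated bound, so this is a looseness of the statement rather than a flaw in your argument.
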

\begin{proof}
Using the same steps followed in \Cref{lemma:first_bound} changing the reward estimator from $\hat{r}$ to $r_{\mathrm{out}} = \beta \phi \br{\frac{\piout}{\piref}}$ and using the assumption that $\piref=\pidata$, we can prove that 
\begin{equation}
\innerprod{\pi^\star}{r^\star} - \innerprod{\pi_{\mathrm{out}}}{r^\star} 
 \leq  \mathcal{O}\br{ \sqrt{ \mathrm{Err} \cdot C^\star }}.\label{eq:circle}
\end{equation}
for any $\mathrm{Err} \geq \mathrm{Err}_{\piref}(r_{\mathrm{out}})$.
At this point, it remains to prove that $\mathrm{Err}_{\max}$, defined in the theorem statement, is indeed a high probability upper bound on $\mathrm{Err}_{\piref}(r_{\mathrm{out}})$.
To this end, let us introduce the reward class $\mathcal{R}_{\tilde{\Pi}}$ as follows
\[
\mathcal{R}_{\tilde{\Pi}} = \bc{ r ~|~ r = \beta \phi\br{\frac{\pi}{\piref}} ~~~\text{s.t.}~~~ \pi \in \tilde{\Pi}} .
\]

 Then, by definition of $\tilde{\Pi}$, we have that for any $r \in \mathcal{R}_{\tilde{\Pi}}$ it holds that $\Delta_r \in [-R_{\max},R_{\max}]$. Moreover, the definitions of $r_{\mathrm{out}}$ and $\pi_{\mathrm{out}}$ implies that $r_{\mathrm{out}}$ coincides with the ML-RDPO maximum likelihood estimator denoted as $\tilde{r}$. That is,
  
    \[
    r_{\mathrm{out}} = \tilde{r} = \argmax_{r\in\mathcal{R}_{\tilde{\Pi}}} \log P^{\mathrm{rat,rank}}_r( \mathcal{D}_{\mathrm{rat}},  \mathcal{D}_{\mathrm{rank}}).
    \]
Therefore, the proof is then concluded invoking again \Cref{lemma:routconc} which ensures that $\mathrm{Err}_{\piref}(r_{\mathrm{out}})$ is upper bounded in high probability by $\mathrm{Err}_{\max}$ .
\end{proof}

\section{Convergence guarantees for the heterogeneous case}
\label{app:guarantees_hetero}
In this section, we report the convergence guarantees for the case where we observe either completely disjoint or only partly overlapping rating and ranking datasets. The variants of ML-RDPO and RDPO that apply to this setting are derived in \Cref{app:hetereogenous_data}.
\subsection{Convergence guarantees for RDPO in the heterogeneous case (\Cref{alg:RDPO_heterogenous})}
We start by stating the convergence guarantees for RDPO in the heterogeneous setting. The following theorem treats the standard RDPO algorithm and the generalized version with extra $\chi^2$ regularization in a unified manner.
\begin{theorem}\label{thm:chi2}
Let us assume that $\piref=\pidata$, and run \Cref{alg:RDPO_heterogenous} for $\phi(x) = \gamma x + \log(x)$ with $\gamma =\sqrt{\frac{\mathrm{Err}_{\max}}{\beta^2(1-\alpha)^2} C^\star}$ with $\mathrm{Err}_{\max} = \frac{2\mathrm{Err_{DPO}}(N,\delta) \mathrm{Err}^{\max}_{\piref}(\hat{r}_{\mathrm{rat}}) }{\mathrm{Err_{DPO}}(N,\delta)  + \mathrm{Err}^{\max}_{\piref}(\hat{r}_{\mathrm{rat}}) }$, $\alpha = \br{1 + \mathrm{Err}^{\max}_{\piref}(\hat{r}_{\mathrm{rat}}) / \mathrm{Err_{DPO}}(N,\delta)}^{-1}$ and $\beta_1 = \frac{1-\alpha}{\alpha}\beta$.
If the ratings gap $\bc{\Delta^i_{\hat{r}_{\mathrm{rat}}}}^{N_{\mathrm{rat}}}_{i=1}$ are unbiased and $\mathbb{V}$ subgaussian, it holds that with probability at least $1-\delta$
\[
\mathrm{Err}_{\piref}(\hat{r}_{\mathrm{rat}})  \leq \mathrm{Err}^{\max}_{\piref}(\hat{r}_{\mathrm{rat}}) := \frac{c(R_{\max} + \sqrt{\mathbb{V}\log(2N_{\mathrm{rat}}/\delta)} )^2\log (\abs{\mathcal{R}}/\delta)}{N_{\mathrm{rat}}}
\]
for some $c\geq 0$ and that the suboptimality of the policy output by \Cref{alg:RDPO_heterogenous} is upper bounded, with probability at least $1-\delta$, as follows
\begin{align*}
J_{\frac{\beta\beta_1}{\beta+\beta_1}}(\pi^\star;r^\star) - J_{\frac{\beta\beta_1}{\beta+\beta_1}}(\piout;r^\star)&\leq \mathcal{O}\br{\sqrt{C^\star \frac{\mathrm{Err_{DPO}}(N,\delta) \mathrm{Err}^{\max}_{\piref}(\hat{r}_{\mathrm{rat}}) }{\mathrm{Err_{DPO}}(N,\delta)  + \mathrm{Err}^{\max}_{\piref}(\hat{r}_{\mathrm{rat}}) }}}\\
&\leq \mathcal{O} \br{\sqrt{ \frac{ R^2_{\max} e^{4 R_{\max}} (R_{\max} + \sqrt{\mathbb{V}\log(2N_{\mathrm{rat}}/\delta)})^2 \log \br{\abs{\mathcal{R}}(\abs{\mathcal{R}} + \abs{\Pi'})/\delta}}{R^2_{\max} e^{4 R_{\max}} N_{\mathrm{rat}} + (R_{\max} + \sqrt{\mathbb{V}\log(2N_{\mathrm{rat}}/\delta)})^2N_{\mathrm{rank}}} C^\star  }}.
\end{align*}
Moreover, even if $\pidata\neq\piref$, running \Cref{alg:RDPO_heterogenous} with $\gamma=0$ and all other hyperparameters unchanged ensures that with probability at least $1-\delta$
\begin{align*}
J_{\frac{\beta\beta_1}{\beta+\beta_1}}(\pi^\star;r^\star) - J_{\frac{\beta\beta_1}{\beta+\beta_1}}(\piout;r^\star)
&\leq \mathcal{O} \br{\sqrt{ \frac{ R^2_{\max} e^{4 R_{\max}} (R_{\max} + \sqrt{\mathbb{V}\log(2N_{\mathrm{rat}}/\delta)})^2 \log \br{\abs{\mathcal{R}}(\abs{\mathcal{R}} + \abs{\Pi'})/\delta}}{R^2_{\max} e^{4 R_{\max}} N_{\mathrm{rat}} + (R_{\max} + \sqrt{\mathbb{V}\log(2N_{\mathrm{rat}}/\delta)})^2N_{\mathrm{rank}}} C^{\max}  }}.
\end{align*}
\end{theorem}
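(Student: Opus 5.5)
The proof has three parts: a high-probability bound on the rating regressor $\hat{r}_{\mathrm{rat}}$, a reuse of the RDPO error decomposition with $\hat{r}_{\mathrm{rat}}$ in place of $\hat{r}$, and a final plug-in of these bounds.

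\emph{Step 1: least-squares bound.} Define the regression targets $y_i=\hat{r}(x'_i,\tilde{a}_i)-\hat{r}(x'_i,\bar{a}_i)$. By assumption these are unbiased for $\Delta_{r^\star}(x'_i,\tilde{a}_i,\bar{a}_i)$, with $\mathbb{V}$-subgaussian noise. A union bound over the $N_{\mathrm{rat}}$ samples truncates the noise: with probability $1-\delta/2$, every $|y_i|$ is at most $R_{\max}+\sqrt{2\mathbb{V}\log(2N_{\mathrm{rat}}/\delta)}$. On this event the square loss is bounded by $B^2$, where $B=\mathcal{O}(R_{\max}+\sqrt{\mathbb{V}\log(2N_{\mathrm{rat}}/\delta)})$. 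I will then apply a standard finite-class least-squares generalization bound (a Bernstein-plus-union-bound argument over $\mathcal{R}$, with realizability because $r^\star$ lies in $\mathcal{R}$ up to shifts). This gives
\[
\mathrm{Err}_{\piref}(\hat{r}_{\mathrm{rat}})\le \mathrm{Err}^{\max}_{\piref}(\hat{r}_{\mathrm{rat}}),
\]
using that the rating pairs are drawn from $\pidata=\piref$. This step is the main obstacle. Truncation slightly biases the noise, and the excess-risk-to-$L_2$ conversion must be done carefully, because the conditional mean is only realizable in differences. Working directly with the difference class $\{\Delta_r\}$ avoids the shift issue.

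\emph{Step 2: error decomposition.} I condition on the rating dataset, which is independent of $\mathcal{D}_{\mathrm{rank}}$, so $\hat{r}_{\mathrm{rat}}$ is fixed when the ranking MLE is analyzed. I then rerun the proof of \Cref{lemma:error_decomp} with $\hat{r}_{\mathrm{rat}}$ in place of $\hat{r}$. The mixed reward $\bar r=(1-\alpha)r_{\mathrm{out}}+\alpha\hat{r}_{\mathrm{rat}}$ satisfies, via Young's inequality,
\[
\mathrm{Err}_{\piref}(\bar r)\le 2(1-\alpha)^2\,\mathrm{Err}_{\piref}(r_{\mathrm{out}})+2\alpha^2\,\mathrm{Err}_{\piref}(\hat{r}_{\mathrm{rat}}).
\]
The first term is controlled through the covering over $\mathcal{R}$ combined with \Cref{lemma:conditionalMLE} and \Cref{lemma:hellinger_upper_bound}, exactly as in that proof. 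The relevant log factor becomes $\log(|\mathcal{R}|(|\mathcal{R}|+|\Pi'|)/\delta)$, and the first term is at most $\mathrm{Err_{DPO}}$ computed with $N_{\mathrm{rank}}$. With the stated choice of $\alpha$, the bound collapses to $\mathrm{Err}_{\max}=2\,\mathrm{Err_{DPO}}\,\mathrm{Err}^{\max}/(\mathrm{Err_{DPO}}+\mathrm{Err}^{\max})$. A union bound over the two events, each at level $\delta/2$, keeps the total failure probability at $\delta$.

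\emph{Step 3: conversion to suboptimality.} Once $\mathrm{Err}_{\piref}(\bar r)\le\mathrm{Err}_{\max}$ holds, \Cref{lemma:first_bound} applies verbatim with the stated $\gamma$, since its proof uses only the KL/$\chi^2$ duality of \Cref{lemma:dpo_rlhf_duality} and this error bound. This yields the $\sqrt{C^\star\,\mathrm{Err}_{\max}}$ rate. Substituting $\mathrm{Err_{DPO}}\asymp R_{\max}^2e^{4R_{\max}}\log(\cdot)/N_{\mathrm{rank}}$ and the Step 1 bound into the harmonic-mean form gives the explicit second inequality after clearing denominators.

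For $\pidata\ne\piref$ with $\gamma=0$, I replace \Cref{lemma:first_bound} with the change-of-measure argument from the proof of \Cref{thm:ratings_dpo}. That argument gives $2\sqrt{C^{\max}\,\mathrm{Err}(\bar r)}$, where all errors are now measured under $\pidata$. Steps 1 and 2 go through unchanged under $\pidata$, which produces the final bound.
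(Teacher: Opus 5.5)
Your proposal is correct and takes essentially the same route as the paper. The paper first bounds $\mathrm{Err}_{\piref}(\hat{r}_{\mathrm{rat}})\le\mathrm{Err}^{\max}_{\piref}(\hat{r}_{\mathrm{rat}})$ using the truncation plus finite-class least-squares argument of \Cref{lemma:rating_concentration}. It then invokes \Cref{thm:ratings_dpo_app} (that is, \Cref{lemma:error_decomp} together with \Cref{lemma:first_bound}) with $\hat{r}_{\mathrm{rat}}$ in place of $\hat{r}$, and for $\gamma=0$ it substitutes \Cref{thm:ratings_dpo}, exactly as you do.

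Your unpacking of that theorem is slightly more careful than the paper's one-line invocation, since you note explicitly that $\alpha$ is set from the deterministic bound $\mathrm{Err}^{\max}$ rather than from the random error. One correction: the independence of the two datasets that you condition on is not assumed by the paper and can fail when they partially overlap. It is the covering over $\mathcal{R}$, which you also use, that legitimately handles that case.
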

Next, we provide the proofs of both statements in a unified manner.
\begin{proof}\emph{of \Cref{thm:chi2}}
Let us start by proving the suboptimality bound assuming that $\mathrm{Err}_{\piref}(\hat{r}_{\mathrm{rat}}) \leq \mathrm{Err}^{\max}_{\piref}(\hat{r}_{\mathrm{rat}})$. We will later show that this fact indeed holds with high probability.

First notice that in the heterogeneous case $\Delta^i_{\hat{r}}$ is replaced by $\Delta^i_{\hat{r}_{\mathrm{rat}}}$. Therefore,  assuming $\piref=\pidata$, and using \Cref{thm:ratings_dpo_app} for $\phi(x) = \gamma x + \log(x)$ with $\gamma =\sqrt{\frac{\mathrm{Err}_{\max}}{\beta^2(1-\alpha)^2} C^\star}$  where $\alpha = \br{1 + \mathrm{Err}^{\max}_{\piref}(\hat{r}_{\mathrm{rat}}) / \mathrm{Err_{DPO}}(N,\delta)}^{-1}$, we have that 
with probability at least $1-\delta$,
 \begin{align*}
J_{\frac{\beta\beta_1}{\beta+\beta_1}}(\pi^\star;r^\star) - J_{\frac{\beta\beta_1}{\beta+\beta_1}}(\piout;r^\star) 
 &\leq  \mathcal{O}\br{\sqrt{C^\star \br{ \mathrm{Err_{DPO}}^{-1}(N,\delta)   + {(\mathrm{Err}^{\max}_{\piref}(\hat{r}_{\mathrm{rat}}))^{-1}}}^{-1}}} \\
 &= \mathcal{O}\br{\sqrt{C^\star \frac{\mathrm{Err_{DPO}}(N,\delta) \mathrm{Err}^{\max}_{\piref}(\hat{r}_{\mathrm{rat}}) }{\mathrm{Err_{DPO}}(N,\delta)  + \mathrm{Err}^{\max}_{\piref}(\hat{r}_{\mathrm{rat}}) }}}.
 \end{align*}
 Then replacing, the expression for $\mathrm{Err_{DPO}}(N,\delta)$, we obtain
 \[
J_{\frac{\beta\beta_1}{\beta+\beta_1}}(\pi^\star;r^\star) - J_{\frac{\beta\beta_1}{\beta+\beta_1}}(\piout;r^\star) \leq \mathcal{O}\br{\sqrt{C^\star \frac{\frac{32 R^2_{\max}e^{4 R_{\max}} \log(\abs{\Pi}\abs{\mathcal{R}}/\delta)}{N} \mathrm{Err}^{\max}_{\piref}(\hat{r}_{\mathrm{rat}}) }{\frac{32 R^2_{\max}e^{4 R_{\max}} \log(\abs{\Pi}\abs{\mathcal{R}}/\delta) }{N} + \mathrm{Err}^{\max}_{\piref}(\hat{r}_{\mathrm{rat}}) }}}.
 \]
 Then, it remains to show that $\mathrm{Err}_{\piref}(\hat{r}_{\mathrm{rat}}) \leq \mathrm{Err}^{\max}_{\piref}(\hat{r}_{\mathrm{rat}})$ with high probability. To achieve this goal, we invoke a standard result concerning the generalization error of the empirical risk minimization for the square loss with subgaussian noise ( see \Cref{lemma:rating_concentration}) to obtain that with probability at least $1-2\delta$
\begin{align*}
\mathrm{Err}_{\piref}(\hat{r}_{\mathrm{rat}}) &= \sum_{x\in \X} \initial(x) \sum_{a\in \A} \pi_{\mathrm{ref}}(a|x) \pi_{\mathrm{ref}}(b|x)\br{r^\star(x,a) - r^\star(x,b) - \hat{r}_{\mathrm{rat}}(x,a)+ \hat{r}_{\mathrm{rat}}(x,b)}^2 \\
&\leq \frac{c(R_{\max}-R_{\min} + \sqrt{\mathbb{V}\log(2N_{\mathrm{rat}}/\delta)})^2\log (\abs{\mathcal{R}}/\delta)}{N_{\mathrm{rat}}}\\
&\leq \frac{c(R_{\max} + \sqrt{\mathbb{V}\log(2N_{\mathrm{rat}}/\delta)} )^2\log (\abs{\mathcal{R}}/\delta)}{N_{\mathrm{rat}}} = \mathrm{Err}^{\max}_{\piref}(\hat{r}_{\mathrm{rat}}) ,
\end{align*}
for some $c \in \mathbb{R}$. Now, plugging in this bound and rearranging yields the conclusions for the statement with for $\gamma = \sqrt{\frac{\mathrm{Err}_{\max}}{\beta^2(1-\alpha)^2}C^\star}$.
The proof for $\gamma = 0$ is analogous invoking up to the difference of invoking \Cref{thm:ratings_dpo} instead of \Cref{thm:ratings_dpo_app}.
\end{proof}

\subsection{Convergence guarantees for ML-RDPO in the heterogeneous case}

For the guarantees of ML-RDPO, let us consider that the datasets $\mathcal{D}_{\mathrm{rat}}$ and $\mathcal{D}_{\mathrm{rank}}$ are generated as follows. First, $N$ state action pairs are collected offline from $\pidata$. For each of these pairs, we observe a rating with probability $p_{\mathrm{rat,obs}}$ and a ranking with probability $p_{\mathrm{rank,obs}}$.
    If a ranking is observed, the state action pair is added to $\mathcal{D}_{\mathrm{rank}}$. Similarly, if a rating is observed, the state action pair is appended to  $\mathcal{D}_{\mathrm{rat}}$.
    Therefore, state action pairs might appear in both datasets or only in one of the two.
\begin{theorem} \label{thm:DPO+distilled_unobserved}
    For any $\beta \in (0,\infty)$. Let us assume that $\pidata = \piref$ and consider $\piout$ computed as in Eq.~\ref{eq:ML_RDPO_het} with $\gamma = \sqrt{\frac{\mathrm{Err}_{\max}}{\beta^2 C^\star}}$, where the maximum error is defined as 
    \[
    \mathrm{Err}_{\max} =  \min \bc{\frac{(2(R_{\max} - R_{\min})^2 + 4 \mathbb{V})}{\mathbb{E}[N_{\mathrm{rat}}]}, \frac{16 e^{4 R_{\max}}R^2_{\max}}{\mathbb{E}[N_{\mathrm{rank}}]}} 2 \log (\abs{\Pi}/\delta) 
    \]
    Then, it holds that with probability $1-\delta$, 
    \begin{align*}
J_\beta(\pi^\star;r^\star) - J_\beta(\piout;r^\star)
 &\leq  
 \mathcal{O} \br{\sqrt{ C^\star\min \bc{\frac{(2(R_{\max} - R_{\min})^2 + 4 \mathbb{V})}{p_{\mathrm{obs,rat}}}, \frac{16 e^{4 R_{\max}}R^2_{\max}}{p_{\mathrm{obs,rank}}}} \frac{2 \log (\abs{\Pi}/\delta)}{N} }}\\
 &=\mathcal{O} \br{\sqrt{ C^\star \mathrm{Err}_{\max} }} .
\end{align*}

Moreover, if $\gamma = 0$, even if $\pidata \neq \piref$ we have that with probability at least $1-\delta$ it holds that
\[
J_\beta(\pi^\star;r^\star) - J_\beta(\piout;r^\star)
 \leq  
\mathcal{O} \br{\sqrt{ C^{\max} \mathrm{Err}_{\max} }}.
\]
\end{theorem}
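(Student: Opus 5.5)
The plan mirrors the proof of \Cref{thm:DPO+distilled_app}; the only new ingredient is a concentration step that handles the random, partially overlapping datasets. Set $r_{\mathrm{out}} = \beta\phi(\piout/\piref)$ with $\phi(x)=\gamma x + \log x$. By the change of variables in \Cref{app:hetereogenous_data}, $r_{\mathrm{out}}$ is exactly the joint maximum likelihood estimator
\[
\tilde r = \argmax_{r\in\mathcal{R}_{\tilde\Pi}} \log P^{\mathrm{rank}}_r(\mathcal{D}_{\mathrm{rank}}) + \log P^{\mathrm{rat}}_r(\mathcal{D}_{\mathrm{rat}}).
\]
Moreover, \Cref{lemma:dpo_rlhf_duality} shows that $\piout$ solves the $\mathrm{KL}+\gamma\chi^2$-regularized RLHF problem with reward $r_{\mathrm{out}}$. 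Repeating the argument of \Cref{lemma:first_bound} with $\pidata=\piref$ therefore gives
\[
J_\beta(\pi^\star;r^\star)-J_\beta(\piout;r^\star)\le \beta\gamma C^\star + \frac{\mathrm{Err}_{\piref}(r_{\mathrm{out}})}{\beta\gamma}.
\]
With the stated $\gamma$, this is $\mathcal{O}(\sqrt{C^\star\mathrm{Err}_{\max}})$ provided $\mathrm{Err}_{\piref}(r_{\mathrm{out}})\lesssim \mathrm{Err}_{\max}$ with high probability. For $\gamma=0$ I would instead use the $C^{\max}$ change of measure from the proof of \Cref{thm:ratings_dpo}, which removes the need for $\pidata=\piref$.

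The core task is to prove the analogue of \Cref{lemma:routconc}. I would view each of the $N$ sampled tuples $(x_i,a_i,a'_i)$ as generating a feedback $Z_i=(\mathbb{1}_{\mathrm{rank}}z_i,\ \mathbb{1}_{\mathrm{rat}}\Delta^i_{\hat r})$, where the observation indicators are independent of $r$, and the components that were not observed are replaced by a dummy symbol. Under this model the heterogeneous log-likelihood equals the conditional log-likelihood of $Z_i$ up to $r$-independent terms, because the Bernoulli factors for the observation indicators do not depend on $r$. So $r_{\mathrm{out}}$ is a conditional MLE over $N$ i.i.d.\ samples, and \Cref{lemma:conditionalMLE} bounds the expected squared Hellinger distance of $Z$ by $2\log(|\tilde\Pi|/\delta)/N$.

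Next I would lower bound this Hellinger distance using the product structure. The feedback $Z$ is a mixture over observation patterns with known weights, and given the pattern, rank and rating are conditionally independent. Hence the squared Hellinger distance of $Z$ is at least $p_{\mathrm{obs,rank}}H^2(\text{rank}) + p_{\mathrm{obs,rat}}H^2(\text{rat})$; this follows from the data-processing inequality applied pattern by pattern, using that the Hellinger distance of a product dominates that of each marginal. Plugging in \Cref{lemma:hellinger_upper_bound} for the Bradley--Terry term and the Gaussian Hellinger lower bound from \Cref{lemma:routconc} for the rating term gives
\[
\Big(\tfrac{p_{\mathrm{obs,rank}}}{16e^{4R_{\max}}R_{\max}^2}+\tfrac{p_{\mathrm{obs,rat}}}{2(R_{\max}-R_{\min})^2+4\mathbb{V}}\Big)\mathrm{Err}_{\piref}(r_{\mathrm{out}})\le \frac{2\log(|\tilde\Pi|/\delta)}{N}.
\]
Dropping one of the two terms bounds $\mathrm{Err}_{\piref}(r_{\mathrm{out}})$ by the minimum in the statement, and $|\tilde\Pi|\le|\Pi|$. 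Since $\mathbb{E}[N_{\mathrm{rat}}]=p_{\mathrm{obs,rat}}N$ and $\mathbb{E}[N_{\mathrm{rank}}]=p_{\mathrm{obs,rank}}N$, this bound is exactly $\mathrm{Err}_{\max}$.

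The step I expect to be the main obstacle is justifying this pattern-wise reduction. I need to check that unobserved entries can legitimately be encoded so that the heterogeneous loss really is a conditional MLE, which uses independence of the observation mechanism from $r$. The alternative route would condition on $N_{\mathrm{rat}}$ and $N_{\mathrm{rank}}$ and apply Chernoff bounds to replace them by their expectations, at the cost of an extra constant and failure probability. The mixture encoding avoids that and yields the expectation form directly.
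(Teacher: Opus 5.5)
Your proposal follows the paper's proof essentially step for step. The paper also encodes missing feedback with a dummy symbol $\emptyset$ (its $\tilde P^{\mathrm{rat}}_r$, $\tilde P^{\mathrm{rank}}_r$) and applies the conditional MLE bound to the joint law. It then extracts the factors $p_{\mathrm{obs,rank}}$ and $p_{\mathrm{obs,rat}}$, plugs in the Bradley--Terry and Gaussian Hellinger lower bounds, takes the minimum, and converts via $\mathbb{E}[N_{\mathrm{rat}}]=p_{\mathrm{obs,rat}}N$. It finishes with the $C^\star$ argument of \Cref{lemma:first_bound}, or the $C^{\max}$ change of measure when $\gamma=0$.

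The one step that is wrong as written is your additive lower bound $H^2(Z)\ge p_{\mathrm{obs,rank}}H^2(\mathrm{rank})+p_{\mathrm{obs,rat}}H^2(\mathrm{rat})$. Write $H_1^2$ and $H_2^2$ for the rank and rating distances. On the pattern where both feedbacks are observed, the squared Hellinger distance of the product is $H_1^2+H_2^2-\tfrac12 H_1^2H_2^2$. So ``the product dominates each marginal'' yields only the maximum, and the sum inequality fails whenever both distances are nonzero.

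This is harmless, because you then keep only one of the two terms. For that you only need the two separate inequalities $H^2(Z)\ge p_{\mathrm{obs,rank}}H_1^2$ and $H^2(Z)\ge p_{\mathrm{obs,rat}}H_2^2$, which follow from data processing onto each augmented marginal. This is exactly what the paper does: it combines them with convex weights $(1-\alpha,\alpha)$ and chooses $\alpha$ as an indicator. Equivalently, your displayed inequality holds once its right-hand side is doubled.
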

\begin{proof}
The proof differes from the homogeneous case presented in \Cref{thm:DPO+distilled_app} only because we need to bound the error of the implicit reward estimator, i.e. $\mathrm{Err}(r_{\mathrm{out}})$ differently as presented next.

For any $r \in \mathcal{R}_{\tilde{\Pi}}$, let us define the ratings and ranking probability law to take into account the possibility of not observing a rating or a ranking, respectively.
In particular, let us denote by $\emptyset$ the event where the rating or the ranking is not observed. Then, we define
\[
\tilde{P}^{\mathrm{rat}}_{r}(z|x,a,b) = 
\begin{cases}
P^{\mathrm{rat}}_{r}(z|x,a,b) p_{\mathrm{obs,rat}} &~~\text{if}~~ z \neq \emptyset\\
1 -  p_{\mathrm{obs,rat}} &~~\text{otherwise}
\end{cases}
\]
and
\[
\tilde{P}^{\mathrm{rank}}_{r}(z|x,a,b) = 
\begin{cases}
P^{\mathrm{rank}}_{r}(z|x,a,b) p_{\mathrm{obs,rank}} &~~\text{if}~~ z \neq \emptyset\\
1 -  p_{\mathrm{obs,rank}} &~~\text{otherwise}
\end{cases}
\]
ML-RDPO can therefore be seen as the algorithm seeking implicitly the maximizer of the following loglikelihood problem
\begin{align*}
r_{\mathrm{out}} &= \argmax_{r\in\mathcal{R}_{\tilde{\Pi}}} \log \tilde{P}^{\mathrm{rat,rank}}_r( \mathcal{D}_{\mathrm{rat}},  \mathcal{D}_{\mathrm{rank}}) \\&= \argmax_{r\in\mathcal{R}_{\tilde{\Pi}}} \sum^N_{i=1} \log (\tilde{P}^{\mathrm{rat}}_r(\Delta^i_{\hat{r}} | x'_i, \tilde{a}_i, \bar{a}_i)\tilde{P}^{\mathrm{rank}}_r(z_i | x_i, a_i, a'_i))
\end{align*}
where $\Delta^i_{\hat{r}}$'s and $z_i$'s are possibly equal to $\emptyset$.
Therefore, let us consider the squared Hellinger divergence
\[
D^2_H(p,q) = \int_{z\in\mathcal{Z}} \br{\sqrt{p(z|x,a,b)} - \sqrt{q(z|x,a,b)} }^2
\]
via the general bound for the conditional MLE risk, we have that with probability $1-\delta$,
\begin{align*}
\sum_{x\in \X}\initial(x) &\sum_{a,b \in \A\times\A} \pidata(a|x) \pidata(b|x) \int_{z\in\mathcal{Z}} \br{\sqrt{\tilde{P}^{\mathrm{rat,rank}}_{r_{\mathrm{out}}}(z|x,a,b)} - \sqrt{\tilde{P}^{\mathrm{rat,rank}}_{r^\star}(z|x,a,b)} }^2 \\ & \leq \frac{2 \log (\abs{\Pi}/\delta) }{N},
\end{align*}
where we also upper bounded the size of the hypothesis class using the fact that $\tilde{\Pi} \subset \Pi$.
At this point, due to the data processing inequality for the squared Hellinger divergence, we have that for all $x,a,b \in \X\times\A\times\A$ it holds that
\[
D_H^2(\tilde{P}^{\mathrm{rat,rank}}_{r_{\mathrm{out}}}(\cdot|x,a,b), \tilde{P}^{\mathrm{rat,rank}}_{r^\star}(\cdot|x,a,b)) \geq D_H^2(\tilde{P}^{\mathrm{rat}}_{r_{\mathrm{out}}}(\cdot|x,a,b), \tilde{P}^{\mathrm{rat}}_{r^\star}(\cdot|x,a,b))
\]
and
\[
D_H^2(\tilde{P}^{\mathrm{rat,rank}}_{r_{\mathrm{out}}}(\cdot|x,a,b), \tilde{P}^{\mathrm{rat,rank}}_{r^\star}(\cdot|x,a,b)) \geq D_H^2(\tilde{P}^{\mathrm{rank}}_{r_{\mathrm{out}}}(\cdot|x,a,b), \tilde{P}^{\mathrm{rank}}_{r^\star}(\cdot|x,a,b)).
\]
Therefore, for any $\alpha\in[0,1]$ we have that
\[
(1-\alpha) D_H^2(\tilde{P}^{\mathrm{rank}}_{r_{\mathrm{out}}}(\cdot|x,a,b), \tilde{P}^{\mathrm{rank}}_{r^\star}(\cdot|x,a,b)) + \alpha D_H^2(\tilde{P}^{\mathrm{rat}}_{r_{\mathrm{out}}}(\cdot|x,a,b), \tilde{P}^{\mathrm{rat}}_{r^\star}(\cdot|x,a,b)) \leq \frac{2 \log(\abs{\Pi}/\delta)}{N}
\]
At this point, let us connect the squared Hellinger divergence $D_H^2(\tilde{P}^{\mathrm{rat}}_{r_{\mathrm{out}}}(\cdot|x,a,b), \tilde{P}^{\mathrm{rat}}_{r^\star}(\cdot|x,a,b)) $ with $D_H^2(P^{\mathrm{rat}}_{r_{\mathrm{out}}}(\cdot|x,a,b), P^{\mathrm{rat}}_{r^\star}(\cdot|x,a,b)).$
\begin{align*}
D_H^2(\tilde{P}^{\mathrm{rat}}_{r_{\mathrm{out}}}(\cdot|x,a,b),& \tilde{P}^{\mathrm{rat}}_{r^\star}(\cdot|x,a,b))  = \int_{z\in\mathcal{Z} \setminus\bc{\emptyset}} \br{\sqrt{\tilde{P}^{\mathrm{rat}}_{r_{\mathrm{out}}}(z|x,a,b)} - \sqrt{\tilde{P}^{\mathrm{rat}}_{r^\star}(z|x,a,b)} }^2  \\
&\phantom{=}+ \br{\sqrt{\tilde{P}^{\mathrm{rat}}_{r_{\mathrm{out}}}(\emptyset|x,a,b)} - \sqrt{\tilde{P}^{\mathrm{rat}}_{r^\star}(\emptyset|x,a,b)} }^2 \\
&= \int_{z\in\mathcal{Z} \setminus\bc{\emptyset}} \br{\sqrt{\tilde{P}^{\mathrm{rat}}_{r_{\mathrm{out}}}(z|x,a,b)} - \sqrt{\tilde{P}^{\mathrm{rat}}_{r^\star}(z|x,a,b)} }^2 \\
&= \int_{z\in\mathcal{Z} \setminus\bc{\emptyset}} \br{\sqrt{p_{\mathrm{obs,rat}}P^{\mathrm{rat}}_{r_{\mathrm{out}}}(z|x,a,b)} - \sqrt{p_{\mathrm{obs,rat}}P^{\mathrm{rat}}_{r^\star}(z|x,a,b)} }^2 \\
&= p_{\mathrm{obs,rat}} \int_{z\in\mathcal{Z} \setminus\bc{\emptyset}} \br{\sqrt{P^{\mathrm{rat}}_{r_{\mathrm{out}}}(z|x,a,b)} - \sqrt{P^{\mathrm{rat}}_{r^\star}(z|x,a,b)} }^2 \\
&= p_{\mathrm{obs,rat}} D_H^2(P^{\mathrm{rat}}_{r_{\mathrm{out}}}(\cdot|x,a,b), P^{\mathrm{rat}}_{r^\star}(\cdot|x,a,b)).
\end{align*}
Analogously, one can get
\[
D_H^2(\tilde{P}^{\mathrm{rank}}_{r_{\mathrm{out}}}(\cdot|x,a,b), \tilde{P}^{\mathrm{rank}}_{r^\star}(\cdot|x,a,b))  =  p_{\mathrm{obs,rank}} D_H^2(P^{\mathrm{rank}}_{r_{\mathrm{out}}}(\cdot|x,a,b), P^{\mathrm{rank}}_{r^\star}(\cdot|x,a,b)).
\]
At this point, with the same steps followed for the proof of \Cref{thm:DPO+distilled_main}, we can get
\[
D_H^2(P^{\mathrm{rank}}_{r_{\mathrm{out}}}(\cdot|x,a,b), P^{\mathrm{rank}}_{r^\star}(\cdot|x,a,b)) \geq \frac{(r_{\mathrm{out}}(x,a) - r_{\mathrm{out}}(x,b) -
r^\star(x,a) - r^\star(x,b)
)^2}{16 e^{4 R_{\max}} R^2_{\max}}
\]
and 
\[
D_H^2(P^{\mathrm{rat}}_{r_{\mathrm{out}}}(\cdot|x,a,b), P^{\mathrm{rat}}_{r^\star}(\cdot|x,a,b)) \geq \frac{(r_{\mathrm{out}}(x,a) - r_{\mathrm{out}}(x,b) - r^\star(x,a) - r^\star(x,b) )^2}{2(R_{\max} - R_{\min})^2 + 4 \mathbb{V}}.
\]
All in all, we get, 
\begin{align*}
(1-\alpha) &D_H^2(\tilde{P}^{\mathrm{rank}}_{r_{\mathrm{out}}}(\cdot|x,a,b), \tilde{P}^{\mathrm{rank}}_{r^\star}(\cdot|x,a,b)) + \alpha D_H^2(\tilde{P}^{\mathrm{rat}}_{r_{\mathrm{out}}}(\cdot|x,a,b), \tilde{P}^{\mathrm{rat}}_{r^\star}(\cdot|x,a,b)) \\ &\geq 
(1-\alpha) \frac{p_{\mathrm{obs,rank}}(r_{\mathrm{out}}(x,a) - r_{\mathrm{out}}(x,b) -
r^\star(x,a) - r^\star(x,b)
)^2}{16 e^{4 R_{\max}} R^2_{\max}} \\ &\phantom{=}+ \alpha \frac{p_{\mathrm{obs,rat}}(r_{\mathrm{out}}(x,a) - r_{\mathrm{out}}(x,b) - r^\star(x,a) - r^\star(x,b) )^2}{2(R_{\max} - R_{\min})^2 + 4 \mathbb{V}}.
\end{align*}
Therefore,
\[
\br{ \frac{(1-\alpha) p_{\mathrm{obs,rank}}}{16e^{4 R_{\max}}R^2_{\max}} + \frac{\alpha p_{\mathrm{obs,rat}}}{2(R_{\max} - R_{\min})^2 + 4 \mathbb{V}}} \mathrm{Err}_{\piref}(r_{\mathrm{out}})
\leq \frac{2 \log (\abs{\Pi}/\delta)}{N},
\]
which leads to
the following bound with probability $1-\delta$,
\[
\mathrm{Err}_{\piref}(r_{\mathrm{out}}) \leq \br{\frac{16 e^{4 R_{\max}}R^2_{\max}(2(R_{\max} - R_{\min})^2 + 4 \mathbb{V})/ p_{\mathrm{obs,rank}} p_{\mathrm{obs,rat}} }{\nicefrac{(2(R_{\max} - R_{\min})^2 + 4 \mathbb{V})}{p_{\mathrm{obs,rat}}}(1-\alpha) + \nicefrac{16 e^{4 R_{\max}}R^2_{\max}}{p_{\mathrm{obs,rank}}}\alpha }}\frac{2 \log (\abs{\Pi}/\delta)}{N}.
\]
Therefore, for $\alpha = \mathds{1}\bc{ \nicefrac{16 e^{4 R_{\max}}R^2_{\max}}{p_{\mathrm{obs,rank}}} \geq \nicefrac{(2(R_{\max} - R_{\min})^2 + 4 \mathbb{V})}{p_{\mathrm{obs,rat}}}}$, we obtain that with probability at least $1-\delta$
\[
\mathrm{Err}_{\piref}(r_{\mathrm{out}}) \leq \mathrm{Err}_{\max} := \min \bc{\frac{(2(R_{\max} - R_{\min})^2 + 4 \mathbb{V})}{p_{\mathrm{obs,rat}}}, \frac{16 e^{4 R_{\max}}R^2_{\max}}{p_{\mathrm{obs,rank}}}} \frac{2 \log (\abs{\Pi}/\delta)}{N} .
\]
which allows to conclude the proof. The equality in the theorem statement holds because $p_{\mathrm{obs,rat}} N = \mathbb{E}[N_{\mathrm{rat}}]$ and $p_{\mathrm{obs,rank}} N = \mathbb{E}[N_{\mathrm{rank}}]$.

The proof for the case $\gamma = 0$, follows from the fact that
\[
J_\beta(\pi^\star;r^\star) - J_\beta(\piout;r^\star) \leq \sqrt{C^{\max} \mathrm{Err}(r_{\mathrm{out}})}
\]
established in the proof of \Cref{thm:DPO+distilled_main} and the bound $\mathrm{Err}(r_{\mathrm{out}}) \leq \mathrm{Err}_{\max}$ which holds with probability  at least $1-\delta$ as shown above.
\end{proof}
\section{Technical Lemmas}
This section contains technical results that are used in proving the guarantees for RDPO and ML-RDPO.
We start with an important duality result between RLHF and DPO.
Before delving in the proof, let us state \Cref{ass:policy_realizability_main} in the following equivalent form.
\begin{assumption}
Let us consider any reward $r:\X\times\A\rightarrow [R_{\min}, R_{\max}]$. Then, we assume that $\Pi$ is such that for any $\gamma\in[0, \infty)$, it holds that
\begin{align*}
\argmax_{\pi\in \Pi}& \innerprod{\pi}{r} - \beta \gamma D_{\chi^2}(\pi , \pi_{\mathrm{ref}}) - \beta D_{\mathrm{KL}}(\pi , \pi_{\mathrm{ref}})
\\ &= \argmax_{\pi\in \Pi_{\mathrm{all}}} \innerprod{\pi}{r} - \beta \gamma D_{\chi^2}(\pi , \pi_{\mathrm{ref}}) - \beta D_{\mathrm{KL}}(\pi , \pi_{\mathrm{ref}})
\end{align*}
\label{ass:policy_realizability}
\end{assumption}
The assumption says that the policy class $\Pi$ should be expressive enough to realize the solution of all RLHF problems associated with all possible bounded reward functions. 
The next lemma leverages the rewriting of Assumption~\ref{ass:policy_realizability_main} to show that any policy $\pi$ is the maximizer of the regularized RL problem having as reward the function $\beta \phi(\pi/\piref)$.
\begin{lemma}
\label{lemma:dpo_rlhf_duality}
 Let \Cref{ass:policy_realizability} hold. For any policy $\pi \in \Pi$, let us consider the reward function $r_{\pi}(x,a) = \beta  \phi\br{\frac{\pi(a|x)}{\piref(a|x)}}$ where $\phi(x) = \gamma x + \log(x)$.
Then, it holds true that
\[
\pi \in \argmax_{p\in \Pi} \innerprod{p}{r_{\pi}} - \beta \gamma D_{\chi^2}(p , \pi_{\mathrm{ref}}) - \beta D_{\mathrm{KL}}(p, \pi_{\mathrm{ref}}) .
\]
\end{lemma}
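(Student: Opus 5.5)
The natural route is to use Assumption~\ref{ass:policy_realizability}: it lets us replace the maximization over $\Pi$ with a maximization over all policies $\Pi_{\mathrm{all}}$. The unconstrained problem can then be solved in closed form, and we check that $\pi$ is its solution. Fix $\pi\in\Pi$ and set $r_\pi = \beta\phi(\pi/\piref)$. The objective
\[
F(p) := \innerprod{p}{r_\pi} - \beta\gamma D_{\chi^2}(p,\piref) - \beta D_{\mathrm{KL}}(p,\piref)
\]
decouples across prompts $x$, since each term is an $\initial$-weighted sum of per-prompt quantities. It therefore suffices to show that, for every $x$ with $\initial(x)>0$, the vector $\pi(\cdot|x)$ maximizes the per-prompt objective over the simplex $\Delta(\A)$.

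For a fixed $x$, the per-prompt objective is
\[
\sum_a p(a)\,r_\pi(x,a) - \frac{\beta\gamma}{2}\Big(\sum_a \frac{p(a)^2}{\piref(a|x)} - 1\Big) - \beta\sum_a p(a)\log\frac{p(a)}{\piref(a|x)}.
\]
This is strictly concave in $p$, because $p\mapsto p^2/\piref$ and $p\log p$ are convex. The KKT conditions are therefore sufficient for optimality. Differentiating in $p(a)$ and adding a multiplier $\lambda$ for the constraint $\sum_a p(a)=1$ gives
\[
r_\pi(x,a) - \beta\gamma\frac{p(a)}{\piref(a|x)} - \beta\log\frac{p(a)}{\piref(a|x)} - \beta + \lambda = 0.
\]
Because the log term diverges at the boundary, the optimum is interior and no multipliers for the nonnegativity constraints are needed. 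Now substitute $p=\pi(\cdot|x)$. The first three terms combine to $\beta\phi(\pi/\piref) - \beta\phi(\pi/\piref) = 0$, so the condition holds with $\lambda=\beta$. Hence $\pi(\cdot|x)$ is the unique maximizer over $\Delta(\A)$ for each $x$, and $\pi\in\argmax_{p\in\Pi_{\mathrm{all}}}F(p)$. Equivalently, one can view this as the closed form $p\propto\piref\,\phi^{-1}\bigl(r_\pi/\beta-\zeta(x)\bigr)$ stated earlier for the $\mathrm{KL}+\chi^2$ problem, with $\zeta\equiv 1$.

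To pass back to $\Pi$, there are two cases. If $\pi\in\Pi$ and $\pi$ maximizes $F$ over $\Pi_{\mathrm{all}}\supseteq\Pi$, then $\pi$ trivially maximizes $F$ over $\Pi$. This direct argument does not even need the assumption. Assumption~\ref{ass:policy_realizability} is only needed to identify the two argmax sets when the maximizer is not known to lie in $\Pi$.

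The main obstacle is interiority and support. If $\pi(a|x)=0$ for some $a$ with $\piref(a|x)>0$, then $r_\pi=-\infty$ there and the KKT argument must be restated with extended-valued rewards. I would handle this by restricting to actions in the support of $\pi$, or by assuming $\pi$ has full support on $\mathrm{supp}(\piref)$. The second option is implicit in the paper, since elements of $\Pi$ arise as solutions for bounded rewards. Boundedness of $r_\pi$ is not required for this direct argument.
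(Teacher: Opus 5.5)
Your proof is correct and follows the same structure as the paper's: show that $\pi$ maximizes the $\mathrm{KL}+\chi^2$-regularized objective over all policies, then restrict to $\Pi$. The only difference is that you verify the first-order/KKT condition directly, whereas the paper cites \citet[Lemma F.2]{huang2024correcting} for the $f$-divergence with $f(x)=\gamma x^2/2+x\log x$ and $0\notin\mathrm{dom}(f')$.

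Your observation that the restriction step needs only $\pi\in\Pi$ genuinely clarifies the paper's argument: \Cref{ass:policy_realizability} covers only rewards with values in $[R_{\min},R_{\max}]$, and $r_\pi$ need not lie there. One small slip in your aside: $\lambda=\beta$ corresponds to $\zeta\equiv 0$, not $\zeta\equiv 1$, in the closed form $\piref\,\phi^{-1}\bigl(r_\pi/\beta-\zeta(x)\bigr)$.
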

\begin{proof}
By \Cref{ass:policy_realizability}, we have that $\pi$ is the solution when the domain is enlarged from $\Pi$ to $\Pi_{\mathrm{all}}$, that is
\[
\pi \in \argmax_{p\in \Pi_{\mathrm{all}}} \innerprod{p}{r_{\pi}} - \beta \gamma D_{\chi^2}(p , \pi_{\mathrm{ref}}) - \beta D_{\mathrm{KL}}(p, \pi_{\mathrm{ref}}) 
\]

At this point, 
noticing that $\gamma D_{\chi^2}(\pi,\pi') + D_{\mathrm{KL}}(\pi,\pi')$ can be interpreted as $f$-divergence generated by $f(x) = \gamma \frac{x^2}{2} + x\log(x)$. Notice that $f'(x) = \gamma x + \log(x) + 1$ and therefore
since $0 \notin \mathrm{dom}(f')$. Therefore, the conditions of \citet[Lemma F.2]{huang2024correcting} are satisfied and we can invoke their result to conclude the proof.
\end{proof}

Next, we present two results that bound the generalization error for the estimator learned via ranking information. The first result upper bounds the absolute value error of $r_{\mathrm{out}}$ in terms of the Hellinger divergences between the Bradley-Terry model with ground truth reward $r$ and the one with reward $r_{\mathrm{out}}$.
\begin{lemma}(adapted from \citet[Lemma F.5]{huang2024correcting})
\label{lemma:hellinger_upper_bound}
Recall that $r^\star(x,a) - r^\star(x,b) \in [R_{\min} - R_{\max}, R_{\max} - R_{\min}]$ and $r_{\mathrm{out}}(x,a) - r_{\mathrm{out}}(x,b) \in [R_{\min} - R_{\max}, R_{\max} - R_{\min}]$ , then we define \[
P_r(\mathds{1}\bc{a ~\text{is preferred to~} b} |x,a,b) = \sigma  (r(x,a) - r(x,b))
\]
for any $r \in \mathcal{R}_{\Pi'}$. Then, it holds that
\begin{align*}
|r^\star(x,a) - r^\star(x,b) &- r_{\mathrm{out}}(x,a) + r_{\mathrm{out}}(x,b)| \\&\leq 4 e^{2 R_{\max}} R_{\max}
\sqrt{\sum_{z\in\bc{0,1}} (\sqrt{P_{r^\star}(z|x,a,b)} - \sqrt{P_{r_{\mathrm{out}}}(z|x,a,b)})^2 }
\end{align*}
\end{lemma}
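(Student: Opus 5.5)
The plan is to reduce the statement to a pointwise comparison between a difference of sigmoids and a Hellinger distance between two Bernoulli laws. Fix $(x,a,b)$ and write $u := r^\star(x,a) - r^\star(x,b)$ and $v := r_{\mathrm{out}}(x,a) - r_{\mathrm{out}}(x,b)$. By hypothesis both lie in $[-(R_{\max}-R_{\min}), R_{\max}-R_{\min}] \subseteq [-R_{\max}, R_{\max}]$, using $R_{\min}\geq 0$. The two preference laws are $\mathrm{Bern}(\sigma(u))$ and $\mathrm{Bern}(\sigma(v))$. Let $H$ denote the square root of the sum over $z\in\{0,1\}$ appearing on the right-hand side. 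The target is then $|u-v| \leq 4e^{2R_{\max}}R_{\max} H$.

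The first step lower bounds $H$ by a difference of sigmoids. Since $\sigma(u) - \sigma(v) = (\sqrt{\sigma(u)}-\sqrt{\sigma(v)})(\sqrt{\sigma(u)}+\sqrt{\sigma(v)})$ and $\sqrt{\sigma(u)}+\sqrt{\sigma(v)} \leq 2$, we get
\[
|\sigma(u)-\sigma(v)| \leq 2\,\bigl|\sqrt{\sigma(u)}-\sqrt{\sigma(v)}\bigr| \leq 2H .
\]
The same factorization works for the $z=0$ component, but one component is enough. This is the familiar bound of total variation by Hellinger distance.

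The second step lower bounds the sigmoid difference by $|u-v|$, and this is where the exponential factor enters. By the mean value theorem, $\sigma(u)-\sigma(v) = \sigma'(\xi)(u-v)$ for some $\xi$ between $u$ and $v$, so $|\xi| \leq R_{\max}$. Then
\[
\sigma'(\xi) = \frac{e^{-|\xi|}}{(1+e^{-|\xi|})^2} \geq \frac{e^{-R_{\max}}}{4}.
\]
Combining with the first step gives $|u-v| \leq 4e^{R_{\max}}\,|\sigma(u)-\sigma(v)| \leq 8 e^{R_{\max}} H$.

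The final step only matches constants, and I expect it to be the main obstacle. The bound $8e^{R_{\max}}H$ already implies the stated $4e^{2R_{\max}}R_{\max}H$ whenever $R_{\max}e^{R_{\max}} \geq 2$, but not for small $R_{\max}$. So the work is in aligning the constant with the form $4e^{2R_{\max}}R_{\max}$ inherited from \citet[Lemma F.5]{huang2024correcting}. For the regime $R_{\max}\geq 1$ the bound follows trivially, since $8e^{R_{\max}} \leq 4e^{2R_{\max}}R_{\max}$ there. For small $R_{\max}$ I would try to exploit that $|u-v| \leq 2R_{\max}$ is itself small relative to $H$. If that fails, I would simply state the lemma with constant $\max\{8e^{R_{\max}}, 4e^{2R_{\max}}R_{\max}\}$, which only affects absolute constants in $\mathrm{Err_{DPO}}$. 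In the subsequent uses the bound is squared, giving $(u-v)^2 \lesssim e^{4R_{\max}}R^2_{\max}\cdot \mathrm{Hellinger}^2$, so the $R_{\max}$-dependent constants only feed into $\mathrm{Err_{DPO}}(N,\delta)$.
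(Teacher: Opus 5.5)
Your two main steps are correct and form the standard argument. The paper gives no proof of its own; it only imports the bound from \citet[Lemma F.5]{huang2024correcting}. With $u,v\in[-R_{\max},R_{\max}]$ you get $|u-v|\le 4e^{R_{\max}}|\sigma(u)-\sigma(v)|\le 8e^{R_{\max}}H$. This gives the stated inequality whenever $R_{\max}e^{R_{\max}}\ge 2$, in particular for $R_{\max}\ge 1$.

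You can sharpen the first step by using both Bernoulli components. By Cauchy--Schwarz,
$2|\sigma(u)-\sigma(v)|=\sum_z|P_{r^\star}(z)-P_{r_{\mathrm{out}}}(z)|\le H\sqrt{\sum_z(\sqrt{P_{r^\star}(z)}+\sqrt{P_{r_{\mathrm{out}}}(z)})^2}\le 2H$.
Hence $|u-v|\le 4e^{R_{\max}}H$, which already suffices once $R_{\max}e^{R_{\max}}\ge 1$.

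The remaining case cannot be closed the way you suggest, because the statement itself is false for small $R_{\max}$. As $u,v\to 0$ one has $H/|u-v|\to 1/4$, so the ratio $|u-v|/H$ does not shrink just because $|u-v|\le 2R_{\max}$ is small. The claimed inequality therefore forces $4\le 4e^{2R_{\max}}R_{\max}$, i.e.\ $R_{\max}e^{2R_{\max}}\ge 1$, roughly $R_{\max}\ge 0.43$.

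Concretely, take $R_{\min}=0$, $R_{\max}=0.1$, $u=0.1$, $v=-0.1$. Then $H\approx 0.050$, so the right-hand side is about $0.024$ while the left-hand side is $0.2$.

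The right fix is not a cleverer argument but an extra hypothesis such as $R_{\max}\ge 1$, under which your proof is already complete. Alternatively, restate the lemma with a constant such as $4e^{R_{\max}}$.

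Your fallback comment is also slightly off. Replacing the constant by $\max\{8e^{R_{\max}},4e^{2R_{\max}}R_{\max}\}$ changes $\mathrm{Err_{DPO}}(N,\delta)$ by only an absolute constant when $R_{\max}$ is bounded away from zero. As $R_{\max}\to 0$, however, $R_{\max}^2e^{4R_{\max}}\to 0$, whereas the squared new constant stays at least $64$.
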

Second, we invoke a classic result on the generalization error of the maximum likelihood estimator to bound the Hellinger divergence.
\begin{lemma}(Conditional MLE Risk)\label{lemma:conditionalMLE} 
For any policy class $\Pi$, let us consider the following policy-induced reward class
\[
\mathcal{R}_{\Pi} = \bigg\{ r ~~|~~\exists \pi \in \Pi ~~\text{s.t.}~~r(x,a) = \beta \phi\br{\frac{\pi(a|x)}{\piref(a|x)}} ~~\forall x,a \in \X\times\A\bigg\}.
\]
Moreover, consider the conditional density $P_r:  \X \times \A \times \A \rightarrow \bs{0,1}$, the dataset $\mathcal{D}_{\mathrm{rank}} = \bc{x_i, a^+_i, a^-_i}$ with $\abs{\mathcal{D}_{\mathrm{rank}}} = N_{\mathrm{rank}}$ and the estimator
\[
r_{\mathrm{out}} = \argmax_{r \in \mathcal{R}_{\Pi}} \sum^{N_{\mathrm{rank}}}_{i=1} \log P_r(a_i^+ ~\text{is preferred to} ~a_i^{-} |x_i, a_i^-, a_i^+).
\]
Then, it holds that with probability $1 - \delta$
\begin{align*}
\sum_{x\in \X}\initial(x) & \sum_{a,b \in \A\times\A} \pidata(a|x) \pidata(b|x) \sum_{z\in\bc{0,1}} \br{\sqrt{P_{r_{\mathrm{out}}}(z|x,a,b)} - \sqrt{P_{r^\star}(z|x,a,b)} }^2 \\ &\leq \frac{2 \log (\abs{\Pi}/\delta) }{N_{\mathrm{rank}}}.
\end{align*}
\end{lemma}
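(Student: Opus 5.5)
The plan is the standard conditional-MLE argument via a union bound over a finite hypothesis class, with the Hellinger loss controlled by a log-likelihood ratio martingale (Chernoff) bound. I will index hypotheses by $\pi\in\Pi$, through the induced reward $r_\pi = \beta\phi(\pi/\piref)\in\mathcal{R}_\Pi$, so $|\mathcal{R}_\Pi|\le|\Pi|$. Two facts are used throughout:
\begin{itemize}
\item the tuples $(x_i,a_i,a'_i)$ are i.i.d.\ from $\initial\otimes\pidata\otimes\pidata$;
\item $z_i$ is drawn from $P_{r^\star}(\cdot\mid x_i,a_i,a'_i)$.
\end{itemize}
Realizability is needed in the form that some element of $\mathcal{R}_\Pi$ induces the same Bradley--Terry law as $r^\star$. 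This holds by \Cref{ass:policy_realizability_main} together with the shift invariance of the BT model, exactly as argued in the proof of \Cref{lemma:error_decomp}.

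\emph{Step 1: exponential moment bound.} Fix any $r\in\mathcal{R}_\Pi$ and define the summands $\ell_i(r)=\tfrac12\log\bigl(P_r(z_i|x_i,a_i,a'_i)/P_{r^\star}(z_i|x_i,a_i,a'_i)\bigr)$. Conditioning on $(x_i,a_i,a'_i)$ gives
\[
\mathbb{E}[e^{\ell_i(r)}\mid x_i,a_i,a'_i]=\sum_{z}\sqrt{P_rP_{r^\star}}=1-\tfrac12 D_H^2\bigl(P_r,P_{r^\star}\bigr)(x_i,a_i,a'_i).
\]
By independence across $i$, and using $1-u\le e^{-u}$, this yields
\[
\mathbb{E}\exp\Bigl(\sum_i \ell_i(r)\Bigr)=\bigl(1-\tfrac12\bar D^2_H(r)\bigr)^{N}\le \exp\bigl(-\tfrac N2\bar D^2_H(r)\bigr),
\]
where $\bar D^2_H(r)$ is the expected squared Hellinger distance appearing on the left side of the lemma.

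\emph{Step 2: Markov and union bound.} Applying Markov's inequality to the previous display for each fixed $r$, and then taking a union bound over the at most $|\Pi|$ hypotheses, gives the following with probability $1-\delta$, simultaneously for all $r\in\mathcal{R}_\Pi$:
\[
\tfrac N2\bar D^2_H(r)\le -\sum_i\ell_i(r)+\log(|\Pi|/\delta).
\]

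\emph{Step 3: apply to the estimator.} Specialize to $r=r_{\mathrm{out}}$. Because $r_{\mathrm{out}}$ maximizes the empirical log-likelihood over a class that contains a BT-equivalent copy of $r^\star$, we have $\sum_i\ell_i(r_{\mathrm{out}})\ge 0$. Dropping this term and dividing by $N/2$ gives $\bar D_H^2(r_{\mathrm{out}})\le 2\log(|\Pi|/\delta)/N$, which is the claim.

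The main subtlety is not the concentration itself but making the union bound legitimate. The class $\mathcal{R}_\Pi$ must be fixed before the data are drawn, and realizability must hold at the level of preference probabilities rather than rewards. For the plain form stated, $\phi$ and $\beta$ are fixed, so the class is deterministic and the argument goes through directly. When the lemma is invoked with data-dependent classes (those involving $\hat r$), one must union-bound additionally over the candidate $\tilde r$. That is how the paper later handles it, at the cost of the extra $\log|\mathcal{R}|$ factor.
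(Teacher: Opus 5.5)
Your proof is correct, with constants that reproduce the stated $2\log(\abs{\Pi}/\delta)/N_{\mathrm{rank}}$ exactly, and it is the standard argument behind the classic conditional MLE guarantee that the paper invokes without proof (cf.\ \citet{foster2023foundations}): exponential moment of the half log-likelihood ratio, then Markov and a union bound over the finite class, then MLE optimality against a BT-equivalent copy of $r^\star$. Your remark that realizability at the level of preference probabilities is genuinely required is also correct, since the statement's ``for any policy class $\Pi$'' holds only under \Cref{ass:policy_realizability_main}.
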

Finally, we provide the bound for the least square estimate using as targets the observed rating gaps, which we assume to be unbiased and Gaussian distributed.
\begin{lemma}(Concentration for the rating reward)\label{lemma:rating_concentration}
Let us consider the generalization error for the rating reward estimator
\[
\mathrm{Err}(\hat{r}_{\mathrm{rat}}) =\sum_{x\in \X} \initial(x) \sum_{a\in \A} \pidata(a|x) \pidata(b|x)\br{r^\star(x,a) - r^\star(x,b) - \hat{r}_{\mathrm{rat}}(x,a)+ \hat{r}_{\mathrm{rat}}(x,b)}^2
\]
where 
\[
\hat{r}_{\mathrm{rat}} = \argmin_{r \in \mathcal{R}} \sum^{N_{\mathrm{rat}}}_{i=1} (r(x'_i, \bar{a}_i) - r(x'_i, \tilde{a}_i) - \Delta^i_{\hat{r}} )^2,
\]
and $\Delta^i_{\hat{r}} | x'_i, \bar{a}_i, \tilde{a}_i$ is a $\mathbb{V}$-sub Gaussian random variable with mean $r^\star(x'_i, \bar{a}_i) - r^\star(x'_i, \tilde{a}_i)$. Then we have that
with probability at least $1-2\delta$,
\[
\mathrm{Err}(\hat{r}_{\mathrm{rat}}) \leq \mathcal{O}\br{\frac{(R_{\max} - R_{\min} + \sqrt{\mathbb{V}\log (2N_{\mathrm{rat}}/\delta)})^2 \log(\frac{\abs{\mathcal{R}}}{\delta})}{N_{\mathrm{rat}}}}
\]
\end{lemma}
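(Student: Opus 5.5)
The statement to prove is \Cref{lemma:rating_concentration}, a least-squares generalization bound for a finite class under sub-Gaussian noise. I would first reduce to bounded noise, then apply a Bernstein-type concentration argument over the class $\mathcal{R}$, in the spirit of the conditional MLE bound \Cref{lemma:conditionalMLE}.

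\textbf{Step 1: setup.} Write $y_i := \Delta^i_{\hat{r}}$, and for each $r\in\mathcal{R}$ set $g_r(x,a,b) := r(x,a)-r(x,b)$. Let $g^\star := g_{r^\star}$ and $\xi_i := y_i - g^\star(x'_i,\bar a_i,\tilde a_i)$. By assumption each $\xi_i$ is conditionally mean zero and $\mathbb{V}$-sub-Gaussian. I assume the rewards in $\mathcal{R}$ take values in $[R_{\min},R_{\max}]$, so every gap satisfies $|g_r|\le R_{\max}-R_{\min}=:B$. I also assume $\mathcal{R}$ realizes $r^\star$ up to prompt-dependent shifts, as in the proof of \Cref{lemma:error_decomp}; this gives some $\bar r^\star\in\mathcal{R}$ with $g_{\bar r^\star}=g^\star$.

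\textbf{Step 2: truncation.} A sub-Gaussian tail bound plus a union bound over the $N_{\mathrm{rat}}$ samples shows that, with probability at least $1-\delta$,
\[
\max_i|\xi_i|\le \sqrt{2\mathbb{V}\log(2N_{\mathrm{rat}}/\delta)}.
\]
On this event, every squared residual $(g_r-y_i)^2$ is bounded by $(2B+\sqrt{2\mathbb{V}\log(2N_{\mathrm{rat}}/\delta)})^2=:M^2$. This explains the $(R_{\max}-R_{\min}+\sqrt{\mathbb{V}\log(2N_{\mathrm{rat}}/\delta)})^2$ factor in the statement.

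The truncation changes the noise distribution, so conditional centering no longer holds exactly. I would handle this by clipping the noise symmetrically. The bias introduced is then of order $B\,\delta/N_{\mathrm{rat}}$, which is lower order. Alternatively, one can carry the sub-Gaussian moment bound directly through a Bernstein inequality, which avoids truncation altogether.

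\textbf{Step 3: excess-loss concentration.} For fixed $r$, define the excess loss
\[
Z_i(r) := (g_r-y_i)^2-(g^\star-y_i)^2 = (g_r-g^\star)^2 - 2\xi_i(g_r-g^\star),
\]
with all gaps evaluated at sample $i$. Its mean is $\mathbb{E}[Z(r)] = \mathrm{Err}(r)$, since the cross term vanishes by conditional centering. Its variance is at most $c\,M^2\,\mathrm{Err}(r)$, and it is bounded by $M^2$.

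Bernstein's inequality plus a union bound over $\mathcal{R}$ then gives, for all $r$ simultaneously with probability at least $1-\delta$,
\[
\mathrm{Err}(r) - \frac{1}{N_{\mathrm{rat}}}\sum_i Z_i(r) \le \sqrt{\frac{2cM^2\,\mathrm{Err}(r)\log(|\mathcal{R}|/\delta)}{N_{\mathrm{rat}}}} + \frac{cM^2\log(|\mathcal{R}|/\delta)}{N_{\mathrm{rat}}}.
\]

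\textbf{Step 4: conclusion.} Apply this at $r=\hat r_{\mathrm{rat}}$. Because $\hat r_{\mathrm{rat}}$ is the empirical risk minimizer and $\bar r^\star\in\mathcal{R}$, the empirical excess loss satisfies $\sum_i Z_i(\hat r_{\mathrm{rat}})\le 0$. The inequality becomes $\mathrm{Err}\le\sqrt{a\,\mathrm{Err}}+a$ with $a\asymp M^2\log(|\mathcal{R}|/\delta)/N_{\mathrm{rat}}$, which solves to $\mathrm{Err}\le 3a$. A final union bound over the truncation event and the Bernstein event gives total failure probability $2\delta$, which is the claimed bound.

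I expect the main obstacle to be the variance bound on $Z(r)$ together with the truncation bias in Steps 2 and 3. The $(g_r-g^\star)^2$ term is easy to control. The cross term $\xi(g_r-g^\star)$ has conditional second moment $\mathbb{V}(g_r-g^\star)^2$, which gives the $\mathrm{Err}(r)$-proportional variance Bernstein needs. Care is needed to keep the centering valid after clipping the noise.
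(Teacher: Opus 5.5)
Your proposal is correct and follows the same skeleton as the paper's proof. A sub-Gaussian tail bound with a union bound over the $N_{\mathrm{rat}}$ samples confines every target to an interval of half-width $R_{\max}-R_{\min}+\sqrt{\mathbb{V}\log(2N_{\mathrm{rat}}/\delta)}$ with probability $1-\delta$. A bounded-target least-squares bound over the finite class $\mathcal{R}$ then holds with probability $1-\delta$, and the two failure events add up to $2\delta$.

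The paper cites \citet[Chapter~1, Exercise~1]{foster2023foundations} for the bounded step, which your Bernstein-plus-ERM argument in Steps 3--4 reproves. It removes the truncation by conditioning on the bounded event and applying the law of total probability. Your symmetric clipping is the more careful of the two: conditioning on $\{|\xi_i|\le t\}$ can shift the conditional mean of the noise and reweight the covariate law, which the paper passes over. One small correction: the clipping bias in $\mathbb{E}[Z(r)]$ is of order $B\sqrt{\mathbb{V}}\,\delta/N_{\mathrm{rat}}$ rather than $B\delta/N_{\mathrm{rat}}$, but it is negligible either way.
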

\begin{proof}
By the definition of a subgaussian random variable, we have that
\[
\mathbb{P}\br{ \abs{\Delta^i_{\hat{r}} - r^\star(x'_i, \bar{a}_i) + r^\star(x'_i, \tilde{a}_i)} \geq t ~~~\bigg|~~~ x'_i, \bar{a}_i, \tilde{a}_i} \leq 2 e^{-t^2/\mathbb{V}}
\]
for any scalar $t \in \mathbb{R}$. Therefore, for $t = \sqrt{\mathbb{V} \log \br{2 N_{\mathrm{rat}}/\delta}}$, and a union bound we have that
\[
\mathbb{P}\br{ \abs{\Delta^i_{\hat{r}} - r^\star(x'_i, \bar{a}_i) + r^\star(x'_i, \tilde{a}_i)} \geq  \sqrt{ \mathbb{V} \log \br{2 N_{\mathrm{rat}} /\delta}} ~~\forall i \in [N_{\mathrm{rat}}] ~~~\bigg|~~~ x'_i, \bar{a}_i, \tilde{a}_i} \leq \delta
\]
\[ \implies
\mathbb{P}\br{ \abs{\Delta^i_{\hat{r}}} \geq R_{\max} - R_{\min} + \sqrt{ \mathbb{V} \log \br{2 N_{\mathrm{rat}} /\delta}} ~~\forall i \in [N_{\mathrm{rat}}] ~~~\bigg|~~~ x'_i, \bar{a}_i, \tilde{a}_i} \leq \delta
\]
Therefore, with probability $1-\delta$ the regression targets $\bc{\Delta^i_{\hat{r}}}^{N_{\mathrm{rat}}}_{i=1}$ are in the interval $[- R_{\max} + R_{\min} -  \sqrt{\mathbb{V} \log \br{2  N_{\mathrm{rat}}/\delta}}, R_{\max} - R_{\min} + \sqrt{ \mathbb{V} \log \br{2N_{\mathrm{rat}}/\delta}}]$.
Invoking \cite{foster2023foundations}[Chapter 1, Exercise 1 ], we have that under the event for $\mathcal{E}_{\mathrm{bounded}} = \bc{\Delta^i_{\hat{r}} \in [-B, B], B=R_{\max} - R_{\min} +  \sqrt{\mathbb{V} \log \br{2N_{\mathrm{rat}}/\delta}}}$, it holds that
\[
\mathbb{P}\br{\mathrm{Err}(\hat{r}_{\mathrm{rat}}) \leq \mathcal{O}\br{\frac{B^2 \log(\frac{\abs{\mathcal{R}}}{\delta})}{N_{\mathrm{rat}}}} \bigg | \mathcal{E}_{\mathrm{bounded}} } \geq 1-\delta.
\]
Finally, to remove the conditioning $\mathcal{E}_{\mathrm{bounded}}$, by the law of total probability we have that
\begin{align*}
\mathbb{P}&\br{\mathrm{Err}(\hat{r}_{\mathrm{rat}}) \geq \mathcal{O}\br{\frac{B^2 \log(\frac{\abs{\mathcal{R}}}{\delta})}{N_{\mathrm{rat}}}} } \\&= \mathbb{P}\br{\mathrm{Err}(\hat{r}_{\mathrm{rat}}) \geq \mathcal{O}\br{\frac{B^2 \log(\frac{\abs{\mathcal{R}}}{\delta})}{N_{\mathrm{rat}}}} \bigg | \mathcal{E}_{\mathrm{bounded}} } \mathbb{P}\br { \mathcal{E}_{\mathrm{bounded}} } \\ &\phantom{=} + 
\mathbb{P}\br{\mathrm{Err}(\hat{r}_{\mathrm{rat}}) \geq \mathcal{O}\br{\frac{B^2 \log(\frac{\abs{\mathcal{R}}}{\delta})}{N_{\mathrm{rat}}}} \bigg | \mathcal{E}^c_{\mathrm{bounded}} } \mathbb{P}\br { \mathcal{E}^c_{\mathrm{bounded}} } \\
&\leq \mathbb{P}\br{\mathrm{Err}(\hat{r}_{\mathrm{rat}}) \geq \mathcal{O}\br{\frac{B^2 \log(\frac{\abs{\mathcal{R}}}{\delta})}{N_{\mathrm{rat}}}} \bigg | \mathcal{E}_{\mathrm{bounded}} }  + \mathbb{P}\br { \mathcal{E}^c_{\mathrm{bounded}} } \\
&\leq 2\delta.
\end{align*}
\end{proof}
\newpage
\section{Additional Experiments}
\label{app:experiments}
In this section, we provide a clarification concerning how we compute the error bars in our plots and some additional experiments.

\paragraph{Error Bars} The error bars reported in all plots showing the win rate against GPT-4 are measured as standard deviations over the evaluations sets: i.e. AlpacaEval and ArenaHard treating each prompt as an independent sample. This is the standard estimation of the win rate uncertainty in the  AlpacaEval and ArenaHard libraries. Differently in Figure~\ref{fig:intro_exp} we compute the standard deviation of the win rate across the different choices of $\piref$.

\paragraph{Additional experiments} Next, we show in \Cref{sec:battles} the win rates between each model pairs trained starting from a base model with 7 or 8 B parameters . Then, we report additional experiments with perturbed rating information in \Cref{sec:noisy}.  Experiments on the smaller models SmolLM2-135M\footnote{\url{https://huggingface.co/HuggingFaceTB/SmolLM2-135M}}, SmoLM2-360M \footnote{\url{https://huggingface.co/HuggingFaceTB/SmolLM2-360M}} and SmolLM2-1.7B \footnote{\url{https://huggingface.co/HuggingFaceTB/SmolLM2-1.7B}} are presented in \Cref{sec:SmolLM2}. 
Finally, we present in \Cref{app:zephyr_ablation} the ablations we performed on Zephyr-7B to choose the values of $\beta_1$ and $\mathbb{V}$ which we used also for the experiments on Llama-3.1-8B and Mistral-7B. We conclude in \Cref{app:exp_constrain} showing the practical benefit of restricting the policy class to $\Pi'$ as done for the proof of \Cref{thm:ratings_dpo}.
\subsection{Relative win rates at 7/8B scale}
\label{sec:battles}
\begin{figure*}[!h] 
\centering
\begin{tabular}{ccc}
\subfloat[Llama-3.1-8B]{%
    \includegraphics[width=0.31\linewidth]{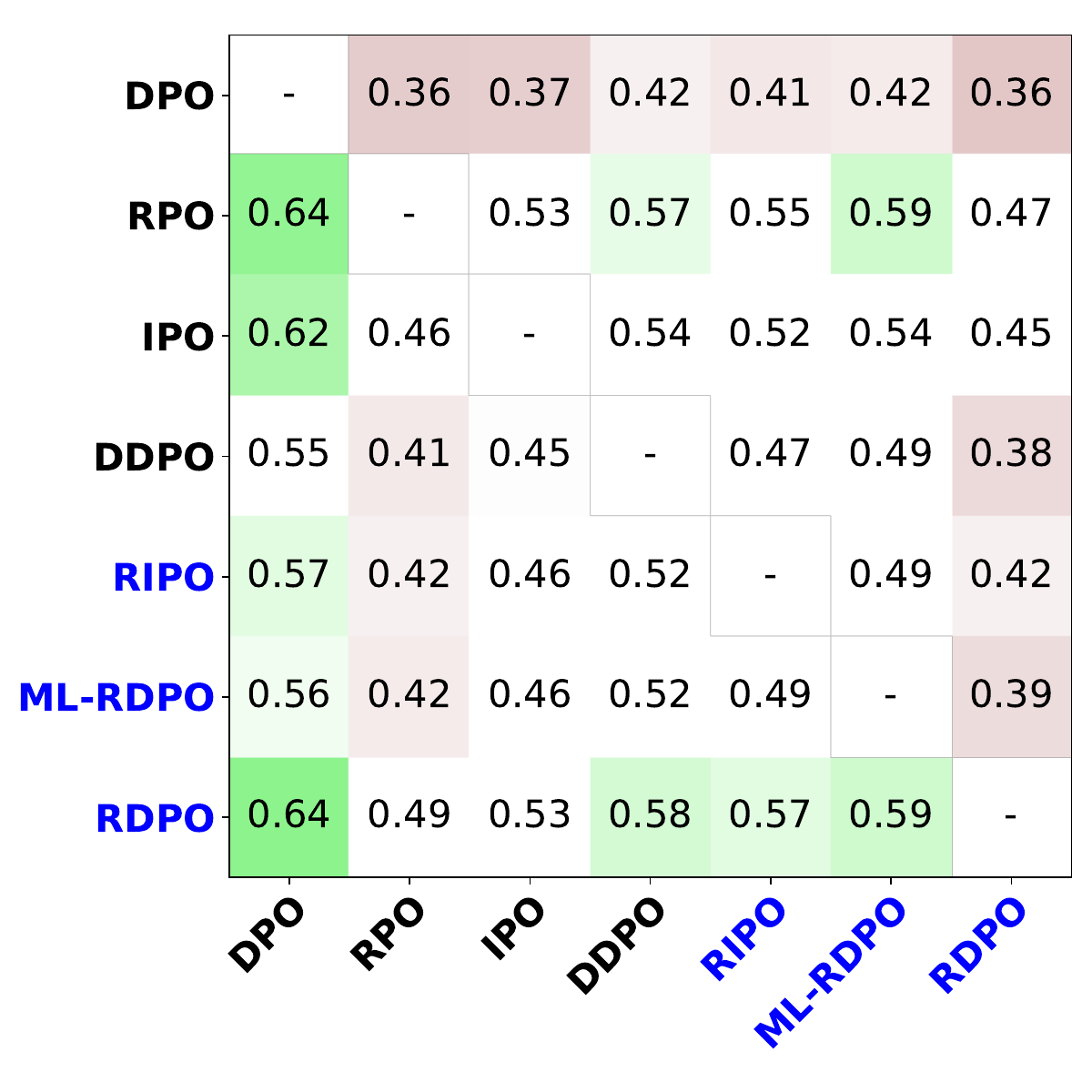}
     } &
\subfloat[Mistral-7B]{%
\includegraphics[width=0.31\linewidth]{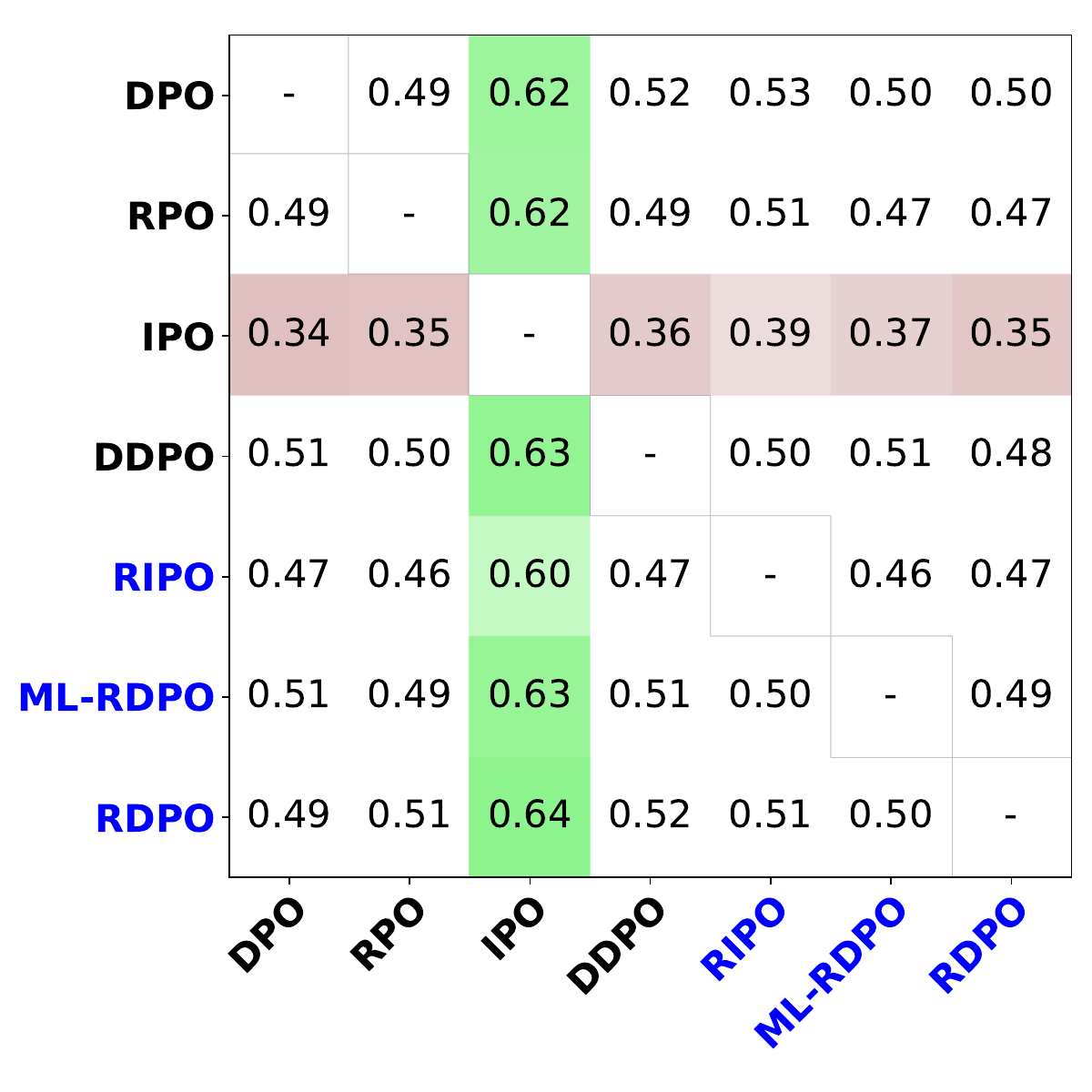}
     } &
\subfloat[Zephyr-7B]{%
\includegraphics[width=0.31\linewidth]{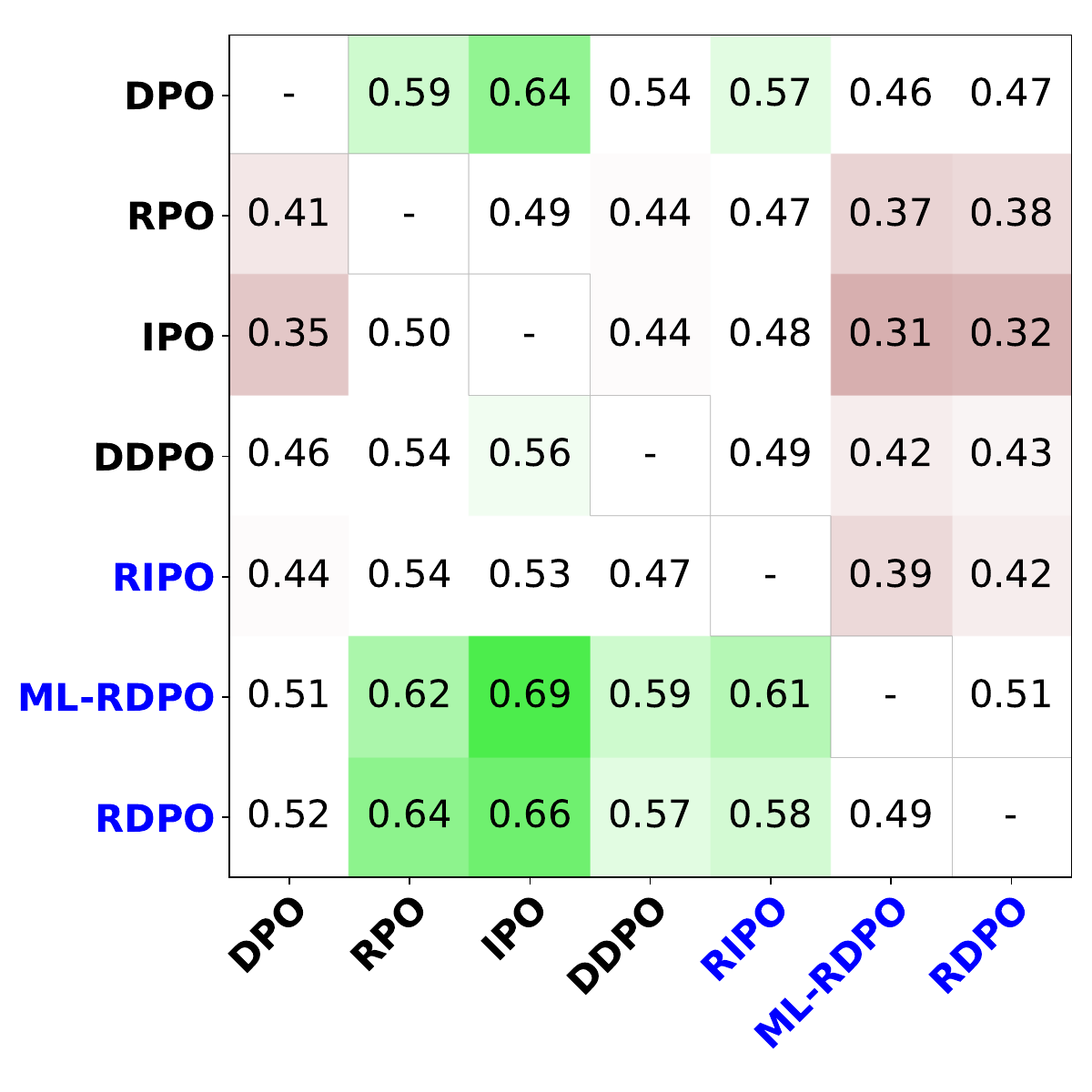}
     } 
\end{tabular}
\caption{ Model by model comparison using Llama, Mistral and Zephyr as base models. Each entry of the table reports the win rate of the row player against the column player.}
\label{fig:7Bbattles}
\end{figure*}
We see that RDPO achieves a high win rate against any other alignment methods for all three choices for $\piref$. ML-RDPO generally achieves an even higher win rate for Zephyr and Mistral while it underperforms slightly in Llama. IPO and RPO are strong baselines for the experiment with Llama as a base model. The attentive reader might notice that the antidiagonal elements do not sum to one. This is because we run two independent judgments for each pair of models. 

\subsection{Additional experiments with noisy rating information}
\label{sec:noisy}
\begin{figure*}[!h] 
\centering
\begin{tabular}{cc}
\subfloat[Zephyr-7B]{%
    \includegraphics[width=0.5\linewidth]{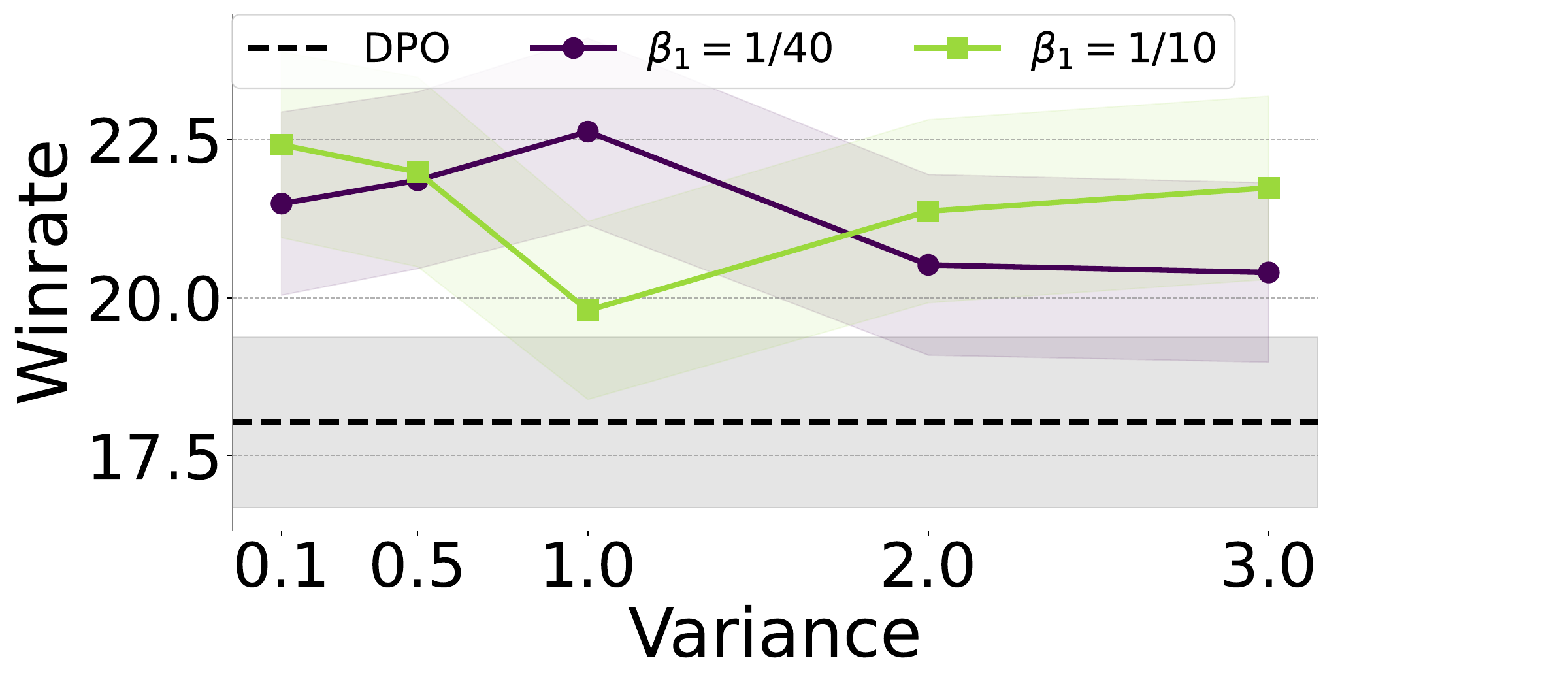}
     } &
\subfloat[Mistral-7B]{%
\includegraphics[width=0.5\linewidth]{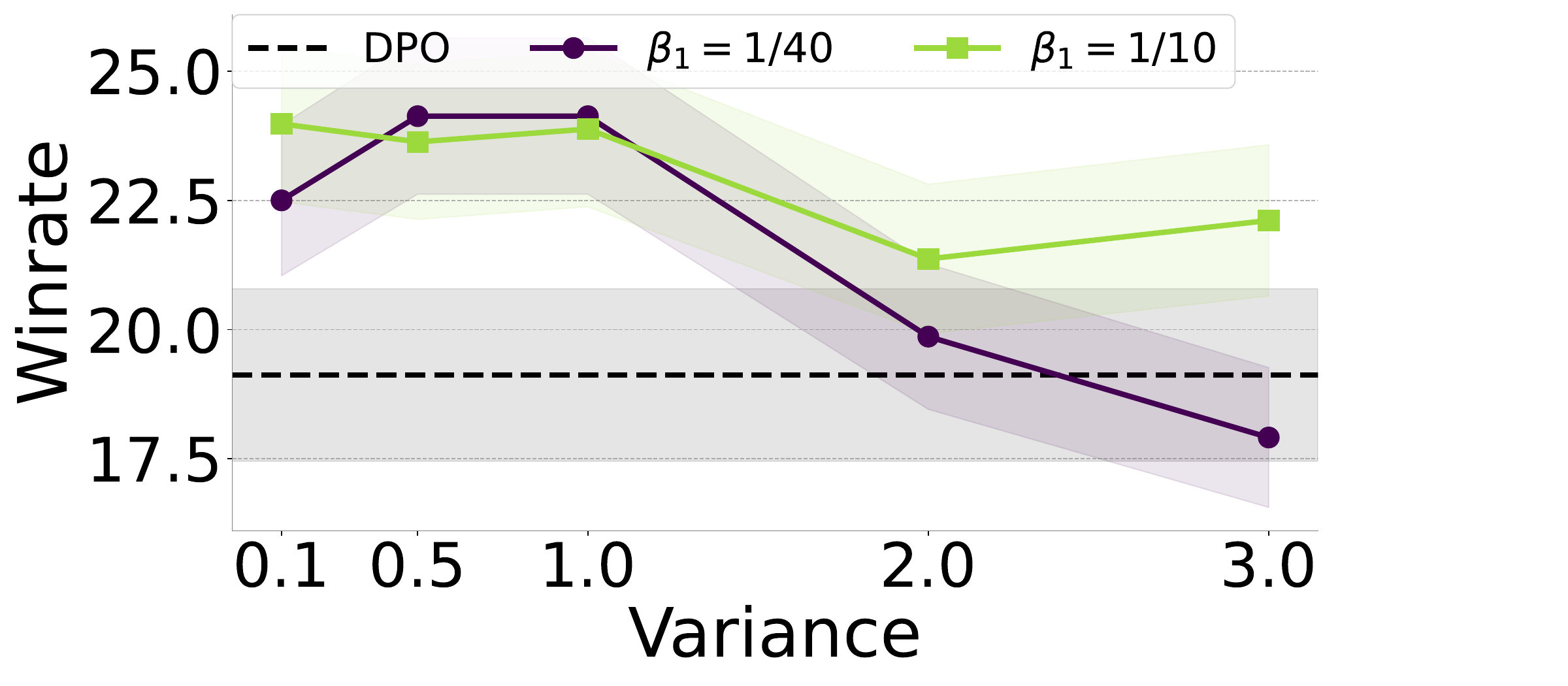}
     }
\end{tabular}
\caption{Variance ablation in Mistral-7B and Zephyr-7B.}
\label{fig:variance_zephyr_mistral}
\end{figure*}
In \Cref{fig:variance_zephyr_mistral} we report the win rate against GPT-4 as judged by Claude-Sonnet-3.5v2 for the base models Zephyr-7B and Mistral-7B. The experiment aims at monitoring the win rate as a function of the variance of the Gaussian noise injected in the \texttt{ultrafeedback} rating.

As we found using Llama-8B as base model ( see \Cref{fig:corruption_via_noise}), both choices for $\beta_1$, i.e. $\beta_1=1/10$
and $\beta_1=1/40$, perform similarly at low variance level while $\beta_1=1/10$ performs better when high variance values are considered.
This behaviour is consistent with the theoretical prediction that a higher $\beta_1$  should be considered when the rating information is inaccurate.
\subsection{Experiments with Smol-LM2 Family}
\label{sec:SmolLM2}
We experiment with the SmolLM2 family as an initial test; however, small language models can be important in some situations, such as on-device applications \cite{xu2024device}. Therefore, we include the results in the following. In particular, we report the results of the relative win rates between any pair of considered alignment methods.
\begin{figure*}[!h] 
\centering
\begin{tabular}{ccc}
\subfloat[SmolLM2-135M]{%
    \includegraphics[width=0.3\linewidth]{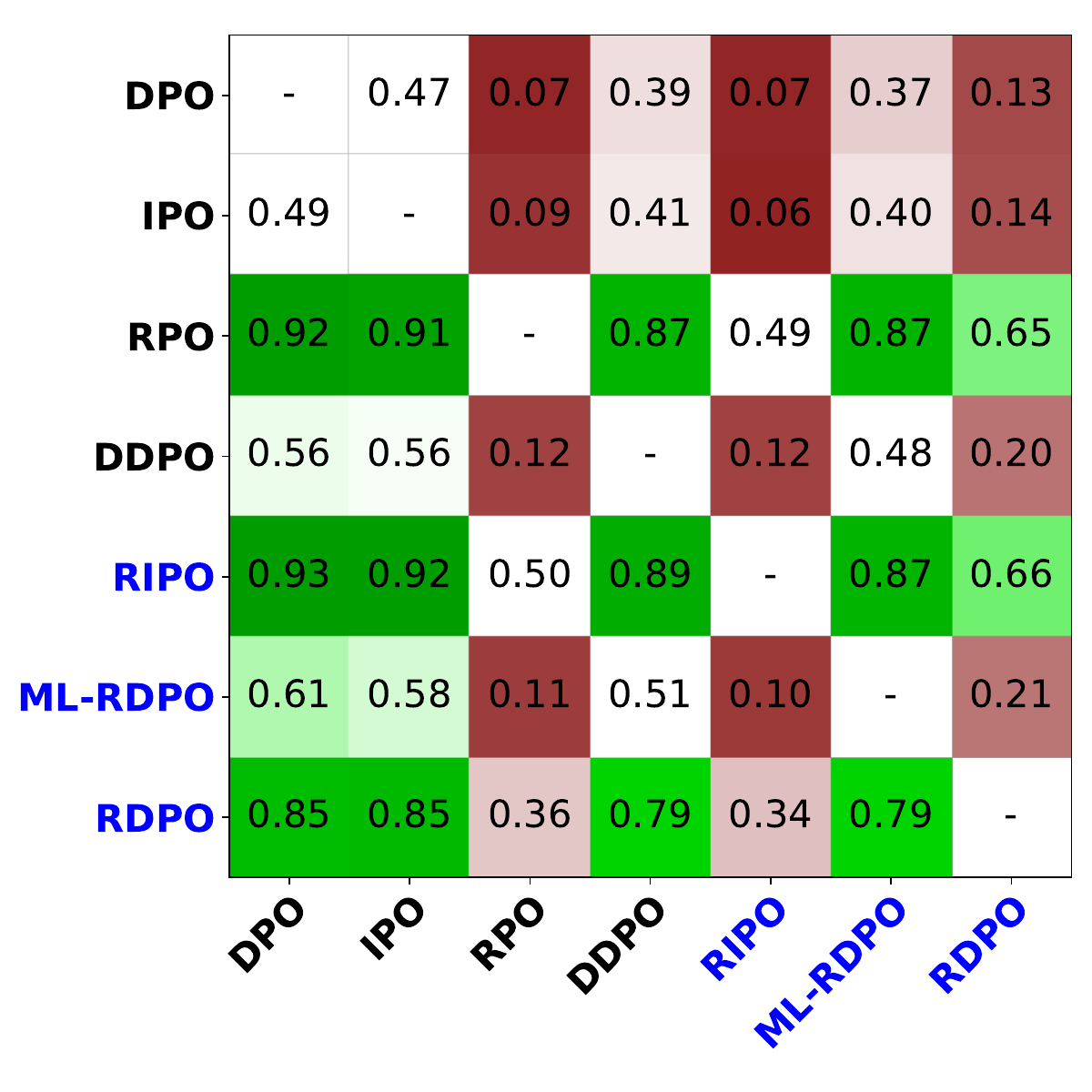}
     } &
\subfloat[SmolLM2-360M]{%
\includegraphics[width=0.3\linewidth]{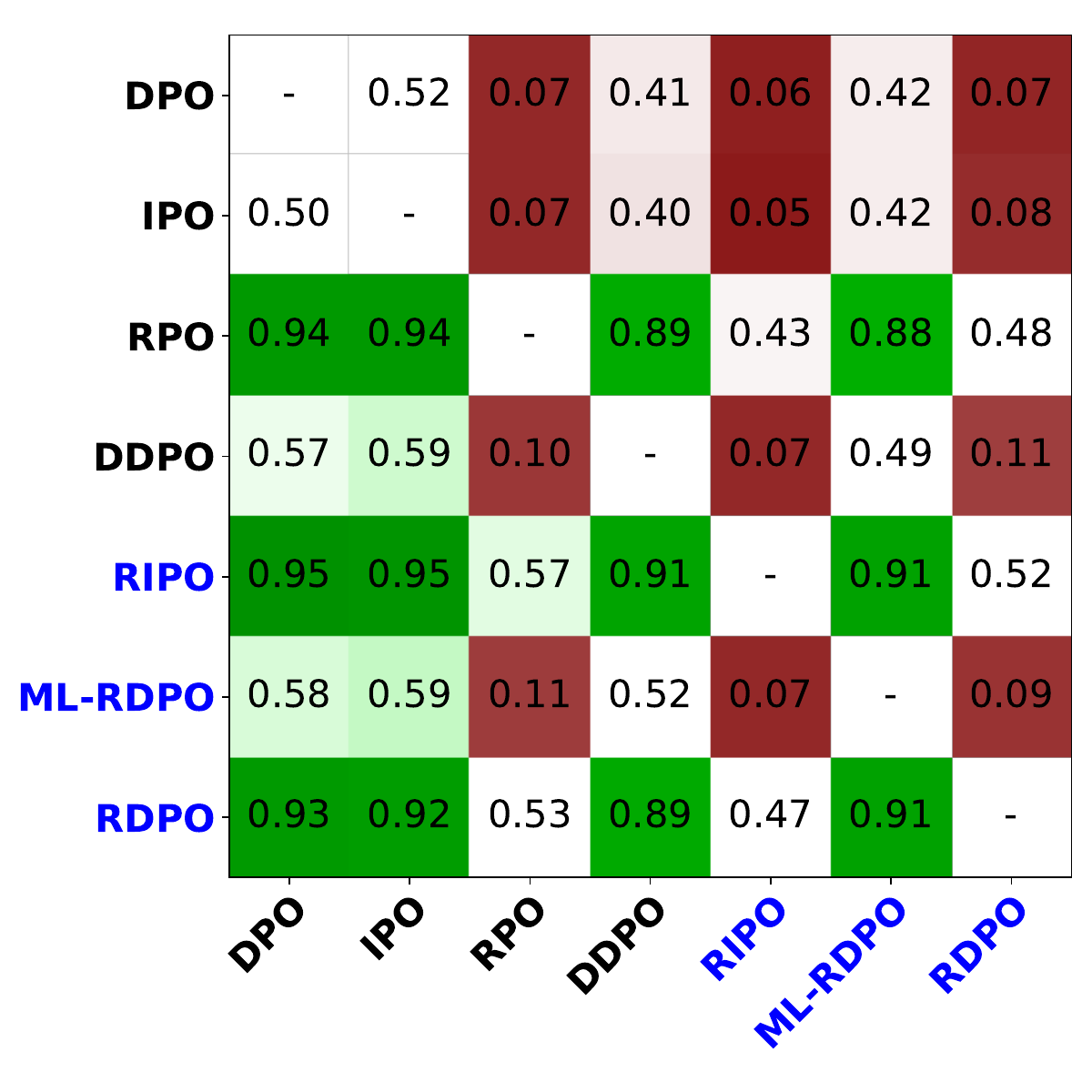}
     } &
\subfloat[SmolLM2-1.7B]{%
\includegraphics[width=0.3\linewidth]{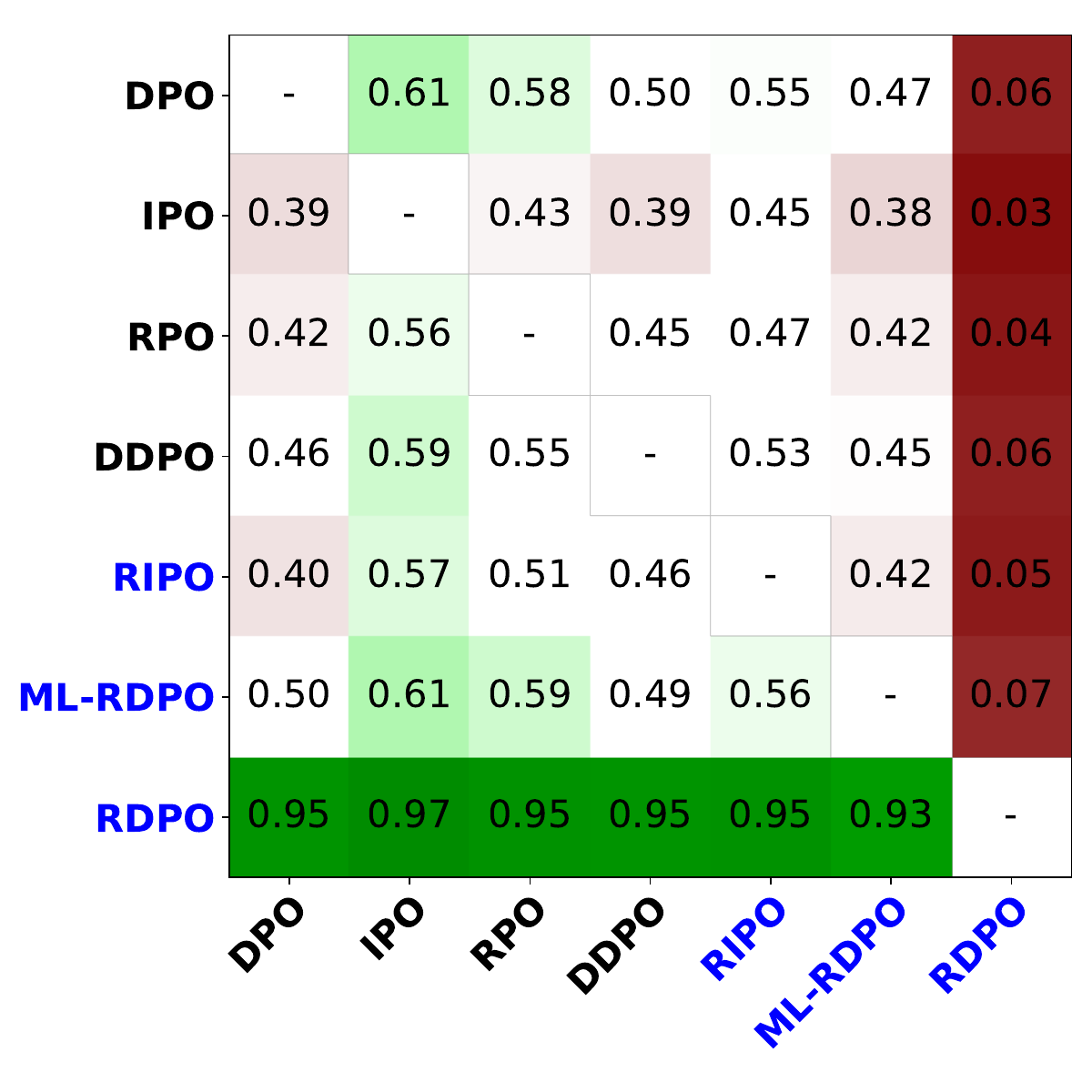}
     } 
\end{tabular}
\caption{ Model by model comparison using Llama, Mistral and Zephyr as base model.}
\label{fig:Smolbattles}
\end{figure*}
We observe that for SmolLM2 models, it is possible to achieve larger win rates, and in this case, the benefit of rating is even more evident than at 7B/8B scale. At 135M, the best methods are RPO \cite{adler2024nemotron} and our RIPO, while RDPO is the third best performing in this setting.  These three algorithms are comparable at 360M scale, while RDPO outperforms all the other methods neatly at 1.7B scale.

ML-RDPO does not perform as well in this setting, but it is always improving over both DPO and Distilled DPO as predicted by \Cref{thm:DPO+distilled_main}.

In these tables, we compare the "best" epochs for each algorithm. Best epoch here means the epoch that achieved the highest win rate against $\piref$, which is reported in \Cref{fig:Smolwin rates}.

Moreover, we highlight that we re-tuned the baselines in this setting because we observed that, for any alignment algorithms,  the hyperparameters working the best at 7/8B scale can be very different from the best ones at smaller scale.
\begin{figure}[h!] 
    \centering 
\begin{tabular}{ccc}
    \subfloat[135M]{
        \includegraphics[width=0.3\textwidth]{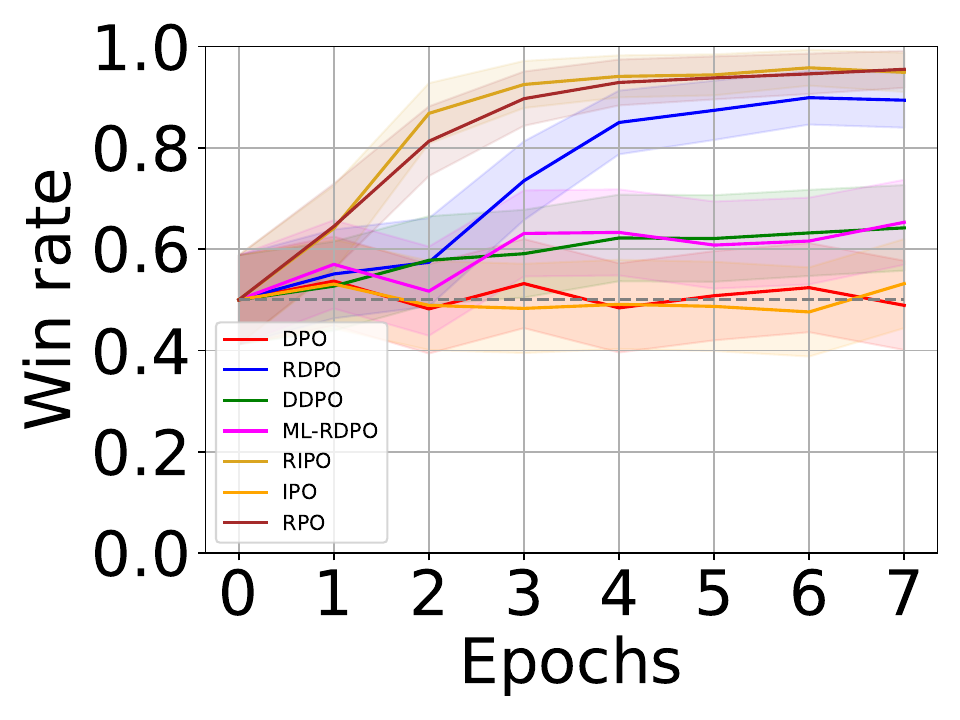}

    }
    &
    \subfloat[360M]{
        \includegraphics[width=0.3\textwidth]{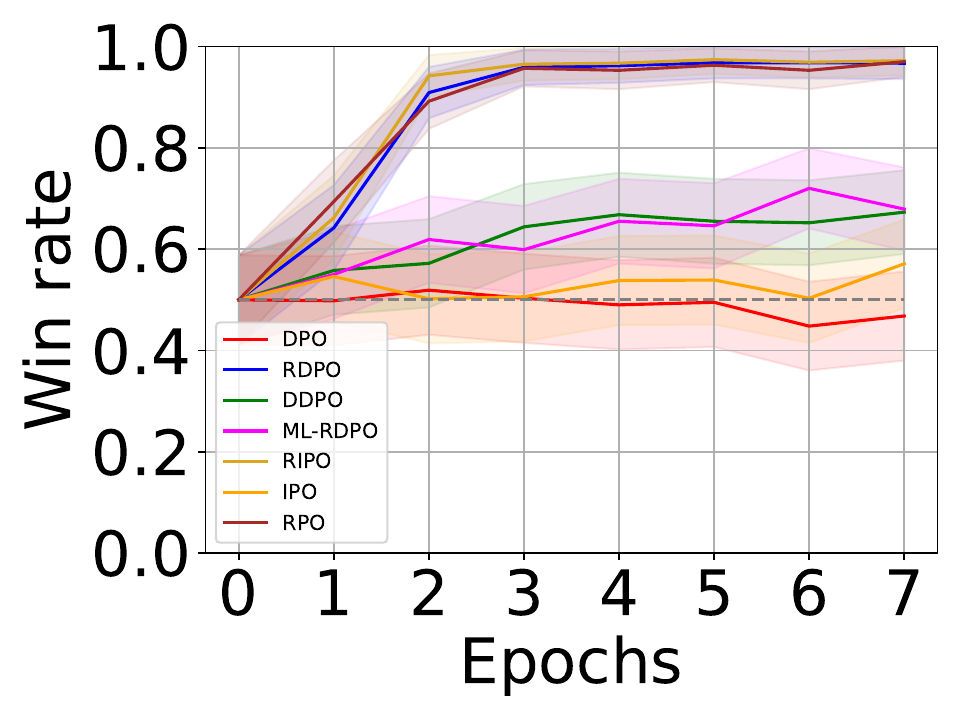}
    }
    &
    \subfloat[1.7B]{
        \includegraphics[width=0.3\textwidth]{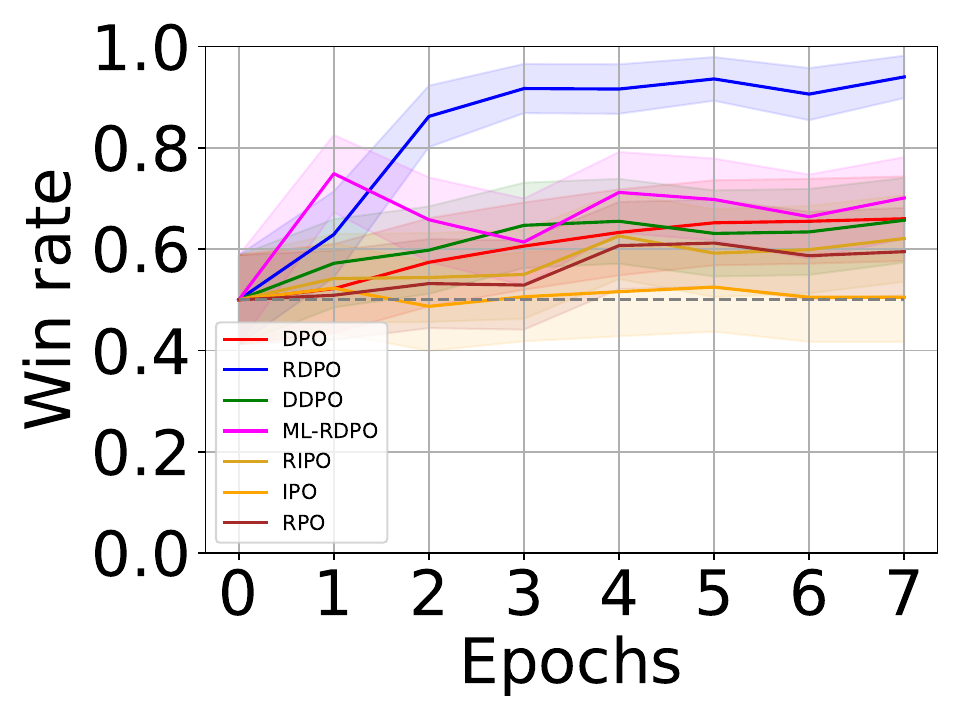}
    }
\end{tabular}
    \caption{Win rate against $\piref$ for the SmolLM2 family experiments. \label{fig:Smolwin rates}}
\end{figure}
\subsection{Ablation for $\beta_1$ and $\mathbb{V}$}
\label{app:zephyr_ablation}
For deciding the hyperparameters to use in the experiments shown in the main text, we run ML-RDPO with $5$ possible choices for $\mathbb{V}$ and RDPO with $5$ possible choices for $\beta_1$ using Zephyr-7B as base model. We select the configuration with the best-performing win rate against $\piref$ for the other experiments. The best hyperparameters found in this way are $\beta_1=1/10$ for RDPO and $\mathbb{V}=1/100$ for ML-RDPO.
We report the win rates achieved by the configuration we tried in \Cref{tab:ablation7B}
 
    \begin{table}[h!]
\caption{Zephyr-7B, Ablation for win rate against $\piref$ on Alpaca Eval \label{tab:ablation7B}}
  \centering
  \rowcolors{2}{gray!25}{white}
\begin{tabular}{lrr}
\toprule
Model & win rate & Standard Error \\
\midrule
ML-RDPO $\mathbb{V} =1$ & 2.43 & 0.539 \\
ML-RDPO $\mathbb{V} =1/10$ & 17.68 & 1.398 \\
ML-RDPO $\mathbb{V} =1/50$ & 19.07 & 1.384 \\
ML-RDPO $\mathbb{V} =1/100$ & \textbf{23.45} & 1.494 \\
ML-RDPO $\mathbb{V} =1/200$ & 19.59 & 1.399 \\
RDPO $\beta_1 = 1$ & 11.68 & 1.133 \\
RDPO $\beta_1 = 1/10$ & \textbf{23.38} & 1.491 \\
RDPO $\beta_1 = 1/40$ & 17.83 & 1.352 \\
RDPO $\beta_1 = 1/100$ & 16.04 & 1.295 \\
RDPO $\beta_1 = 1/125$ & 15.17 & 1.266 \\
Zephyr-7B-SFT-full & 15.11 & 1.262 \\
\bottomrule
\end{tabular}
\end{table}
For both algorithms, we used otherwise standard DPO choices, that is $\beta=0.1$ and learning rate $1e-6$.

Surprisingly, the best hyperparameters for the SmolLM2 family are different. In particular, in \Cref{fig:win rate_base_plot1.7B_ablation_ratings_dpo} we show that for the setting $\piref = \texttt{SmolLM2-1.7B}$ the best value for $\beta_1$ is $0.005$, which is much smaller than the values found at 7B/8B level.
Recall that a smaller $\beta_1$ translates to higher trust for the ratings. Interestingly, it seems that these experiments suggest that the model size plays a role in choosing the right level of trust. This fact is not captured by our current theory, which neglects the optimization aspects of the problem. We believe it is an interesting open question to develop a deeper understanding of this phenomenon, either empirically or theoretically.

\begin{figure}
    \centering
    \includegraphics[width=0.45\linewidth]{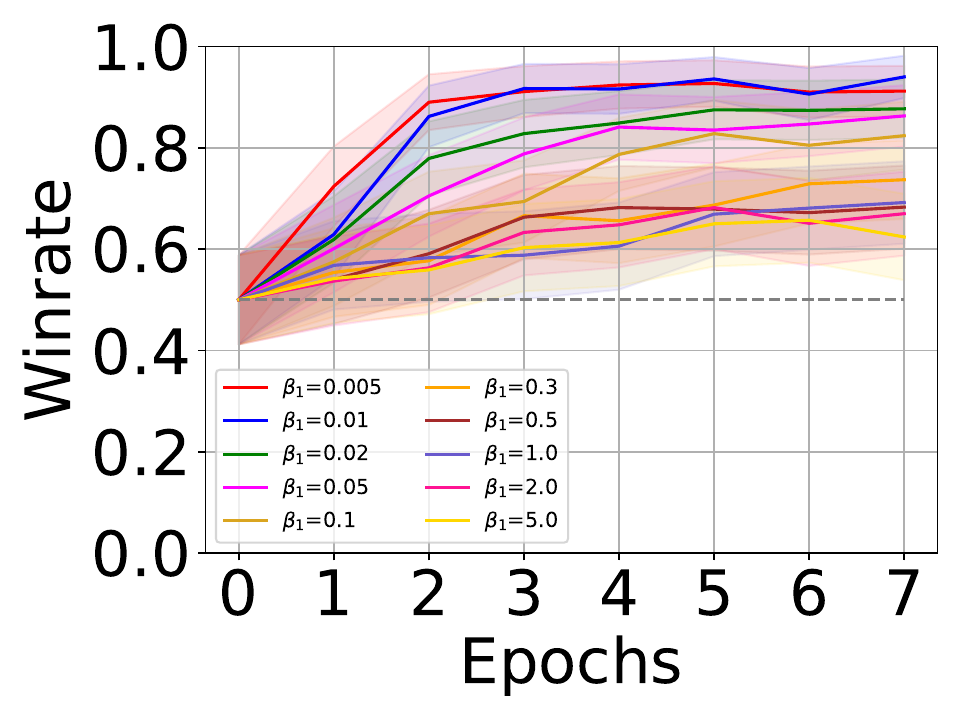}
    \caption{Win rate vs base model SmolLM2-1.7B in Alpaca-Eval for different values of $\beta_1$ used in \Cref{alg:ratingsDPO}}
    \label{fig:win rate_base_plot1.7B_ablation_ratings_dpo}
\end{figure}
\subsection{On the restriction to $\Pi'$ in \Cref{thm:ratings_dpo}}
\label{app:exp_constrain}
In this section, we investigate a practical implementation of RDPO which embodies some penalty to ensure that the output policy is in the restricted class $\Pi'$ which we recall being the subset of $\Pi$ such that $\pi \in \Pi'$ implies that $\abs{\beta \Delta_{\log \pi/\piref} - \frac{\beta}{\beta_1}\Delta_{\hat{r}}} \leq \Delta_{\max} $ where $ \Delta_{\max} := R_{\max}-R_{\min}$. To this end, we consider the following minimization problem
\begin{align*}
    \argmin_{\pi\in\Pi} &\sum^N_{i=1} - \log \sigma \br{\beta \Delta^i_{\log \pi/\piref} - \frac{\beta}{\beta_1}\Delta^i_{\hat{r}}} + \\
    &+ \lambda_1 \br{\Delta^i_{\log \pi/\piref} - \frac{\Delta^i_{\hat{r}}}{\beta_1} - \Delta_{\max}}\mathds{1}\bc{\Delta^i_{\log \pi/\piref} - \frac{\Delta^i_{\hat{r}}}{\beta_1} \geq \Delta_{\max}} \\
    &+ \lambda_2 \br{ - \Delta^i_{\log \pi/\piref} + \frac{\Delta^i_{\hat{r}}}{\beta_1} - \Delta_{\max}}\mathds{1}\bc{\Delta^i_{\log \pi/\piref} - \frac{\Delta^i_{\hat{r}}}{\beta_1} \leq - \Delta_{\max}}
\end{align*}
where $\lambda_1,\lambda_2\in\mathbb{R}^+$ are penalties hyperparameters. The larger $\lambda_1,\lambda_2$ are, the smallest the constraint violation of $\piout$ is.

As shown in \Cref{fig:RDPOpenalty}, we observe an improvement for Llama and Mistral as base model while the performance without penalty is better for $\piref$ chosen to be Zephyr-7B. However, in all cases the difference between RDPO with or without penalties is not remarkable, therefore, while the restriction to $\Pi'$ is needed for the theoretical result in \Cref{thm:ratings_dpo}, we consider it as optional in practice. In particular, it might be avoided when handling a simpler objective is desirable. 
\begin{figure}[t!] 
    \centering 
\begin{tabular}{ccc}
    \subfloat[Llama-3.1-8B]{
        \includegraphics[width=0.3\textwidth]{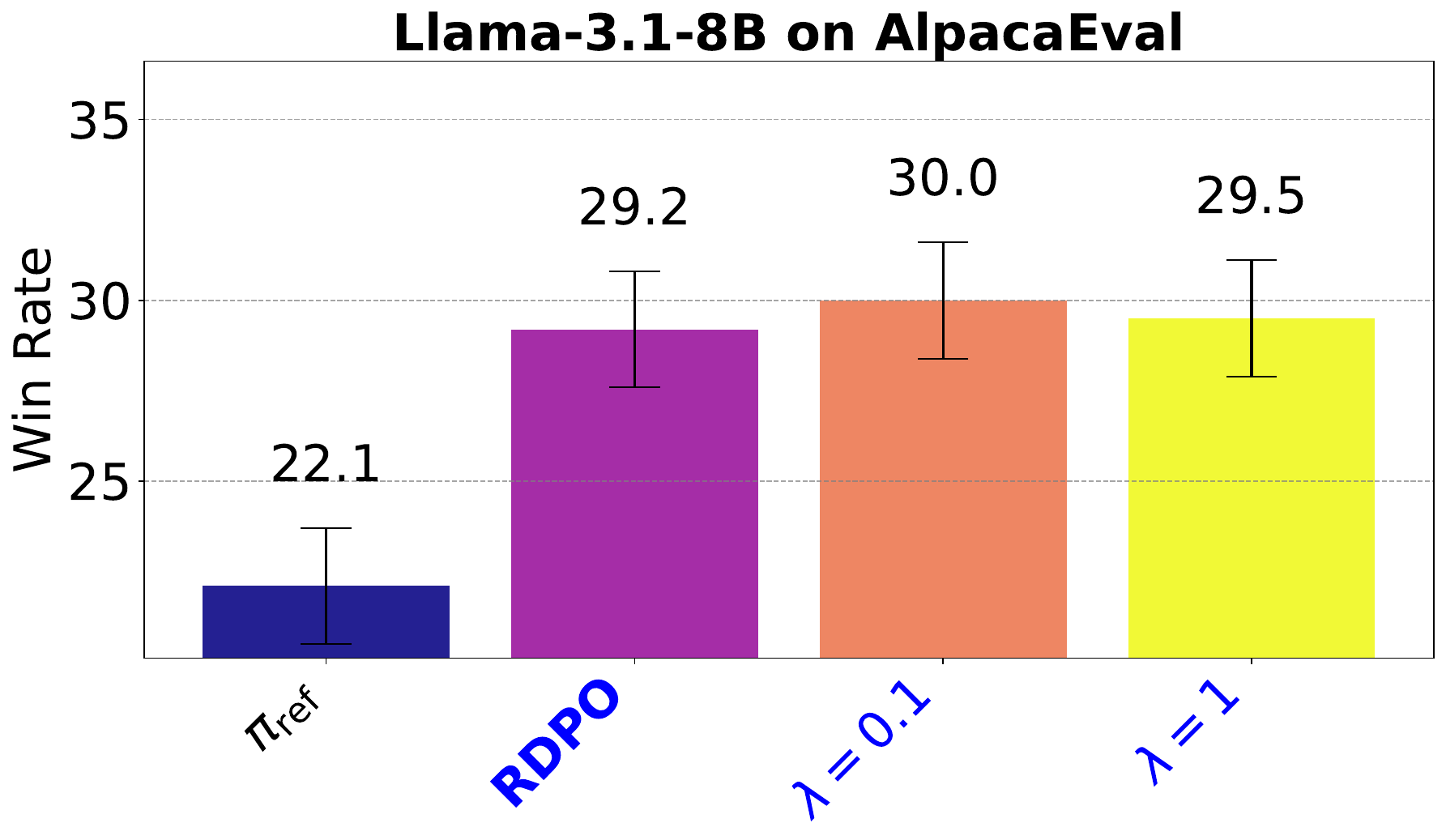}

    }
    &
    \subfloat[Mistral-7B]{
        \includegraphics[width=0.3\textwidth]{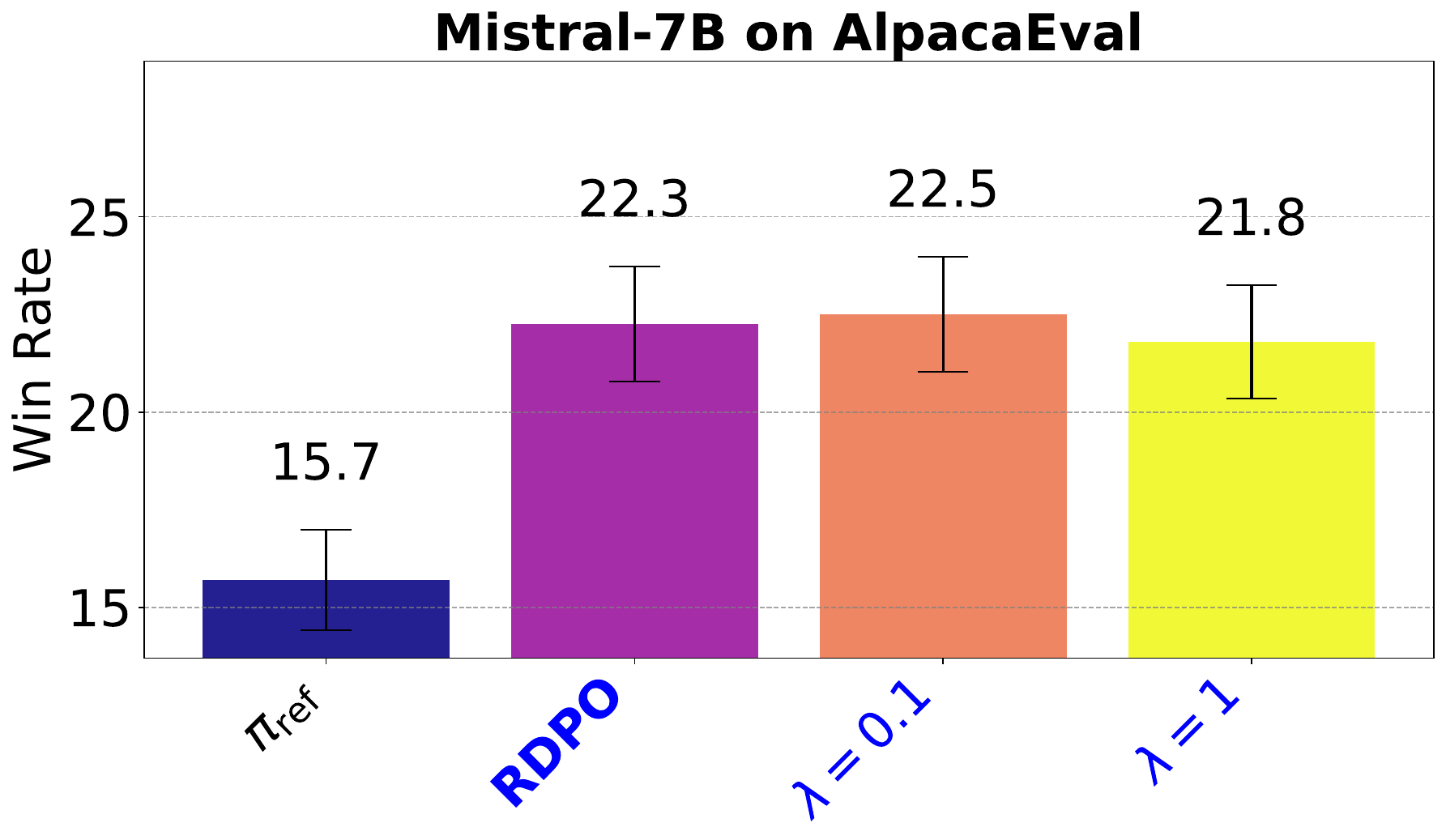}
    }
    &
    \subfloat[Zephyr-7B]{
        \includegraphics[width=0.3\textwidth]{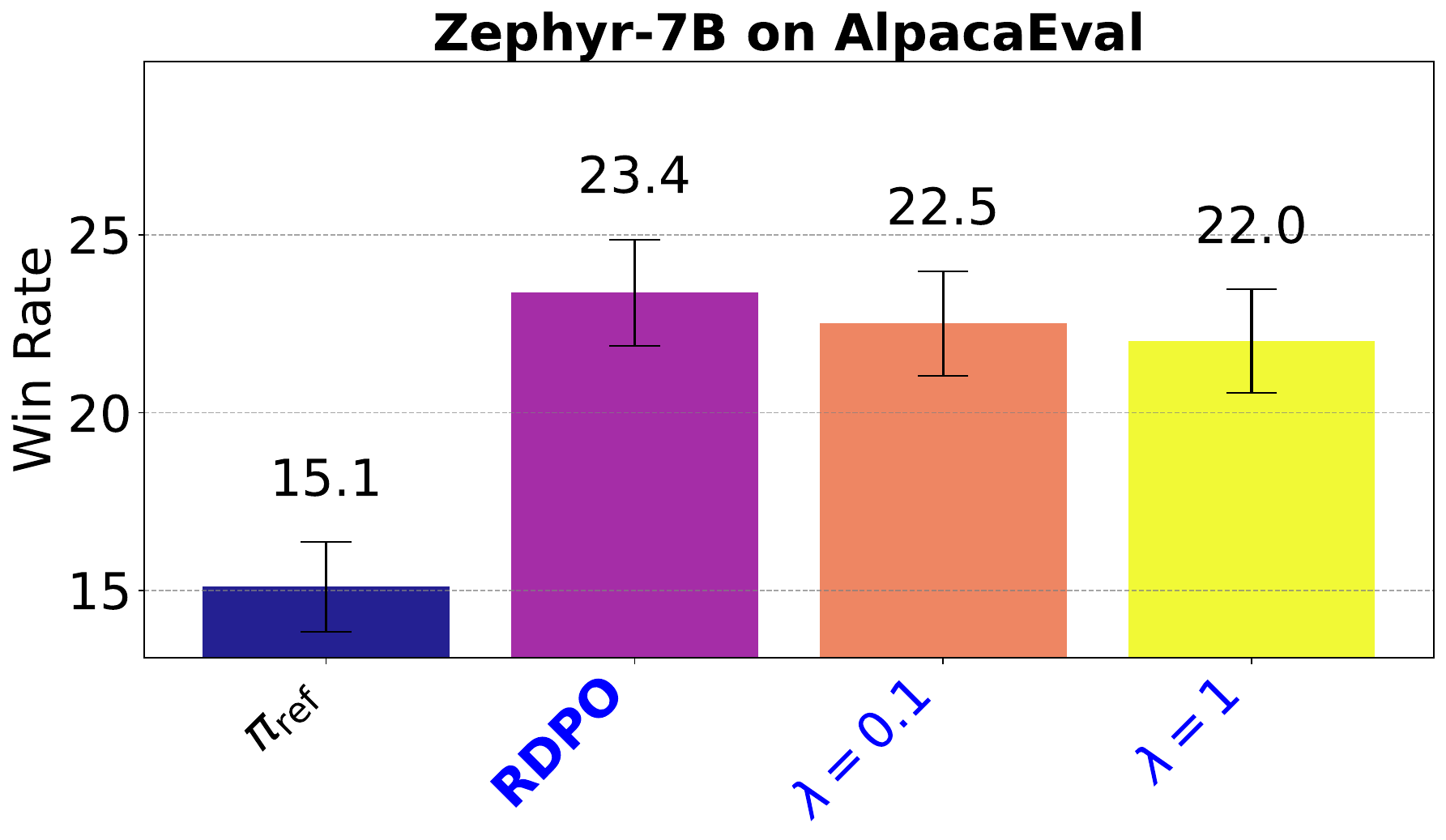}
    }
\end{tabular}
    \caption{Comparison between RDPO and RDPO with penalty parameters $\lambda_1,\lambda_2$ equals to the value of $\lambda$ reported in the plots. \label{fig:RDPOpenalty}}
\end{figure}
\end{document}